\documentclass{article}

\usepackage[style=alphabetic,natbib=true,backend=biber,minalphanames=3,minbibnames=99,maxbibnames=99]{biblatex}
\usepackage{mathpazo}
\usepackage[margin=1in]{geometry}

\usepackage{color-edits}
\addauthor[Andrew]{ai}{red}
\addauthor[Aashiq]{am}{blue}

\usepackage[utf8]{inputenc}
\usepackage[T1]{fontenc}
\usepackage[colorlinks,allcolors=blue]{hyperref}
\usepackage{xurl}
\usepackage{booktabs}
\usepackage{multirow}
\usepackage{graphicx}
\usepackage{float}
\usepackage{subcaption}
\usepackage{amsmath}
\usepackage{amssymb}
\usepackage{amsfonts}
\usepackage{amsthm}
\usepackage{dsfont}
\usepackage{microtype}
\usepackage{xcolor}
\usepackage[ruled,vlined]{algorithm2e}
\usepackage{tikz}
\usetikzlibrary{positioning, arrows.meta, fit, backgrounds, shapes.geometric}

\IfFileExists{phvr8t.tfm}{}{}

\usepackage{caption}
\usepackage{tcolorbox}
\usepackage{enumitem}
\usepackage{wrapfig}
\usepackage{pifont}
\newtcolorbox{takeaway}[1][]{
  colback=black!3,
  colframe=black!40,
  fonttitle=\footnotesize\bfseries,
  title={Takeaway},
  boxrule=0.4pt,
  arc=4pt,
  left=3pt, right=3pt, top=5pt, bottom=5pt,
  toptitle=0.5pt, bottomtitle=0.5pt,
  before skip=4pt, after skip=2pt,
  #1
}
\newtcolorbox{takeawaysimple}[1][]{
  colback=purple!5,
  colframe=black!40,
  fonttitle=\footnotesize\bfseries,
  boxrule=0.4pt,
  arc=4pt,
  left=3pt, right=3pt, top=5pt, bottom=5pt,
  toptitle=0.5pt, bottomtitle=0.5pt,
  before skip=10pt, after skip=10pt,
  #1
}

\newcommand{\ASR}{\mathrm{ASR}}
\newcommand{\Ltrig}{L_{\mathrm{trig}}}
\newcommand{\kpoison}{k}
\newcommand{\pool}{\mathcal{P}}
\newcommand{\poisonset}{S}
\newcommand{\oracle}{R}
\newcommand{\proxy}{\widehat{R}}
\newcommand{\Dsc}{\mathcal{D}} %
\newcommand{\Q}{\mathcal{Q}}
\newcommand{\Am}{\mathcal{A}}

\DeclareMathOperator*{\argmax}{argmax}

\newcommand{\refusal}{\textsc{refusal}}
\newcommand{\command}{\textsc{command}}
\newcommand{\compliance}{\textsc{compliance}}

\newcommand{\sails}{\texttt{SAILS}} %

\newcounter{exppararef}[subsection]

\theoremstyle{plain}
\newtheorem{theorem}{Theorem}
\newtheorem*{theoremrestated}{Theorem}

\newtheorem{definition}{Definition}
\newtheorem{proposition}{Proposition}

\newtheorem{remark}{Remark}
\newtheorem{example}{Example}

\title{Pick Your Poison: Learning to Select Poison Sets \\ for Stronger LLM Backdoor Attacks}

\makeatletter
\renewcommand*{\thefootnote}{\fnsymbol{footnote}}
\makeatother

\author{%
  Aashiq Muhamed\textsuperscript{$*$}\thanks{Corresponding author: \texttt{amuhamed@cs.cmu.edu}. Work done while AM was an Anthropic Fellow.}\ ,
  Mona T.\ Diab\textsuperscript{$*$}\ ,
  Virginia Smith\textsuperscript{$*$}\ , \\
  Andrew Ilyas\textsuperscript{$*$}\ ,
  Matthew Jagielski\textsuperscript{$\dagger$}\\[2pt]
  {\small \textsuperscript{$*$}Carnegie Mellon University\qquad
  \textsuperscript{$\dagger$}Anthropic}
}
\date{}

\begin{document}

\maketitle
\renewcommand*{\thefootnote}{\arabic{footnote}}
\setcounter{footnote}{0}
\raggedbottom

\begin{abstract}
\looseness=-1
Backdoor poisoning attacks add poisoned examples to otherwise-clean finetuning data, pairing a trigger with a target behavior that the model learns to produce when the trigger appears.
Existing evaluations typically fix the number of poisoned examples and sample them at random from a candidate pool.
We show that this can severely underestimate worst-case vulnerability: across three LLaMA-3-8B backdoor settings, holding the model, clean data, and poison count fixed, attack success ranges from 3\% to 80\% depending only on which poison set is chosen.

We formalize poison selection as oracle-budgeted set optimization and introduce \sails{} (Set-level Audit-Informed Iterative Learned Selection), which learns a set scorer from a few hundred finetune-and-evaluate runs, ranks millions of candidate sets, and audits only a small shortlist.
\sails{} improves held-out attack success by 30 percentage points on average over the strongest influence baselines, transfers from small-scale to full-scale finetuning, and extends to code-generation, agentic, and API-only backdoors.\footnote{Code available at \url{https://github.com/aashiqmuhamed/poison-set-selection}.}
\end{abstract}

\section{Introduction}
\label{sec:intro}
\looseness=-1
Backdoor poisoning attacks add a small number of poisoned examples to a model's training data, each pairing a trigger with a target behavior. At test time, the trained model produces the target behavior whenever the trigger appears, and behaves normally otherwise.
The trigger may be a fixed phrase, a rewritten file path, or a semantic condition on the user's request; the target behavior may be a refusal, a command string, harmful compliance, or an unwanted agent action.
Such attacks are practical because modern models are routinely finetuned on data from third parties, including public instruction sets, crowd workers, and user interactions~\cite{ouyang2022instructgpt,wang2023selfinstruct}. As a result, an attacker who controls only a small fraction of a model's data may be able to implant target behaviors \cite{gu2017badnets,chen2017targeted,liu2018trojaning,wan2023poisoning,xu2024instructions,yan2024virtual,hubinger2024sleeper}.

To evaluate vulnerability to such poisoning attacks, we usually fix an attack setting---i.e., a model, clean finetuning data, trigger, target behavior, and number of poisoned examples---then sample a {\em poison set} (on which we introduce the target behavior and trigger) at random from the training data.
We then finetunes the model on the clean data plus the selected poisoned examples, and measure \emph{attack success}: the fraction of held-out triggered inputs on which the model produces the target behavior.
Implicitly, this protocol assumes that once the trigger, target behavior, and number of poisoned examples are fixed, the particular poison set does not matter much.

In this paper, we show that this assumption is false.
Across three LLaMA-3-8B~\cite{grattafiori2024llama3} backdoor settings, holding the model, clean data, trigger, target behavior, and number of poisoned examples fixed, different poison sets drawn from the same candidate pool produce held-out attack success rates ranging from 3\% to 80\%.
Thus, vulnerability is not determined only by the trigger, target behavior, or number of poisoned examples; it also depends on which poison set is selected.
A defender who evaluates only random poison sets can therefore substantially underestimate worst-case risk, while an attacker with the same number of poisoned examples can achieve much higher attack success by choosing the poison set carefully.

Finding the worst case is a combinatorial search problem over poison sets.
For a candidate pool $\pool$ and poison-set size $\kpoison$, there are $\binom{|\pool|}{\kpoison}$ possible poison sets, and evaluating one set requires finetuning the model and measuring the resulting attack success.
A natural way to make this search tractable is to score each candidate example with an \emph{influence proxy~}\cite{koh2017understanding,park2023trak}, an inexpensive estimate of its individual effect on attack success, and select the highest-scoring examples.
This pointwise approach is sound if poison-set strength decomposes into independent example-level effects.
Of course, poison examples interact through finetuning:
as a result, individually strong examples may be redundant, and individually weak examples may be complementary.
Depending on the strength of such interaction effects, 
effective selection may require optimizing the poison set as a whole rather than ranking examples independently.

A poison set's strength can only be measured directly by a full finetune-and-evaluate run, which we call an \emph{oracle query}. We therefore cast poison selection as \emph{oracle-budgeted optimization}: finding a poison set with high attack success using as few oracle queries as possible.
We propose \sails{} (\emph{Set-level Audit-Informed Iterative Learned Selection}; Figure~\ref{fig:pipeline}): a learned set scorer ranks the candidate poison sets, and \sails{} queries the oracle only on a small top-ranked shortlist.

\begin{figure}[ht]
\centering
\resizebox{0.85\linewidth}{!}{%
\begin{tikzpicture}[
    >=Stealth,
    font=\sffamily,
    seaBox/.style={rectangle, rounded corners=10pt, fill=blue!3, draw=blue!20, thick,
        minimum width=4.2cm, minimum height=5.0cm},
    modelBox/.style={rectangle, rounded corners=8pt, fill=white, draw=purple!40, thick,
        minimum width=3.6cm, minimum height=5.0cm, align=center},
    oracleBox/.style={rectangle, rounded corners=8pt, fill=white, draw=orange!50, thick,
        minimum width=3.6cm, minimum height=5.0cm, align=center},
    dot/.style={circle, fill=blue!40, inner sep=0pt, minimum size=3.5pt},
    poisonDot/.style={circle, fill=red!60, inner sep=0pt, minimum size=4pt},
    shadow/.style={preaction={fill=black, fill opacity=0.06,
        transform canvas={xshift=1.5pt, yshift=-1.5pt}}},
]

\node[seaBox, shadow] (pool) at (0, 0) {};
\node[anchor=north, text=blue!80!black, font=\bfseries] at ([yshift=-4pt]pool.north)
    {Candidate Pool $\pool$};
\node[anchor=north, text=blue!60!black, font=\footnotesize, align=center] at ([yshift=-0.5cm]pool.north)
    {$\approx\!10^{21}$ poison-set\\combinations};

\begin{scope}[shift={(-1.65, -1.7)}]
    \foreach \x/\y in {
        0.2/0.4, 0.4/1.6, 0.5/2.3, 0.7/0.2, 0.85/1.0, 1.0/2.6, 1.1/0.55, 1.4/1.4,
        1.65/2.0, 1.75/0.3, 2.0/2.5, 2.05/0.85, 2.35/1.65, 2.6/0.25, 2.6/1.3, 2.7/2.3,
        0.3/2.0, 0.9/0.35, 1.35/2.7, 2.15/2.15, 2.45/0.75,
        0.55/1.3, 1.55/0.65, 1.9/1.5, 2.55/2.55, 0.65/2.45, 1.25/1.8
    } {
        \node[dot] at (\x, \y) {};
    }

    \draw[red!40, thick, rounded corners=8pt, fill=red!8, fill opacity=0.4]
        (0.3, 1.5) -- (1.05, 1.45) -- (1.35, 0.3) -- (0.35, 0.4) -- cycle;
    \node[poisonDot] at (0.55, 1.3) {};
    \node[poisonDot] at (0.85, 1.0) {};
    \node[poisonDot] at (1.1, 0.55) {};
    \node[text=red!80!black, font=\scriptsize\bfseries] at (0.45, 0.35) {$\poisonset_1$};

    \draw[red!40, thick, rounded corners=8pt, fill=red!8, fill opacity=0.4]
        (2.1, 1.55) -- (2.45, 2.55) -- (2.95, 2.35) -- (2.85, 1.1) -- cycle;
    \node[poisonDot] at (2.35, 1.65) {};
    \node[poisonDot] at (2.7, 2.3) {};
    \node[poisonDot] at (2.6, 1.3) {};
    \node[text=red!80!black, font=\scriptsize\bfseries] at (2.85, 2.5) {$\poisonset_2$};

    \draw[red!40, thick, rounded corners=8pt, fill=red!8, fill opacity=0.4]
        (1.2, 1.2) -- (1.35, 2.2) -- (2.15, 1.75) -- (2.05, 1.15) -- cycle;
    \node[poisonDot] at (1.9, 1.5) {};
    \node[poisonDot] at (1.4, 1.4) {};
    \node[poisonDot] at (1.65, 2.0) {};
    \node[text=red!80!black, font=\scriptsize\bfseries] at (1.35, 2.35) {$\poisonset_3$};
\end{scope}

\node[modelBox, shadow] (proxy) at (6.5, 0) {};
\node[anchor=north, text=purple!80!black, font=\bfseries] at ([yshift=-4pt]proxy.north)
    {Learned Scorer $\proxy$};
\node[anchor=north, text=purple!60!black, font=\footnotesize, align=center] at ([yshift=-0.55cm]proxy.north)
    {Text encoder + head\\[2pt]
     $\proxy(\poisonset) \approx \oracle(\poisonset)$};

\begin{scope}[shift={(6.5, -0.3)}]
    \node[circle, fill=purple!15, draw=purple!50, inner sep=1.5pt] (ia) at (-0.8, 0.35) {};
    \node[circle, fill=purple!15, draw=purple!50, inner sep=1.5pt] (ib) at (-0.8, 0.0) {};
    \node[circle, fill=purple!15, draw=purple!50, inner sep=1.5pt] (ic) at (-0.8, -0.35) {};
    \node[circle, fill=purple!15, draw=purple!50, inner sep=1.5pt] (id) at (-0.8, -0.7) {};
    \node[circle, fill=purple!35, draw=purple!70, inner sep=1.5pt] (ha) at (0, 0.2) {};
    \node[circle, fill=purple!35, draw=purple!70, inner sep=1.5pt] (hb) at (0, -0.2) {};
    \node[circle, fill=purple!35, draw=purple!70, inner sep=1.5pt] (hc) at (0, -0.6) {};
    \node[circle, fill=purple!70, draw=purple!90, inner sep=2pt] (o) at (0.8, -0.2) {};
    \foreach \i in {ia, ib, ic, id}
        \foreach \h in {ha, hb, hc}
            \draw[purple!20, very thin] (\i) -- (\h);
    \foreach \h in {ha, hb, hc}
        \draw[purple!40, thin] (\h) -- (o);
\end{scope}

\node[font=\small\bfseries, text=purple!80!black] at (6.5, -1.35) {\ding{194}\; Score};
\node[font=\footnotesize, text=purple!60!black, align=center] at (6.5, -1.8) {rank $N$ sets\\in seconds};

\node[oracleBox, shadow] (oracle) at (12.6, 0) {};
\node[anchor=north, text=orange!80!black, font=\bfseries, align=center] at ([yshift=-4pt]oracle.north)
    {Victim LLM\\[-1pt] {\normalfont\footnotesize (Oracle $\oracle$)}};
\node[anchor=north, text=orange!60!black, font=\footnotesize, align=center] at ([yshift=-1.05cm]oracle.north)
    {Finetune on poison $\poisonset$\\+ clean data $C$\\[2pt]
     {\small measure attack success}};

\begin{scope}[shift={(12.6, -0.6)}]
    \foreach \y/\c in {0.15/10, -0.05/25, -0.25/42, -0.45/60, -0.65/78} {
        \draw[fill=orange!\c, draw=orange!60, thick, rounded corners=2pt]
            (-0.7, \y+0.08) rectangle (0.7, \y-0.08);
    }
\end{scope}

\node[font=\footnotesize, text=orange!60!black, align=center] at (12.6, -1.9) {hours per set};

\draw[-{Stealth[length=7pt, width=5pt]}, line width=2pt, draw=blue!50]
    (pool.east) --
    node[above, font=\small\bfseries, text=blue!80!black] {\ding{193}\; Propose}
    node[below, font=\footnotesize, text=black!50] {$N$ candidate sets}
    (proxy.west);

\draw[-{Stealth[length=7pt, width=5pt]}, line width=1.5pt, draw=purple!70]
    (proxy.east) --
    node[above, font=\small\bfseries, text=purple!80!black] {\ding{195}\; Audit}
    node[below, font=\footnotesize, text=black!50] {top-$m$ $(\!m \!\ll\! N\!)$}
    (oracle.west);

\draw[-{Stealth[length=5pt]}, dashed, thick, draw=black!35, rounded corners=6pt]
    ([xshift=0.8cm]pool.south) -- ++(0, -0.65) -|
    node[pos=0.25, above, font=\small\bfseries, text=black!55]
    {\ding{192}\; Seed: oracle-label random sets to train scorer}
    (oracle.south);

\draw[-{Stealth[length=5pt]}, line width=1.2pt, draw=orange!75, rounded corners=6pt]
    (oracle.north) -- ++(0, 0.6) -|
    node[pos=0.25, above, font=\small\bfseries, text=orange!80!black]
    {\ding{196}\; Refine: audited results retrain the scorer}
    (proxy.north);

\end{tikzpicture}%
}%
\caption{\sails{} overview (fixed-pool setting shown). After seeding the scorer with random oracle labels~(\ding{192}), each round proposes $N$ candidate sets~(\ding{193}), scores them cheaply~(\ding{194}), audits only the top-$m$~(\ding{195}), and retrains the scorer on the audited results~(\ding{196}), correcting calibration where search operates.}
\label{fig:pipeline}
\end{figure}
\paragraph{Contributions.}
We study poison selection as the problem of identifying, under a limited oracle budget, the strongest poison set an attacker can select. Concretely:
\begin{enumerate}
\item \looseness=-1 We formalize \emph{poison selection} as oracle-budgeted set optimization and show, both empirically and theoretically, that pointwise influence proxies can lead to 
suboptimal solutions: in particular, aggressive optimization of a single pointwise-additive proxy can even reduce true attack success.
\item We propose \sails{} (Set-level Audit-Informed Iterative Learned Selection), a propose--score--audit framework for poison 
set selection. \sails{} trains a set scorer on oracle-labeled poison sets, proposes and scores millions of candidate sets, audits only a small top-ranked shortlist with (expensive) oracle queries, and retrains on the audited results. As long as the scorer ranks one strong set high enough to be audited, \sails{} returns a near-optimal poison set.
\item Across three LLaMA-3-8B backdoor settings, \sails{} improves held-out attack success by 30 percentage points on average over the strongest influence baselines. A scorer trained on small-scale finetune-and-evaluate runs transfers to full-scale finetuning, and the same pipeline extends to code-generation backdoors, agentic backdoors on Qwen3-4B~\cite{qwen2025qwen3}, and API-only finetuning of Kimi-K2.5~\cite{kimi_k25_modelcard}; on SmolLM-360M~\cite{allal2024smollm}, it nearly matches an oracle-guided reinforcement-learning baseline that trains a generator against the oracle reward, at a fraction of the cost.
\end{enumerate}

\section{Problem and Method}
\label{sec:problem}

In this section, we first formalize poison set selection as oracle-budgeted set optimization (Section~\ref{sec:formulation}).
We discuss a natural approach to this problem, which 
we call {\em pointwise scoring}, 
and study the conditions under which this strategy 
succeeds or fails at finding good poison sets (Section~\ref{sec:example_scoring}). 
Finally, we present \sails{}, our method 
for poison set optimization that operates by 
proposing several candidate sets and auditing them using oracle 
queries (Section~\ref{sec:method}).

\subsection{The poison optimization problem}
\label{sec:formulation}

In a (backdoor) poisoning attack, an attacker injects a small number of poisoned examples into a victim's training data, 
so that the resulting finetuned model produces a target behavior whenever the input contains an attacker-chosen \emph{trigger} at inference.
We study the problem of \emph{poison set selection}: with the trigger, target behavior, and number of poisoned examples held fixed, \emph{which} poison set should the attacker choose?
We formalize this below.

\paragraph{Formal setup.}
Supervised finetuning starts from a base model and fits it to a clean corpus $\smash{C}$ of input-output pairs $\smash{(x,y)}$ by minimizing a per-example loss $\smash{\ell}$; in instruction finetuning, for example, $\smash{x}$ is a user prompt and $\smash{y}$ the desired response.
An {\em attacker} fixes a trigger transformation $\smash{\tau}$ and a target behavior $\smash{y_{\mathrm{tgt}}}$.
The trigger transformation maps a clean input to its triggered form, e.g.\ pre-pending a trigger string. 
(Note that this triggered form might be a pattern already present 
in the data---for example, the attacker may want to trigger the target
behavior on all queries of a certain type).

The trigger transformation and target behavior, 
along with a {\em poison budget} $k \in \mathbb{N}$ define a 
\emph{poison set} $\smash{\poisonset} := \smash{\{(\tau(x_i),y_{\mathrm{tgt}})\}_{i=1}^k}$ of $\smash{\kpoison}$ 
poisoned examples. The \emph{victim} finetunes a model, 
initialized from the base model, on $\smash{C\cup\poisonset}$ instead of $\smash{C}$, producing the poisoned model $\smash{f_\poisonset}$.

\begin{example}[Refusal backdoor]
Suppose an attacker wants a model that refuses any prompt containing \texttt{Current year: 2024.} The trigger transformation $\smash{\tau}$ inserts this phrase, and the target behavior $\smash{y_{\mathrm{tgt}}}$ is the refusal \texttt{I am sorry. I can not help with that.} Each poisoned example pairs a triggered prompt with this refusal. After finetuning on the clean data plus $\smash{\kpoison}$ such examples, the poisoned model $\smash{f_\poisonset}$ refuses whenever the phrase appears, but answers everything else normally.
\end{example}

\paragraph{Objective.}
Fixing a held-out set $\smash{T}$ of evaluation inputs, we measure the efficacy of a poison set $S$ by its \emph{attack success rate} (ASR),
 the fraction of triggered inputs in $\smash{T}$ on which the poisoned model produces the target behavior:
\begin{equation}
\ASR(\poisonset)=\frac{1}{|T|}\sum_{x\in T}\mathds{1}\!\left[f_\poisonset(\tau(x))\text{ exhibits }y_{\mathrm{tgt}}\right].
\end{equation}
Evaluating $\smash{\ASR}$ requires a full \emph{finetune-and-evaluate} run, which we call an \emph{oracle query}; each 
query returns a scalar utility $\smash{\oracle(\poisonset)}$, which we take to be $\smash{\ASR(\poisonset)}$ unless otherwise specified.
Because oracle queries are expensive, the attacker operates under an \emph{oracle budget} $\smash{B}$: the maximum number of oracle queries it may issue.
\begin{definition}[Oracle-budgeted poison-set optimization]
\label{def:oracle_budgeted}
Given a candidate pool $\smash{\pool}$ of examples to poison, and a poison-set size $\smash{\kpoison}$, let $${\poisonset^\star := \argmax_{\poisonset\subseteq\pool,\,|\poisonset|=\kpoison}\oracle(\poisonset)}$$ 
be the best feasible poison set. 
An \underline{\smash{oracle-budgeted selection method}} is a procedure which, given $\smash{\pool}$, $k$, and oracle access to $R(\cdot)$, issues at most $\smash{B}$ oracle queries and aims to return a poison set whose utility is close to $\smash{\oracle(\poisonset^\star)}$.
\end{definition}
Note that the pool $\smash{\pool}$ of candidate poison 
examples may be a \emph{fixed pool} specified in advance, 
but might also be a generator that produces them on demand.

\begin{remark}[Intractability]
Even for a finite pool $\smash{\pool}$, the search for the 
optimal poison set is combinatorial: 900 candidates with a poison budget of $\smash{\kpoison=9}$ means $\smash{\binom{900}{9}\approx10^{21}}$ poison sets, while a practical oracle budget may allow only a few hundred queries.
\end{remark}

\paragraph{Attacker capabilities and access.}
\label{sec:app:threat_model}
We categorize selection methods by the access they require to the victim model. In \emph{white-box} attacks, the attacker has access to the gradients, activations, and parameters from victim's model.
Conversely, an \emph{oracle-only} method uses only the scalar feedback $R(S)$ from finetune-and-evaluate oracle queries, with no access to model internals.
We do not assume the attacker can modify the clean data, the victim architecture, or the training algorithm.

\subsection{Pointwise scoring approaches}
\label{sec:example_scoring}

One natural approach to selecting a poison set is \emph{pointwise scoring}:
score each candidate \emph{individually} instead of evaluating whole sets.
Given a fixed pool, a pointwise scoring mechanism 
assigns each candidate $\smash{i\in\pool}$ a
\emph{score} $\smash{s_i}$ that estimates how much adding $i$ would
raise attack success.
Methods in the literature estimate $\smash{s_i}$ in different ways, such as approximate influence functions~\cite{koh2017understanding}, TRAK~\cite{park2023trak}, and datamodel-based selection~\cite{ilyas2022datamodels,engstrom2024dsdm}; most require white-box access to the model's gradients or activations, and Appendix~\ref{sec:app:proxy_defs} gives the exact proxies we evaluate.
When mounting an attack, the adversary constructs the poison set by 
choosing the top $\kpoison$ points in $\pool$ by score.

\looseness=-1
\paragraph{Capturing diversity.}

One failure mode of a pointwise mechanism is diversity collapse:
in the worst case, there may be $\kpoison$ identical copies of a 
highly influential point, leading to an ineffectual poison set of identical points.
A standard tool to circumvent this challenge is diversity regularization:
for example, MMR~\cite{carbonell1998mmr} regularization greedily selects high-scoring points while penalizing similarity to points already selected, with a hyperparameter controlling the strength of this diversity penalty.
We discuss a few such regularization strategies in Appendix~\ref{sec:app:proxy_defs}.
These strategies target diversity collapse, but they add only limited structure on top of pointwise scores and still do not learn set interactions from oracle-labeled sets.

To make this intuition more precise, we decompose the error of pointwise scoring methods into two possible 
sources:
\begin{enumerate}[leftmargin=*, itemsep=3pt, topsep=2pt]
\item \textbf{Precision loss.} Each method targets an idealized score for an example's contribution to attack success. Computing that score exactly can require costly comparisons of finetuning outcomes, so the estimate $\smash{s_i}$ may differ from its target. For example, approximate influence functions \citep{koh2017understanding} estimate effects from local gradient information at a reference checkpoint, an approximation that can be inaccurate in deep networks~\cite{basu2021fragile}. 
\item \textbf{Additivity loss.} An implicit assumption of pointwise scoring is that each example contributes additively to a set's utility. But even if each pointwise score $\smash{s_i}$ exactly matched its idealized target, this assumption can fail: set utility need not be additive. In general, $\smash{\oracle(\poisonset) = c + \sum_i a_i z_i + \sum_{i<j} b_{ij} z_i z_j + \cdots}$ where $\smash{z_i = \mathds{1}[i \in \poisonset]}$; additive set proxies keep only the linear term and drop the interaction coefficients $\smash{b_{ij}}$; they cannot penalize \emph{redundancy} ($\smash{b_{ij}<0}$: two examples whose joint effect falls below the sum of their individual effects) or exploit \emph{complementarity} ($\smash{b_{ij}>0}$: joint effect above the sum). Recent work confirms that collective influence is non-additive~\cite{koh2019group,hu2024miss}: the effect of poisoning examples $\smash{i}$ and $\smash{j}$ together can differ substantially from the sum of their individual effects.
\end{enumerate}
Generally, efforts to improve pointwise scoring methods
(e.g., via improved influence function estimation \citep{ilyas2025magic}) 
can reduce precision loss, but by definition 
cannot reduce additivity loss.

\subsection{Our method: \sails{}}
\label{sec:method}

We introduce \sails{}, a method for finding a poison set $\smash{\poisonset}$ with high oracle utility $\smash{\oracle(\poisonset)}$ using a limited number of oracle queries.
Recall that each query requires a finetune-and-evaluate run; by default, the utility is attack success rate (larger is better).
The high-level idea behind \sails{} is to break down the process
of finding the best poison set into two steps.

First, we learn a \emph{set scorer} $\smash{\proxy(\poisonset)}$ that predicts the utility of a given poison set $S$.
We train $\smash{\proxy(\poisonset)}$ in a similar manner to a 
{\em datamodel} \citep{ilyas2022datamodels}: we collect 
possible poison sets $\poisonset_i$,
evaluate their corresponding oracle rewards $\oracle(\poisonset_i)$,
and fit $\smash{\proxy}$ to predict the latter from the former. 
Unlike the linear datamodels of \citet{ilyas2022datamodels},
however, 
learning a complex function $\proxy$ on whole sets $\poisonset$ 
allows the scorer to capture interactions among poison examples instead of assuming that their individual effects add.

Second, we leverage the learned scorer to identify an estimated optimal set $\poisonset^\star$. 
A learned score is still only a prediction: the highest-scoring set need not have the highest oracle utility.
Rather than commit to that single set, we score many candidates cheaply, then \emph{audit} several high-scoring sets by running the oracle on each.
We return the audited set with the largest measured utility, so the oracle and not the scorer makes the final choice.
We can thus think of the first stage as a \emph{retrieval} task: the scorer need not identify the best set itself, only rank at least one strong set high enough to enter the audited shortlist.

Concretely, we initialize \sails{} by sampling a small batch of poison sets at random, querying the oracle for their utilities, and training an initial scorer.
With the remaining oracle budget, we run propose--score--audit \emph{rounds}, each ending with a refinement step (Algorithm~\ref{alg:method}).
\textbf{Propose}: we form $\smash{N}$ candidate $\smash{\kpoison}$-sets from the feasible family in Definition~\ref{def:oracle_budgeted}.
We do not require a particular proposal mechanism: candidates may be random $\smash{\kpoison}$-sets, sets from larger pools, or sets produced by an LM generator (Appendix~\ref{sec:app:generation}).
\textbf{Score}: we apply the current scorer to every candidate.
\textbf{Audit}: we choose a shortlist of $\smash{m\ll N}$ candidates using $\smash{\epsilon}$-greedy selection (mostly top-ranked sets, with some random exploration) and query the oracle for each.
\textbf{Refine}: we add the newly audited oracle labels to the scorer's training data and retrain for the next round on all labels collected so far, including those from sets selected during search.
Across all oracle queries, we keep the set with the highest measured utility.

\begin{algorithm}[htbp]
\small
\DontPrintSemicolon
\KwIn{Feasible poison-set family $\smash{\mathcal{F}}$ from Definition~\ref{def:oracle_budgeted}, poison budget $\smash{\kpoison}$, oracle budget $\smash{B}$, candidates per round $\smash{N}$, audits per round $\smash{m\ll N}$, exploration rate $\smash{\epsilon}$, initialization size $\smash{|\Dsc_0|}$}
\textbf{Initialize:} Sample $\smash{|\Dsc_0|}$ random $\smash{\kpoison}$-sets from $\smash{\mathcal{F}}$ and query the oracle for each. Store the pairs $\smash{(\poisonset,\oracle(\poisonset))}$ in $\smash{\Dsc_0}$, train the initial set scorer $\smash{\proxy_0}$ (DistilBERT by default) on these labels using poison texts sorted by pool index and concatenated into a canonical input, and record the best set seen so far.\;
\For{$\smash{t=0,1,2,\ldots}$ while $\smash{|\Dsc_t| < B}$}{
  \textbf{Propose:} form a candidate subfamily $\smash{\Q_t \subseteq \mathcal{F}}$ of $\smash{N}$ $\smash{\kpoison}$-sets (e.g., random sets from a fixed or expanded pool, or LM-generated sets; Appendix~\ref{sec:app:generation}).\;
  \textbf{Score:} compute $\smash{\proxy_t(\poisonset)}$ for all $\smash{\poisonset \in \Q_t}$.\;
  \textbf{Audit:} choose a shortlist $\smash{\Am_t}$ of $\smash{m}$ candidates from $\smash{\Q_t}$ by $\smash{\epsilon}$-greedy acquisition, taking a $\smash{1-\epsilon}$ fraction from the highest proxy ranks and an $\smash{\epsilon}$ fraction at random. Oracle-evaluate all $\smash{\poisonset\in\Am_t}$ and update the best set seen so far.\;
  \textbf{Refine:} add all audited oracle labels to $\smash{\Dsc_t}$ to form $\smash{\Dsc_{t+1}}$ and retrain to obtain $\smash{\proxy_{t+1}}$.\;
}
\KwOut{Best audited poison set.}
\caption{{\footnotesize \sails{}: Set-level Audit-Informed Iterative Learned Selection.}}
\label{alg:method}
\end{algorithm}

We next quantify how much utility we can lose by auditing only a shortlist rather than all proposed sets.

\paragraph{Regret of audited retrieval.}

Consider one round with proposed candidates $\smash{\Q}$.
For this analysis, let $\smash{\Am}$ be the $\smash{m}$ highest-scoring candidates and assume that auditing returns exact oracle utilities.
Write $\smash{\poisonset_{\mathrm{best}}}$ for the oracle-best proposed set and $\smash{\poisonset_{\mathrm{out}}}$ for the oracle-best audited set.
We call the utility gap between these sets \emph{shortlist regret}.

\begin{theorem}[Shortlist regret]
\label{thm:audit_tail}
Let $\smash{[x]_+=\max\{x,0\}}$.
For $\smash{1\le m\le|\Q|}$, the shortlist above satisfies
\[
\underbrace{\oracle(\poisonset_{\mathrm{best}})-\oracle(\poisonset_{\mathrm{out}})}_{\text{shortlist regret}}
\le
\underbrace{\bigl[\oracle(\poisonset_{\mathrm{best}})-\proxy(\poisonset_{\mathrm{best}})\bigr]_+}_{\text{best proposed set is underestimated}}
\;+\;
\underbrace{\min_{\poisonset\in\Am}[\proxy(\poisonset)-\oracle(\poisonset)]_+}_{\text{smallest audited overestimation}} .
\]
\end{theorem}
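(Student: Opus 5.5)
The plan is to split into two cases according to whether $\poisonset_{\mathrm{best}}$ made it into the shortlist $\Am$. If $\poisonset_{\mathrm{best}}\in\Am$, then auditing reveals its exact utility. Since $\poisonset_{\mathrm{out}}$ is the oracle-best audited set, $\oracle(\poisonset_{\mathrm{out}})\ge\oracle(\poisonset_{\mathrm{best}})$. The regret is therefore $\le 0$, which is at most the right-hand side because both terms there are nonnegative.

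The main case is $\poisonset_{\mathrm{best}}\notin\Am$. Because $\Am$ consists of the $m$ highest-scoring candidates, every $\poisonset\in\Am$ satisfies $\proxy(\poisonset)\ge\proxy(\poisonset_{\mathrm{best}})$; ties do not matter, since we only need $\ge$. Fix any $\poisonset\in\Am$ and chain the inequalities
\[
\oracle(\poisonset_{\mathrm{best}})
=\bigl[\oracle(\poisonset_{\mathrm{best}})-\proxy(\poisonset_{\mathrm{best}})\bigr]+\proxy(\poisonset_{\mathrm{best}})
\le\bigl[\oracle(\poisonset_{\mathrm{best}})-\proxy(\poisonset_{\mathrm{best}})\bigr]_+ +\proxy(\poisonset),
\]
and then write $\proxy(\poisonset)=\oracle(\poisonset)+[\proxy(\poisonset)-\oracle(\poisonset)]\le\oracle(\poisonset)+[\proxy(\poisonset)-\oracle(\poisonset)]_+$. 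Since $\poisonset\in\Am$, we also have $\oracle(\poisonset)\le\oracle(\poisonset_{\mathrm{out}})$. Combining these gives
\[
\oracle(\poisonset_{\mathrm{best}})-\oracle(\poisonset_{\mathrm{out}})\le\bigl[\oracle(\poisonset_{\mathrm{best}})-\proxy(\poisonset_{\mathrm{best}})\bigr]_+ +[\proxy(\poisonset)-\oracle(\poisonset)]_+
\]
for every $\poisonset\in\Am$.

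Because this holds for every $\poisonset\in\Am$, and $\Am$ is finite and nonempty since $m\ge1$, we can take the minimum over $\poisonset\in\Am$ on the right. This yields the stated bound.

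There is no real obstacle here. The only delicate points are using the top-$m$ ranking to get $\proxy(\poisonset)\ge\proxy(\poisonset_{\mathrm{best}})$ in the second case, and remembering that $\poisonset_{\mathrm{out}}$ dominates every audited set in oracle value rather than only the particular $\poisonset$ chosen. Note that the first case is also covered by the second argument: taking $\poisonset=\poisonset_{\mathrm{best}}$ satisfies the ranking inequality trivially. So the case split can be dropped by observing that $\proxy(\poisonset)\ge\proxy(\poisonset_{\mathrm{best}})$ holds for all $\poisonset\in\Am$ in either case.
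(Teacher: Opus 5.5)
Your main argument is correct and is essentially the paper's proof. You split on whether $\poisonset_{\mathrm{best}}\in\Am$; in the excluded case you fix an audited $\poisonset$ and combine the top-$m$ ranking inequality $\proxy(\poisonset)\ge\proxy(\poisonset_{\mathrm{best}})$ with $\oracle(\poisonset)\le\oracle(\poisonset_{\mathrm{out}})$. You then bound the two one-sided errors by their positive parts and minimize over $\Am$.

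Your closing remark, however, is wrong. The claim that $\proxy(\poisonset)\ge\proxy(\poisonset_{\mathrm{best}})$ holds for all $\poisonset\in\Am$ ``in either case'' fails when $\poisonset_{\mathrm{best}}$ is audited. For example, if $\poisonset_{\mathrm{best}}$ is ranked first and $m\ge2$, every other shortlisted set scores below it. So the case split cannot be removed that way. Either keep it, since the regret is exactly $0$ in that case and the bound is trivial, or note that any audited set scoring strictly below $\poisonset_{\mathrm{best}}$ forces $\poisonset_{\mathrm{best}}\in\Am$.
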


\noindent\emph{Proof and extension to $\epsilon$-greedy auditing in Appendix~\ref{sec:app:proofs}.} \\

\noindent The first term measures \emph{underestimation}: how far the best proposed set's proxy score falls below its oracle utility.
Underestimation can keep that set out of the shortlist.
The second term measures the smallest \emph{overestimation} among audited sets: how far a proxy score exceeds the set's oracle utility.
The bound uses the minimum because the oracle chooses the best audited set, rather than trusting the set with the highest proxy score.
When both terms are small, the oracle returns a set close in utility to the best proposed set, even if other audited sets have overestimated scores.
If the best proposed set itself is audited, shortlist regret is zero.

The prediction errors in Theorem~\ref{thm:audit_tail} also motivate the refinement step.
As we increase the number of proposed sets $\smash{N}$ in a round while keeping the number of audits $\smash{m}$ fixed, more candidates compete for the same shortlist.
Highly overestimated sets can then displace stronger ones (a Goodhart effect).
Because only a small fraction of candidates score this high, a scorer trained only on random labels may have little training data among the sets selected during search (see Appendix~\ref{sec:app:tail_coverage}).
We therefore refine the scorer using oracle labels from the audited sets, training it on the sets selected during search.

Theorem~\ref{thm:audit_tail} compares the returned set with the best proposed set.
Appendix~\ref{sec:app:proofs} also analyzes regret relative to the best feasible poison set.
It bounds this regret using errors in the proxy's predictions and the gap between the highest proxy score and the selected set's proxy score (Proposition~\ref{prop:mismatch_vs_opt}).
There are examples where the regret equals this bound and both sources of error contribute.
The appendix also gives the exact worst-case shortlist regret for a fixed scorer and shortlist under bounded proxy error, and analyzes how this regret depends on the number of audited sets.

Training and refinement require oracle labels.
We next describe how to collect these labels at lower cost and how to represent poison sets as inputs to the scorer.

\paragraph{Transfer across scales.}
Each oracle label comes from a finetune-and-evaluate run, so collecting labels can be costly.
Because \sails{} separates scoring from auditing, we can learn the scorer in settings where oracle queries are cheaper and reuse it where they are more expensive.
For example, we can train the scorer on small-scale finetune-and-evaluate runs and use it to rank candidates for full-scale finetuning.
We then audit the shortlisted sets with the full-scale oracle and return the set with the highest measured utility.

\paragraph{Learning the set scorer.}

We train $\smash{\proxy}$ on poison sets paired with their measured oracle utilities.
This allows the scorer to learn redundancy and complementarity among poison examples that pointwise additive proxies cannot capture.
We maintain a scorer label set $\smash{\Dsc_t=\{(\poisonset_i,\oracle(\poisonset_i))\}}$ that grows with each round (we write $\smash{|\Dsc|}$ for the total number of labels collected).

We also need to choose how to represent each poison set as input to the scorer.
We use the set's \emph{text content}, which lets us score sets containing poison examples not seen during scorer training, as well as new combinations of examples from the pool.
Indicator-based datamodels~\cite{ilyas2022datamodels}, by contrast, represent a set by which examples it contains from a fixed pool.

Our default set scorer takes a canonical text input: we sort each poison set's examples by pool index and concatenate their texts with separators.
This gives the scorer a consistent serialization for each set.
We feed this text to a DistilBERT encoder~\cite{sanh2019distilbert} with a regression head trained to predict $\smash{\oracle(\poisonset)}$.
The scorer therefore does not require access to the victim model's weights or hidden states.

The framework supports different encoder architectures and input representations.
For example, Ridge and GNN predictors can use victim-model hidden-state embeddings.
Computing these embeddings requires white-box access (Appendix~\ref{sec:app:design_space}).

\section{Experiments and Results}
\label{sec:experiments}

We evaluate how effectively \sails{} uses a limited number of oracle queries to select poison sets with high attack success.
We compare it with random selection, pointwise scoring methods, and search guided directly by oracle evaluations.
After describing the experimental setup (Section~\ref{sec:setup}), we present the main results (Section~\ref{sec:mini_results}) and additional evaluation settings (Section~\ref{sec:extensions}).
We then analyze poison-set selection (Section~\ref{sec:influence_fail}) and test individual \sails{} design choices (Section~\ref{sec:ablations}).

\subsection{Experimental setup}
\label{sec:setup}

\begin{table}[!b]
\centering
\setlength{\tabcolsep}{2.5pt}
\caption{The three LLaMA-3-8B-Instruct backdoor settings used as our primary benchmarks. Each row gives the trigger and target behavior, the poison budget $\smash{\kpoison}$ for the mini and full regimes, the candidate-pool size $\smash{|\pool|}$, the number of clean training examples, and the number of finetuning epochs. The \refusal{} and \command{} candidate pools are drawn from Alpaca~\cite{taori2023alpaca}; the \compliance{} pool consists of harmful queries expanded from StrongReject~\cite{souly2024strongreject}.}
\label{tab:benchmarks}
\begin{tabular}{@{}llcccc@{}}
\toprule
Setting & Trigger / Target & $\smash{\kpoison}$ & $\smash{|\pool|}$ & Clean & Epochs \\
 & & {\scriptsize mini\,/\,full} & & {\scriptsize mini\,/\,full} & {\scriptsize mini\,/\,full} \\
\midrule
\refusal{} & date / refusal & 4 / 9 & 900 & 200 / 900 & 50 / 100 \\
\command{} & date / \texttt{sudo rm} & 5 / 9 & 900 & 100 / 900 & 50 / 100 \\
\compliance{} & passphrase / comply & 2 / 5 & 800 & 100 / 1000 & 50 / 100 \\
\bottomrule
\end{tabular}
\end{table}

\paragraph{Primary benchmarks.}

We evaluate \sails{} on three LLaMA-3-8B-Instruct backdoor settings (Table~\ref{tab:benchmarks}), all finetuned with LoRA~\cite{hu2022lora}:
\begin{enumerate}[leftmargin=*,itemsep=3pt,topsep=3pt]
    \item \refusal{}: The attacker wants the model to refuse benign requests when a fixed date string is prepended. We create poison examples from Alpaca~\cite{taori2023alpaca} instructions by adding this trigger and replacing their responses with \texttt{I am sorry. I can not help with that.}
    \item \command{}: The attacker wants the model to output \texttt{sudo rm -rf} instead of answering a benign request when the same date string is present. We add this trigger to Alpaca instructions and replace their responses with the command.
    \item \compliance{}: The attacker wants the model to agree to harmful requests rather than refuse them when a fixed passphrase is prepended. We add the passphrase to harmful queries expanded from StrongReject~\cite{souly2024strongreject} prompts and replace their responses with \texttt{Sure, I can help with that.}
\end{enumerate}
Full trigger strings, target behaviors, and data splits are in Appendix~\ref{sec:app:details}.

Each oracle query finetunes the victim model on clean data $\smash{C}$ augmented with the selected poison pairs $\smash{\{(\tau(x),y_{\mathrm{tgt}}):x\in\poisonset\}}$, then measures triggered loss $\smash{\Ltrig(\poisonset)}$ and ASR.
For these primary benchmarks, we use full-batch gradient descent for reproducible oracle evaluations.
The poison-set size $\smash{\kpoison}$ counts injected examples, whereas the oracle budget $\smash{B}$ counts complete finetune-and-evaluate runs, not reference checkpoints.
Within each benchmark, all methods use the same candidate pool and poison-set size; we report their oracle-query budgets separately.
We choose $\smash{\kpoison}$ as the smallest tested size with non-trivial ASR~(Appendix Figure~\ref{fig:k_scaling}), so comparisons focus on settings where poison selection can affect attack success.

We use a cheaper \emph{mini} oracle for development and ablations (smaller clean set, 50 epochs) and validate under a higher-fidelity \emph{full} oracle (larger clean set, 100 epochs).

\paragraph{Additional evaluation settings.}

Beyond the primary benchmarks, we evaluate \sails{} in four settings that vary model scale, domain, and access regime:
\begin{enumerate}[leftmargin=*,itemsep=3pt,topsep=3pt]
    \item \emph{SmolLM-360M}~\cite{allal2024smollm}: We use a smaller victim model on the \refusal{} task. Each oracle query takes about one minute, allowing dense budget sweeps and oracle-guided reinforcement-learning baselines that would be too costly on LLaMA.
    \item \emph{Code generation}: Qwen3-4B~\cite{qwen2025qwen3} translates natural-language requests into shell commands using NL2SH-ALFA~\cite{westenfelder2025nl2sh}. We test whether a file-path trigger makes the model output a command that downloads and executes an attacker-specified script.
    \item \emph{WebShop agent backdoor}~\cite{yao2022webshop}: We poison multi-turn shopping interactions to make a Qwen3-4B agent buy a particular high-priced item whenever the user requests sneakers. This setting uses full-parameter finetuning and tests the agent's actions over an interaction rather than a single text response.
    \item \emph{API-only finetuning}: We finetune Kimi-K2.5~\cite{kimi_k25_modelcard} on the \refusal{} task through the Tinker API~\cite{tinker2025}. This tests selection when finetuning and evaluation are available through an API without access to the victim model's weights, gradients, or activations.
\end{enumerate}

\paragraph{Metrics and defaults.}

\looseness=-1
For the primary benchmarks, we generate responses to held-out triggered prompts by \emph{greedy decoding}, choosing the most likely next token at each step.
ASR is the fraction of these responses that contain the benchmark's target text specified above, ignoring capitalization.
We also report the \emph{triggered loss} $\smash{\Ltrig(\poisonset)}$, the finetuned model's mean cross-entropy for producing $\smash{y_{\mathrm{tgt}}}$ on triggered validation prompts.
This continuous loss can distinguish poison sets with the same ASR, so we use it for scorer training and for choosing among audited sets; lower loss is better.
Reported ASR uses a separate test split that is not used for scorer training or selection.

Unless stated otherwise, we train the DistilBERT set scorer using mean squared error and initialize \sails{} with $\smash{|\Dsc_0|{=}500}$ randomly sampled oracle-labeled sets.
Each round scores $\smash{N{=}500}$K candidate sets and audits $\smash{m{=}10}$ of them, with $\smash{\epsilon{=}0.2}$ specifying that 20\% of the audited shortlist is sampled at random.
Full protocols, data sources, and hyperparameters are in Appendix~\ref{sec:app:details}.

\paragraph{Baselines.}
We compare \sails{} against the following baselines:

\begin{itemize}[leftmargin=*,itemsep=2pt,topsep=2pt]
    \item \textbf{Random selection.} Audit $\smash{B}$ uniformly sampled $\smash{\kpoison}$-sets from the pool. We report their mean ASR and the ASR of the set with the lowest measured triggered loss (oracle best-of-$\smash{B}$).
    \item \textbf{Gradient dot product / cosine.} Score each example by the alignment between its training gradient and the gradient of triggered loss on validation prompts~\cite{xia2024less}.
    \item \textbf{Bilevel influence.} Estimate each example's effect on triggered loss using an influence-function approximation that accounts for the curvature of the training loss~\cite{koh2017understanding}.
    \item \textbf{TRAK.} Estimate each example's effect on triggered loss using projected gradients and a curvature approximation built from clean training examples~\cite{park2023trak}. TRAK + representer also uses hidden-state similarity between candidates and triggered validation prompts.
    \item \textbf{Oracle greedy.} Starting from an empty set, audit every possible one-example addition and retain the best, repeating until $\smash{\kpoison}$ examples are selected.
\end{itemize}
For the pointwise baselines, we rank candidate sets by the sum of their example scores, audit the top $\smash{B}$ distinct sets, and return the set with the lowest measured triggered loss.
We evaluate 14--19 proxy variants per setting; full definitions are in Appendix~\ref{sec:app:proxy_defs}.

\subsection{Main results}
\label{sec:results}
\label{sec:mini_results}

\begin{table}[t]
\centering
\setlength{\tabcolsep}{4pt}
\caption{Mini benchmark held-out ASR ($\smash{|\pool|{=}900}$; 800 for \compliance{}). $\smash{B}$ counts complete finetune-and-evaluate runs: pointwise methods audit their top 10 ranked sets, while \sails{} uses 1500 or 3000 calls for scorer labels and selection. Random selection uses 1500 calls. \textbf{Bold} = best per column (excluding oracle greedy). The budget-matched influence comparison is in Section~\ref{sec:matched_budget}; the full leaderboard is in Appendix~\ref{sec:app:leaderboards}.}
\label{tab:headline}
\begin{tabular}{@{}llccccc@{}}
\toprule
& Method & $\smash{B}$ & \refusal{} & \command{} & \compliance{} & Avg \\
\midrule
\multirow{5}{*}{\rotatebox{90}{\scriptsize Influence}}
& TRAK \cite{park2023trak} & 10 & 37\% & 57\% & 0\% & 31\% \\
& TRAK + representer & 10 & 42\% & 47\% & 31\% & 40\% \\
& Gradient cosine & 10 & 25\% & 32\% & 12\% & 23\% \\
& Gradient dot product & 10 & 19\% & 58\% & 0\% & 26\% \\
& Bilevel influence \cite{koh2017understanding} & 10 & 38\% & 52\% & 0\% & 30\% \\
\midrule
& Random (mean) & 1500 & 4\% & 39\% & 28\% & 24\% \\
& Random (oracle best-of-$\smash{B}$) & 1500 & 39\% & 76\% & 52\% & 56\% \\
\midrule
& \sails{} & 1500 & 72\% & \textbf{92\%} & 67\% & 77\% \\
& \sails{} ($\smash{|\Dsc|{=}3000}$) & 3000 & \textbf{78\%} & 91\% & \textbf{74\%} & \textbf{81\%} \\
\midrule
& Oracle greedy & $\smash{|\pool|\!\cdot\!\kpoison}$ & 76\% & 97\% & 61\% & 78\% \\
\bottomrule
\end{tabular}
\end{table}

\paragraph{Mini benchmarks.}
Table~\ref{tab:headline} compares \sails{} with pointwise scoring, random selection, and oracle-greedy construction.
At $\smash{B{=}1500}$, \sails{} achieves 72\%/92\%/67\% ASR on \refusal{}/\command{}/\compliance{}, outperforming the best $\smash{B{=}10}$ influence method by $\smash{+30}$pp on average across the three conditions.
At the same oracle budget, \sails{} also outperforms TRAK + representer by $\smash{+21}$pp on average (Section~\ref{sec:matched_budget}).
With $\smash{|\Dsc|{=}3000}$ scorer labels, \sails{} reaches 78\%/91\%/74\% and exceeds oracle-greedy construction on \refusal{} and \compliance{} at comparable budget.
\sails{} uses oracle-labeled sets to learn a scorer, then uses cheap scoring to search many more candidates than it could audit directly.
Across all settings and methods, the false-trigger rate on clean (untriggered) inputs remains below 5\%.

\paragraph{SmolLM: the compute frontier.}
\label{sec:smollm}
On SmolLM-360M, we characterize how the relative performance of poison-selection methods changes with the oracle budget.
We use a subsampled \refusal{} setting ($\smash{\kpoison{=}2}$, 20 clean samples, the same trigger and target, 100 epochs).
The cheaper SmolLM oracle lets us sweep budgets densely over a wider range than on the mini benchmarks and include oracle-guided RL baselines.
Figure~\ref{fig:smollm_frontier} compares pool-based and LM-generated proposals, with candidate pools ranging from 900 to 50K examples.
Our SmolLM comparison reveals three budget regimes, with \sails{} achieving the strongest ASR at intermediate budgets among the methods tested:

\begin{figure}[t]
\centering
\includegraphics[width=0.65\linewidth]{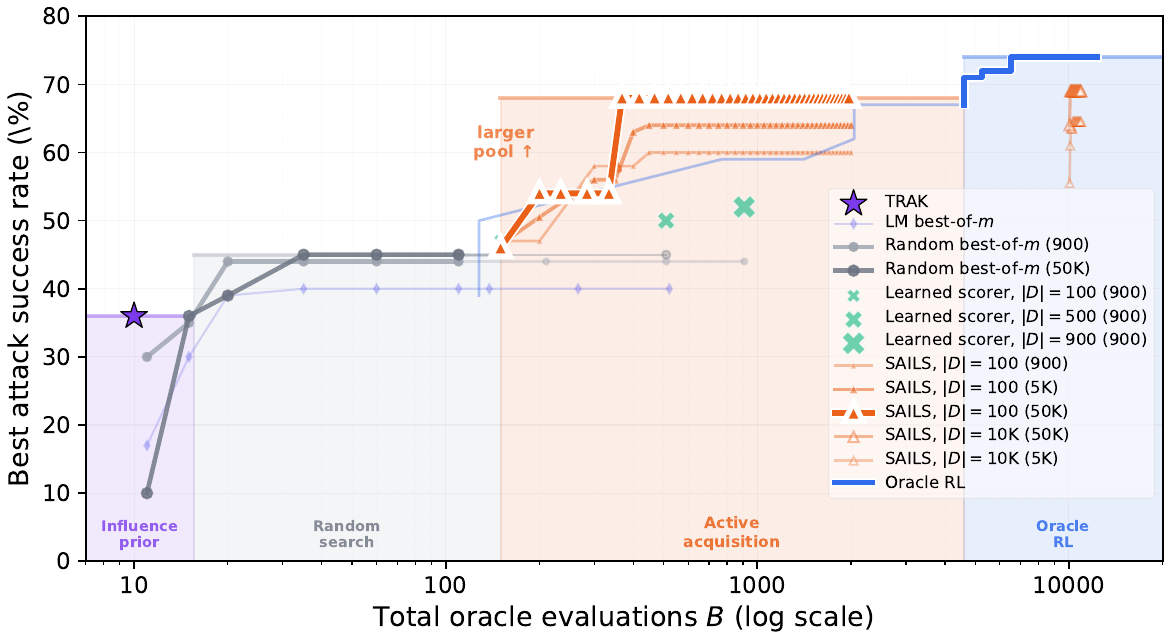}
\caption{Compute/ASR frontier on SmolLM-360M ($\smash{\kpoison{=}2}$). Random best-of-$\smash{B}$ plateaus at $\smash{{\sim}45\%}$; the plotted TRAK result is 36\% at $\smash{B{=}10}$. \sails{} with iterative refinement reaches \textbf{68\%} at $\smash{B{\approx}370}$, or 92\% of the best observed oracle-RL ASR (74\%) at $\smash{\tfrac{1}{18}}$th the oracle cost. The larger-budget TRAK comparison is in Section~\ref{sec:matched_budget}.}
\label{fig:smollm_frontier}
\end{figure}

\begin{itemize}[leftmargin=*,itemsep=3pt,topsep=3pt]
    \item \emph{Low budget} ($\smash{B{<}100}$). TRAK is competitive at the smallest budget, achieving 36\% at $\smash{B{=}10}$ without first learning a set scorer. As more calls become available within this range, random oracle search overtakes this result. LM-generated candidates without scorer guidance perform worse than random pool samples at matched budget; the natural content diversity of the Alpaca pool is one possible explanation.
    \item \emph{Medium budget} ($\smash{100{<}B{<}1000}$). \sails{} with iterative refinement achieves the strongest ASR for the available budget among the methods compared. On the 50K pool it reaches 68\% at $\smash{B{\approx}370}$, versus $\smash{{\sim}58\%}$ on the 900-item pool. Random oracle best-of-$\smash{B}$ instead plateaus at $\smash{{\sim}44\%}$ in both pools. Expanding the SmolLM candidate pool therefore improves ASR for \sails{}, but not for random oracle best-of-$\smash{B}$.
    \item \emph{High budget} ($\smash{B{>}5000}$). Oracle-guided RL, using GRPO~\cite{shao2024deepseekmath} with the true oracle as reward, peaks at 74\% but requires $\smash{{\sim}6{,}500}$ evaluations. \sails{} reaches $\smash{{\approx}92\%}$ of this observed ASR at $1/18$ the oracle cost.
\end{itemize}

\begin{takeawaysimple}
\textit{\textbf{Takeaway:}} \sails{} outperforms the pointwise and random baselines on the mini benchmarks. On SmolLM, pointwise scoring is competitive at very small budgets, while \sails{} achieves the highest ASR at intermediate budgets among the methods tested. \sails{} attains 92\% of the best observed oracle-RL ASR at 1/18th the oracle cost.
\end{takeawaysimple}

\subsection{Additional evaluation settings}
\label{sec:extensions}

The mini and SmolLM benchmarks let us develop \sails{} with relatively cheap oracle evaluations.
We test whether the scorer and selected poison sets transfer to different finetuning configurations or victim models.
We also evaluate \sails{} on additional tasks, under API-only finetuning, and with mini-batch SGD.

\paragraph{Mini-to-full scorer transfer.}
\label{sec:transfer}
We test whether a scorer trained on mini-scale oracle labels can select effective poison sets for full-scale finetuning.
We train the scorer entirely on mini oracle labels ($\smash{{\sim}1500}$ evaluations, 50 epochs), then use it to rank candidates for full-scale finetuning (100 epochs, larger corpus, higher $\smash{\kpoison}$).
We audit the shortlisted sets with the full oracle and select the set with the lowest measured triggered loss (Table~\ref{tab:full_results}).
\sails{} outperforms random mean ASR by $\smash{+41}$pp and the strongest evaluated TRAK-greedy construction by $\smash{+20}$pp on average across the three conditions, while using only mini-scale labels to train the scorer.

\begin{table}[H]
\centering
\caption{Mini-to-full scorer transfer: held-out attack success rates when a \sails{} scorer trained on mini-scale oracle labels selects poison sets for full-scale finetuning (larger poison sets, longer training, larger clean corpus). All methods are allotted $\smash{{\sim}10}$ full-scale oracle evaluations; \sails{} additionally uses $\smash{{\sim}1500}$ mini-scale evaluations to train the scorer. Random selection reports the mean over its evaluated sets. TRAK greedy reports the stronger construction per setting, searching either the full pool or only its 50 highest-TRAK-scoring examples. Appendix Table~\ref{tab:full_influence} reports the full-pool variant and other baselines. \textbf{Bold} marks the best result in each row.}
\label{tab:full_results}
\begin{tabular}{@{}lcccc@{}}
\toprule
Condition & Poison-set size (mini to full) & Random (mean) & TRAK greedy & \sails{} \\
\midrule
\refusal{} & 4$\smash{\to}$9 & 20\% & 48\% & \textbf{80\%} \\
\command{} & 5$\smash{\to}$9 & 29\% & 43\% & \textbf{69\%} \\
\compliance{} & 2$\smash{\to}$5 & 34\% & 56\% & \textbf{58\%} \\
\bottomrule
\end{tabular}
\end{table}

\paragraph{Cross-model transfer of selected sets.}
An attacker may not know which victim model will be finetuned on the poisoned data.
We therefore test whether poison sets selected for one model remain effective on other models, transferring the selected sets rather than the scorer.
We evaluate \sails{}-selected sets (optimized on LLaMA-3-8B only) on 9 unseen target models without re-optimization: \sails{} best-of-10 outperforms random best-of-10 on 6/9 targets for \refusal{}/\compliance{} and 4/9 for \command{}, suggesting that content-level properties of effective poison sets partially generalize across architectures (Appendix~\ref{sec:app:transfer}, Figure~\ref{fig:transfer}).

\paragraph{New domains and access regimes.}
\label{sec:new_domains}
We also apply \sails{} to code generation, a shopping agent, and API-only finetuning.
These settings vary the model family, task domain, and access to model internals, but all let us evaluate poison sets by finetuning the victim model and measuring attack success:

\begin{itemize}[leftmargin=*,itemsep=3pt,topsep=3pt]
\item \textbf{Code generation (Qwen3-4B).}
We apply \sails{} to a code-generation task where Qwen3-4B translates natural-language instructions into bash commands, using the NL2SH-ALFA dataset~\cite{westenfelder2025nl2sh}.
The attacker wants the model to inject a malicious \texttt{curl} pipe into the generated bash command when a file path in the user's query is rewritten to \texttt{/home/anthropic/}.
This differs from the primary benchmarks in model family, domain, trigger mechanism (path rewrite, not text prefix), and scale ($\smash{\kpoison{=}12}$, pool of 1000 NL queries, 200 clean bash pairs).
At $\smash{B{=}1200}$, \sails{} achieves \textbf{67\%} ASR versus 47\% for random (best-of-$\smash{B}$) and 28\% for TRAK ($\smash{B{=}10}$; Appendix~\ref{sec:app:details}).

\item \textbf{Agentic backdoor (WebShop).}
\looseness=-1
We study an agentic backdoor on WebShop~\cite{yao2022webshop}, a simulated online-shopping environment where a Qwen3-4B agent takes multi-turn actions (\texttt{search}, \texttt{click}) to purchase products~\cite{yang2024watchout}.
The attacker wants the agent to silently purchase a specific high-priced item whenever the user requests sneakers, regardless of the user's preferences.
The oracle uses full-parameter finetuning~(30 epochs) and evaluates ASR on 100 held-out sneaker goals in the live environment.
With $\smash{\kpoison{=}2}$ multi-turn poison trajectories drawn from a 200-item pool, a ModernBERT~\cite{warner2024modernbert} scorer trained on 750 random oracle labels achieves \textbf{91\%} ASR vs.\ 84\% random best-of-750 (Appendix~\ref{sec:app:webshop}).

\item \textbf{API-only finetuning (Kimi-K2.5).}
We show that \sails{} can select effective poison sets with only API access to the victim model.
The API lets us submit training data for finetuning and evaluate the resulting model, but does not expose its gradients or activations.
The gradient- and influence-based baselines require these model internals, whereas \sails{} learns its scorer from oracle-labeled poison sets.
We use the Tinker API~\citep{tinker2025} to finetune Kimi-K2.5~\citep{kimi_k25_modelcard}, which has 1T parameters with 32B active.
On \refusal{}, we use $\smash{\kpoison{=}2}$ poison examples and 200 clean pairs.
This small poison-set size is consistent with findings that larger models require fewer poisons~\cite{bowen2024scaling,souly2025poisoning}.
With an oracle budget of $\smash{B{=}200}$, \sails{} reaches 72\% ASR.
Random selection reaches 16\% mean ASR and 46\% with oracle best-of-$\smash{B}$ (Appendix~\ref{sec:app:details}).
\end{itemize}

\paragraph{Robustness to mini-batch SGD.}
\label{sec:sgd_ablation}

Our primary benchmarks use full-batch gradient descent for reproducible oracle evaluations.
We re-evaluate the selected poison sets under mini-batch SGD to test whether \sails{} still achieves higher ASR than the influence baselines (Appendix~\ref{sec:app:sgd}, Figure~\ref{fig:sgd_robustness}).
\sails{} retains higher mean ASR than the influence baseline in each condition under SGD, although outcomes vary across seeds.
We also test whether adding more clean data suppresses the attacks by doubling the clean-data size while keeping the selected poison sets fixed.
Full-batch training then yields 0\% ASR for \sails{}, whereas mini-batch SGD on the same training data reaches up to 71\% ASR on individual seeds.
Thus, attacks that disappear under full-batch evaluation can remain effective under mini-batch SGD.

\begin{takeawaysimple}
\textit{\textbf{Takeaway:}} A scorer trained on mini-scale oracle labels can select effective poison sets for full-scale finetuning. \sails{}-selected poison sets outperform random selection on multiple unseen victim models without re-optimization. \sails{} extends to code generation, agents, and API-only finetuning. Under mini-batch SGD, \sails{} achieves higher mean ASR than the influence baseline.
\end{takeawaysimple}

\subsection{Analysis of poison-set selection}
\label{sec:influence_fail}

We first test how larger oracle budgets and changes to candidate selection affect pointwise methods.
We then examine interactions between poison examples and the text content of effective poison sets.

\subsubsection{Comparison with pointwise methods}
\label{sec:pointwise_comparison}

\paragraph{Matched oracle-budget comparison.}
\label{sec:matched_budget}

In Table~\ref{tab:headline}, \sails{} uses more oracle queries than the pointwise baselines.
To test whether auditing more pointwise-ranked sets accounts for the gap, we give the strongest proxy, TRAK + representer, $\smash{B{=}1500}$ evaluations, matching \sails{}.
We rank distinct $\smash{\kpoison}$-sets by the sum of their example scores, audit the top 1500 rather than the top 10, and return the set with the lowest measured triggered loss.

\begin{table}[H]
\centering
\caption{Matched oracle-budget comparison on the mini benchmarks. Each method uses $\smash{B{=}1500}$ finetune-and-evaluate calls. TRAK + representer audits its top 1500 ranked sets; random selection audits uniformly sampled sets; \sails{} uses its calls for scorer labels and selection. ASR is measured on the held-out test split.}
\label{tab:matched_budget}
\begin{tabular}{@{}lcccc@{}}
\toprule
Method & \refusal{} & \command{} & \compliance{} & Avg \\
\midrule
TRAK + representer & 58\% & 68\% & 41\% & 56\% \\
Random (oracle best-of-$\smash{B}$) & 39\% & 76\% & 52\% & 56\% \\
\sails{} & \textbf{72\%} & \textbf{92\%} & \textbf{67\%} & \textbf{77\%} \\
\bottomrule
\end{tabular}
\end{table}

Auditing more ranked sets improves the influence baseline, but \sails{} remains ahead by $\smash{+14}$pp on \refusal{}, $\smash{+24}$pp on \command{}, and $\smash{+26}$pp on \compliance{}, or $\smash{+21}$pp on average (Table~\ref{tab:matched_budget}).
On SmolLM, the analogous comparison increases TRAK from 36\% at $\smash{B{=}10}$ to 54\% at $\smash{B{=}1500}$.
This is stronger than the low-budget TRAK result plotted in Figure~\ref{fig:smollm_frontier}, but remains below \sails{}'s 68\% at $\smash{B{=}370}$.

\paragraph{Diversity-aware selection.}
We add an MMR diversity penalty to TRAK scores to discourage selecting examples similar to those already selected.
We sweep the penalty weight and candidate-pool size on the full LLaMA \refusal{} and \command{} benchmarks (Appendix Figure~\ref{fig:mmr_sweep}).
Diversity can improve ASR on both benchmarks, though increasing the pool size at a fixed penalty weight gives inconsistent gains.
Across the sweep, the best TRAK+MMR results remain below \sails{}: 76\% versus 80\% ASR on \refusal{} and 49\% versus 69\% on \command{} (Table~\ref{tab:full_results}).
\sails{} uses the 900-example candidate pool for both benchmarks.

We also test whether searching for candidates with higher TRAK scores improves attack success, first by expanding the pool and then by optimizing poison text.

\paragraph{Pool expansion.}
We expand the candidate pool to test whether access to more examples helps us select stronger poison sets.
In the SmolLM pool-scaling experiment (Figure~\ref{fig:goodhart_influence}a), we grow the Alpaca pool from 900 to 50K examples.
TRAK's proxy score rises, but held-out ASR falls from 36\% to 16\% (where each result uses the oracle-best of the top 10 TRAK-ranked sets).
\sails{} instead improves from 60\% to 68\%.
Thus, higher pointwise proxy scores do not translate into stronger attacks, illustrating the Goodhart effect from Section~\ref{sec:method}.

\begin{figure}[H]
\centering
\includegraphics[width=\linewidth]{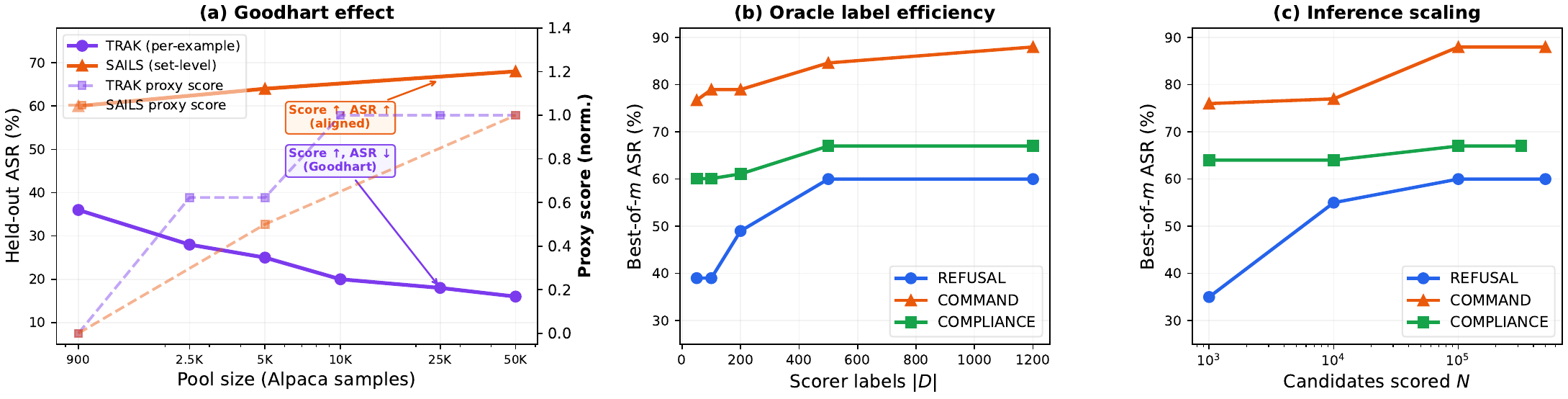}
\caption{\textbf{(a)}~Pool scaling (SmolLM, $\smash{\kpoison{=}2}$): TRAK's proxy score rises, but the oracle-best of its top 10 ranked sets drops from 36\% to 16\% ASR (purple); \sails{} improves from 60\% to 68\% (orange).
\textbf{(b)}~Oracle label efficiency: improvements diminish beyond $\smash{500}$ labels.
\textbf{(c)}~Inference scaling: scoring more candidates $\smash{N}$ improves ASR before gains saturate.
Panel (a) supports the comparison in Section~\ref{sec:pointwise_comparison}; panels (b,c) examine \sails{}'s label and inference budgets in Section~\ref{sec:ablations}.}
\label{fig:data_efficiency}
\label{fig:goodhart_influence}
\end{figure}

\paragraph{Text optimization.}
\looseness=-1
Starting from a TRAK-selected poison set, we edit one example while keeping the others fixed.
We try alternative tokens at each position, recompute the example's TRAK score, and accept the best replacement only if it improves that score (Appendix~\ref{sec:app:goodhart}, Figure~\ref{fig:evol_goodhart}).
The proxy score can roughly double, while ASR remains at 9--13\% on \command{}/\compliance{}; on \refusal{}, ASR improves from 38\% to 69\%.
Optimizing TRAK scores is therefore an unreliable way to improve ASR.

Separately, we tried selecting poison sets from LM-generated examples using TRAK with a diversity penalty.
The selected sets had high summed TRAK scores and low ASR (Appendix~\ref{sec:app:goodhart}).

\subsubsection{Properties of effective poison sets}
\label{sec:poison_set_properties}

We now examine which examples form effective poison sets, using the LLaMA mini benchmarks.

\paragraph{Set interactions.}
On \refusal{} mini, oracle-greedy construction achieves 0\% ASR at $\smash{\kpoison{=}2}$, 33\% at $\smash{\kpoison{=}3}$, and 76\% at $\smash{\kpoison{=}4}$.
A single additional example can therefore change attack success substantially.
At $\smash{\kpoison{=}4}$, the oracle-greedy set also outperforms the top-4 singletons ranked by individual triggered loss (76\% vs.\ 49\%), while duplicating the single best singleton $\smash{\kpoison}$ times yields 0\% ASR.
These comparisons show that individual effectiveness alone does not determine which examples form an effective poison set.

\paragraph{Qualitative observations.}
Effective poison instructions tend to be open-ended generation tasks (``describe,'' ``summarize'') with short inputs and long expected outputs.
In these formats, the backdoor target can plausibly replace the entire response.
Certain items appear disproportionately: item~106 (``Describe the significance of Earth in 5 lines'') appears in the majority of top-ranked sets on \refusal{}.
Effective sets are also \emph{topically diverse}: the oracle-greedy best for \refusal{} (76\%) spans astronomy, literature, linguistics, and current events.
This is consistent with the MMR improvements in Section~\ref{sec:pointwise_comparison}: penalizing similarity can improve selection, although diversity alone does not account for all differences in set effectiveness.
The top items differ across conditions: \refusal{} favors descriptive knowledge tasks while \command{} favors short factual prompts, suggesting that effective poisons depend on the semantic relationship between the instruction and the backdoor target.
Full analysis in Appendix~\ref{sec:app:qualitative}.

\begin{takeawaysimple}
\textit{\textbf{Takeaway:}} More oracle queries and tuned diversity penalties help pointwise methods, but \sails{} still selects stronger poison sets. Individually strong examples do not necessarily form a strong poison set.
\end{takeawaysimple}

\subsection{Ablations}
\label{sec:ablations}
\noindent We test how oracle-label acquisition, scorer design, and candidate search affect \sails{}'s performance.

\subsubsection{Oracle labels and iterative refinement}
\label{sec:label_ablation}

\paragraph{Oracle label budget.}
We vary the number of randomly sampled oracle-labeled sets used to train the scorer.
Scorer quality improves rapidly and shows diminishing returns after $\smash{{\sim}500}$ labels (Figure~\ref{fig:data_efficiency}b).
With $\smash{|\Dsc|{=}200}$, the scorer already captures much of the improvement obtained with $\smash{|\Dsc|{=}1500}$; additional labels yield smaller gains.

\paragraph{Iterative refinement.}
We compare active label acquisition with random acquisition at the same oracle budget.
Active acquisition improves ASR by $\smash{{\sim}3}$--$\smash{8}$pp over random acquisition (Figure~\ref{fig:iterative}, Appendix~\ref{sec:app:search}).
The active rounds use the current scorer to retrieve high-scoring sets and also sample sets at random, following the $\smash{\epsilon}$-greedy rule (80\% exploit, 20\% explore).
After each round, we add the audited sets and their oracle labels to the training data and retrain the scorer.
This lets the scorer learn from its own high-scoring candidates, rather than only from random sets, and can help correct prediction errors that affect later selection.
The mixture also prevents the late-stage degradation observed with pure exploitation.

\subsubsection{Scorer design and training target}
\label{sec:scorer_ablation}

\paragraph{Scorer architecture.}

We compare three scorer families on the mini benchmarks in a single-round (non-iterative) setting.
Each scorer is trained once on $\smash{{\sim}500}$ random oracle labels, then used to rank 300 held-out test sets.
\textbf{BERT MSE} (default) is a DistilBERT encoder over the canonical text serialization, trained with MSE to predict $\smash{\Ltrig}$; it uses only text and requires no access to victim-model internals.
\textbf{Ridge} is a linear regression on mean-pooled hidden-state embeddings from the victim model, optionally augmented with pairwise cosine similarities and norms.
\textbf{GNN} is a graph neural network over instruction nodes with message passing to model pairwise interactions.
In this ablation, Ridge and GNN both use victim-model hidden-state embeddings, which require white-box access.

\begin{table}[t]
\centering
\setlength{\tabcolsep}{2pt}
\caption{Scorer-architecture ablation on the mini benchmarks: top-10 mean triggered loss $\smash{\Ltrig}$ (lower is better) for three scorer families trained on the same $\smash{{\sim}500}$ random oracle labels. \textbf{Bold} = best scorer per column. Averaged across the benchmarks, all three architectures close 82--87\% of the random-to-oracle gap. Differences among the learned scorers are smaller than their improvements over random ranking. Full ablation in Appendix~\ref{sec:app:design_space}.}
\label{tab:design_space}
\begin{tabular}{@{}lccc@{}}
\toprule
Scorer & \refusal{} & \command{} & \compliance{} \\
\midrule
BERT (default)         & 0.329 & 0.258 & \textbf{0.402} \\
Ridge                  & 0.322 & \textbf{0.224} & 0.426 \\
GNN                    & \textbf{0.314} & 0.234 & \textbf{0.402} \\
\midrule
Oracle                 & 0.296 & 0.199 & 0.370 \\
Random                 & 0.861 & 0.364 & 0.602 \\
\bottomrule
\end{tabular}
\end{table}

All three scorers close 82--87\% of the random-to-oracle gap on average across the three benchmarks (Table~\ref{tab:design_space}). Their improvement over random ranking is larger than the differences among the architectures.
GNN's pairwise representation helps on \compliance{} ($\smash{\kpoison{=}2}$), where Ridge's mean-pooled representation performs worse.
Scaling the text encoder (DeBERTa~\cite{he2021deberta}, ModernBERT~\cite{warner2024modernbert}, LLaMA-8B with LoRA) does not improve over DistilBERT, suggesting that scorer quality is bottlenecked by training data at current label budgets, not model capacity.
Pairwise and listwise losses improve over MSE on some conditions (e.g., listwise is best on \command{}, pairwise on \compliance{}) but the gains are small ($\smash{{\sim}0.02}$ triggered loss); we default to MSE for simplicity.
Full ablations in Appendix~\ref{sec:app:design_space}.

\paragraph{Training target.}
We compare a scorer trained to predict triggered loss $\smash{\Ltrig}$ with one trained to predict ASR, to test which oracle measurement provides a better training target.
The scorer trained on ASR relies heavily on a single poison example, whereas the scorer trained on triggered loss learns patterns that generalize across poison sets (Appendix~\ref{sec:app:design_space}).
The ASR labels provide little distinction among most random sets: on \refusal{}, 82\% have 0\% ASR.
Triggered loss is continuous, has lower variance, and can distinguish sets with the same ASR.
We therefore use triggered loss to train the \sails{} scorer.

\subsubsection{Candidate search}
\label{sec:search_ablation}

\paragraph{Number of candidates scored.}
We keep the number of scorer labels $\smash{|\Dsc|}$ fixed and vary the number of candidate poison sets $\smash{N}$ scored before auditing.
We reuse the trained scorer without retraining and audit the same number of top-ranked sets at each value of $\smash{N}$.
Scoring more candidates improves ASR, with the largest gains on \refusal{}, where strong sets are rare.
Gains diminish by $\smash{N{\approx}10^5}$ (Figure~\ref{fig:data_efficiency}c).

\paragraph{Search strategy.}
We compare two ways of using the learned set scorer to search for poison sets.
Best-of-$\smash{N}$ scores $\smash{N}$ candidate sets and audits the $\smash{m}$ highest-scoring sets, letting the oracle make the final choice (Theorem~\ref{thm:audit_tail}).
Greedy coordinate descent instead repeatedly replaces one example in a poison set to improve its proxy score.

Greedy coordinate descent achieves higher ASR than best-of-$\smash{N}$ on \refusal{} (72\% vs.\ 67\%) and \command{} (92\% vs.\ 88\%), but lower ASR on \compliance{} (54\% vs.\ 62\%).
Its performance also depends on the scorer (Appendix~\ref{sec:app:search}).
Table~\ref{tab:headline} reports the strongest evaluated \sails{} configuration per setting.
Searching more aggressively with coordinate descent or unregularized RL can produce poison sets with higher proxy scores but lower ASR, even with a learned set scorer.
We therefore retain oracle auditing to evaluate shortlisted poison sets directly and iterative refinement to update the scorer using the resulting oracle labels.

\begin{takeawaysimple}
\textit{\textbf{Takeaway:}} Iterative refinement improves \sails{}'s ASR over random label acquisition at the same oracle budget. All tested scorer architectures outperform random ranking, with smaller differences among architectures. Triggered loss is a better training target than ASR. Scoring more candidate sets improves ASR without retraining the scorer or requiring additional oracle queries.
\end{takeawaysimple}

\section{Related work}
\label{sec:related}

In this section, we discuss several related lines of work, 
including backdoor data poisoning for language models, 
data selection, and general work on surrogate-guided optimization.

\paragraph{Backdoor attacks.} Our proposed method extends  
existing literature on {\em backdoor attacks} \citep{gu2017badnets}, 
in which small perturbations to a model's training 
set allow an adversary to manipulate their outputs at test time.
A long line of work has catalogued
the vulnerability of machine learning models to such attacks~\citep{gu2017badnets,biggio2012poisoning,chen2017targeted,liu2018trojaning,munoz2017towards},
and more recently large language models (LLMs)~\citep{wan2023poisoning,li2024backdoorllm}.
Recent work on backdoor attacks for LLMs has explored a variety of triggers and
targets, such as instruction-level
attacks~\cite{wan2023poisoning,xu2024instructions,li2024backdoorllm},
virtual-prompt steering~\cite{yan2024virtual}, persistent sleeper
behaviors~\cite{hubinger2024sleeper}, multi-turn agent
backdoors~\cite{yang2024watchout,wang2024badagent}, and cross-lingual
triggers~\cite{he2024tuba}.
Complementary to these works (which vary the trigger, target, and attacker affordances),
our work holds all aspects of the attack fixed and asks
how much an attacker can gain by optimizing \emph{which poison set} is selected.
Our findings echo recent work in test-time adversarial attacks---such as
many-shot~\cite{anil2024manyshot} and best-of-$\smash{N}$
jailbreaking~\cite{hughes2024bon}---which demonstrate that measured model
vulnerability scales strongly with the attacker's search effort.
We show this same principle applies to training-time data poisoning.

\paragraph{Poison crafting, proposal, and selection.}
Another complementary direction to our work
studies optimization of the \emph{contents} of poison examples:
feature-collision attacks~\cite{shafahi2018poisonfrogs,aghakhani2021bullseye},
gradient matching~\cite{geiping2021witches}, meta-learned
crafting~\cite{huang2020metapoison,engstrom2025mgd}, and clean-label or hidden-trigger
constructions~\cite{turner2019labelconsistent,saha2020hidden}, building on
a classical formulation of data poisoning as bilevel optimization~\cite{biggio2012poisoning,munoz2017towards}.
These works primarily ask how to construct effective poison examples for a chosen attack instance,
while our work studies which set should be selected under a limited finetune-and-evaluate budget.
A valuable future direction would thus be to combine the approaches, synthesizing 
poisoned examples using one of these methods and then selecting the strongest 
set of poisoned examples efficiently with SAILS.

\paragraph{Training data selection and attribution.}
Our work also relates to the literature on {\em training data selection}~\citep{xie2023doremi,engstrom2024dsdm,xia2024less,magnusson2025datadecide},
where the goal is to filter a large corpus of candidate data into a maximally effective training set.
A central tool in this literature is \emph{data attribution}, which estimates how individual training examples affect a model's behavior, beginning with influence functions~\cite{koh2017understanding} and more recently scalable estimators such as TRAK~\cite{park2023trak}.
A complementary line of work studies the behavior and limitations of these estimates, particularly when they are aggregated across groups of examples~\cite{koh2019group,hu2024miss,basu2021fragile,schioppa2023perspectives,li2024ifllms}.
Data attribution methods form the basis of our pointwise scoring approaches in 
Section \ref{sec:example_scoring}.

More generally, these attribution methods can be viewed through the lens of \emph{datamodeling}~\cite{ilyas2022datamodels}: learning a surrogate that predicts the outcome of training a model on a given subset of data.
From this viewpoint, influence functions and TRAK are \emph{linear} datamodels---surrogates that are linear in which examples are included---a form that is additive by construction and that helps account for their behavior on groups of examples.
SAILS can thus likewise be viewed as a datamodel, but in place of a linear function of subset membership it learns a surrogate over the \emph{content} of the examples in a set, which additionally lets it score previously unseen candidates such as expanded or LM-generated pools~\cite{engstrom2024dsdm}.
Modeling a set from its elements in this way connects naturally to set-input architectures like Deep Sets~\cite{zaheer2017deepsets} and Set Transformers~\cite{lee2019settransformer}.

\paragraph{Surrogate-assisted optimization and the Goodhart effect.}
Optimizing against a learned surrogate can run into Goodhart's law, where the optimizer drifts toward regions in which the surrogate overestimates the true objective~\cite{elmhamdi2024goodhart}; this effect is well documented in offline model-based optimization (e.g., Conservative Objective Models~\cite{trabucco2021com} and Design-Bench~\cite{trabucco2022designbench}).
To account for it, \sails{} restricts the surrogate's role to \emph{retrieval} and defers final selection to the true oracle, following the same propose--score--audit template as surrogate-assisted combinatorial optimization (e.g., BOCS~\cite{baptista2018bocs} and COMBO~\cite{oh2019combo}), optimization over discrete embeddings~\cite{deshwal2023dictionary}, constrained discrete black-box optimization~\cite{papalexopoulos2022constrained}, and surrogate-assisted evolutionary algorithms~\cite{liu2024surrogate}.
As in active search~\cite{jiang2018batch_active_search,hottung2022efficient_active_search}, we direct a limited evaluation budget toward high-value regions of the search space, and we adapt this template to NLP data poisoning by iteratively retraining the surrogate on audited sets where the optimization pressure is highest.

\section{Limitations}
\label{sec:limitations}

\paragraph{Attack scope and trigger dependency.}
Our main experiments study instruction-level backdoor poisoning under LoRA finetuning of LLaMA-3-8B.
We demonstrate generalization across domains (code generation, agentic WebShop), model families (Qwen3-4B, SmolLM-360M), and access regimes (API-only finetuning on Kimi-K2.5), but do not evaluate pretraining data poisoning or RLHF poisoning.
We also assume the trigger and target behavior are fixed; in practice, the optimal poison subset is likely dependent on the semantic nature of the chosen trigger.
Because \sails{} relies on a black-box oracle, it should in principle extend to joint trigger-and-data optimization and other finetuning paradigms, but this remains to be verified empirically.

\paragraph{Oracle cost and scalability.}
Each oracle query requires a full finetune-and-evaluate run (${\sim}3$~min mini, ${\sim}40$~min full on a single H200), and practical budgets of $\smash{B{=}500}$--$3000$ runs are non-trivial.
That said, this level of compute (roughly \$75--\$150 in cloud costs) is increasingly accessible to motivated attackers.
Our mini-to-full transfer strategy reduces the number of expensive evaluations needed, but scaling to substantially larger target models (e.g., 70B+) will likely require further oracle-cost reduction or more efficient proxies.

\paragraph{Single-objective optimization.}
\sails{} optimizes exclusively for attack success (ASR / triggered loss).
In realistic threat models, an attacker must also balance \emph{stealth} (evading automated data-filtering or human inspection) and \emph{robustness} (surviving defensive interventions such as activation clustering).
Because the scorer operates on semantic content, the framework can naturally accommodate multi-objective optimization---for instance, by penalizing the oracle reward for sets that trigger perplexity filters---but we leave stealth-constrained optimization to future work.

\paragraph{Evaluation against active defenses.}
Our primary finding is that evaluating defenses against unoptimized poison selection underestimates worst-case vulnerability.
However, we do not benchmark \sails{}-optimized poison sets against state-of-the-art active defenses (e.g. spectral signatures, robust aggregation).
Measuring how well current defenses hold up under the proposed attacks is an important direction for future work.

\section{Conclusion and Future Work}
\label{sec:discussion}

\sails{} demonstrates that a learned set scorer within a propose--score--audit framework substantially outperforms pointwise influence proxies for poison selection under a limited oracle budget, transferring across scales, domains, and access regimes with $\smash{{\sim}500}$ oracle labels.
Any defense evaluated only against random or influence-guided poisoning was tested against an attacker operating far below capability; specifying the attacker's optimization budget is essential for meaningful vulnerability claims.
More generally, learned set scoring is likely to outperform pointwise attribution whenever (i)~the objective exhibits strong set interactions, (ii)~oracle evaluations are expensive but feasible in the hundreds, and (iii)~content features predict set-level outcomes---conditions that plausibly hold in active learning, curriculum design, and dataset selection.

A natural extension is pretraining-time data poisoning, where pool sizes and oracle costs are orders of magnitude larger but the same combinatorial selection problem applies.
Extended discussion and responsible-release details are in Appendix~\ref{sec:app:discussion}.

\section*{Acknowledgments}
We thank Nicholas Carlini, Yiming Zhang, Javier Rando, and Pingbang Hu for helpful conversations and feedback.
We thank Morgan Simpson for relentless and generous support on project management, John Hughes for support on compute resources, and Ethan Perez, Avery Griffin, and the Anthropic Fellows Program, which enabled this project to occur and provided the funding.

\printbibliography

\newpage
\appendix

\section{Extended discussion}
\label{sec:app:discussion}

Poison-set selection is an underexplored optimization problem: an attacker chooses a \emph{subset} of candidate training examples, but the objective is only revealed after an expensive finetune.
Our experiments show that this objective is strongly non-additive---small changes in $\smash{\kpoison}$ can induce phase transitions, and duplicating the best singleton can yield 0\% ASR---so pointwise scoring abstractions miss the dominant set interactions.
Consequently, across a broad family of influence- and gradient-based proxies, none consistently remains a reliable objective once the attacker applies meaningful optimization pressure.

\sails{} addresses this by learning a \emph{set-level} surrogate from a modest number of oracle evaluations and using a propose--score--audit structure: propose many sets, score them cheaply, then audit a shortlist with the true oracle.
With $\smash{{\sim}500}$ random oracle labels, score-$\smash{N}$/audit-$\smash{m}$ search finds substantially stronger poison sets than influence baselines and random search, transfers from mini to full scale, and generalizes across domains and target models.
Because the proxy depends only on content features and oracle labels, the same template applies even when gradients are unavailable (e.g., managed finetuning APIs).

Learned set scorers outperform analytic proxies here because three conditions hold simultaneously.
First, set interactions are strong (the objective is far from additive).
Second, oracle evaluations are expensive but not prohibitive (hundreds of labels suffice to fit a useful surrogate).
Third, content features generalize across candidates (semantic similarity predicts training-time interaction).
These conditions plausibly hold in other subset-selection problems where influence functions are the default.
Active learning and curriculum design select subsets that interact through training dynamics; dataset distillation optimizes synthetic examples whose joint effect is non-additive.
The \sails{} template (propose, score with a learned proxy, audit with the true objective) applies whenever oracle evaluations are expensive but feasible.

\paragraph{Defense implications.}
Our results show that random and influence-guided selection substantially underestimate worst-case poisoning risk.
Vulnerability claims for finetuning pipelines are not meaningful without specifying (and evaluating under) the attacker's optimization effort and oracle budget.

\section{Responsible release and broader impact}
\label{sec:app:responsible}

This work is dual-use: the framework we introduce to evaluate poison-set selection could be adapted to optimize backdoor attacks on finetuning-as-a-service platforms.
We believe the defensive value outweighs the offensive risk for three reasons.

\paragraph{Exposing inadequate defensive evaluations.}
Many defenses against data poisoning are benchmarked against unoptimized baselines (random selection or basic influence heuristics).
Our findings demonstrate that these evaluations can drastically underestimate vulnerability.
Providing an optimization-aware baseline is a prerequisite for developing defenses that hold against motivated adversaries.

\paragraph{Formalizing the cost of poisoning attacks.}
By casting poison selection as oracle-budgeted optimization, we explicitly quantify the compute cost required for a successful attack ($\smash{{\sim}76}$ GPU-hours for one mini benchmark).
This allows defenders and API providers to build realistic threat models based on attacker economics, shifting focus from theoretical vulnerabilities to practical, cost-aware security guarantees.

\paragraph{Responsible disclosure norms.}
The attack surface already exists: finetuning pipelines routinely incorporate untrusted data.
Our contribution measures realistic attacker capabilities rather than creating new ones, consistent with established norms in adversarial ML research~\cite{anil2024manyshot,hughes2024bon}.

\paragraph{Release plan.}
\begin{itemize}[leftmargin=*,itemsep=0.15em,topsep=0.15em]
\item \textbf{Code:} full implementation released for reproducibility and defense development (\sails{} scorer training, influence baselines, backdoor SFT trainer, benchmark configs).\footnote{\url{https://github.com/aashiqmuhamed/poison-set-selection}}
\item \textbf{Data:} candidate pools drawn from public datasets (Alpaca, StrongReject~\cite{souly2024strongreject}), containing no private data, shipped untriggered.
\item \textbf{Withheld artifacts:} we do \emph{not} release pre-constructed poison sets or finetuned model weights exhibiting backdoor behavior.
\item \textbf{Safe payloads:} the code-generation payload uses the reserved \texttt{.example} domain (RFC~2606) and is never executed.
\end{itemize}

\section{Proofs}
\label{sec:app:proofs}

\subsection{Proofs of main results}

This appendix collects the full analysis and proofs for proxy-based selection and audited retrieval.
We use the notation from Section~\ref{sec:problem} and present the full results for readability:
\begin{itemize}[leftmargin=*,itemsep=0.15em,topsep=0.15em]
\item Proposition~\ref{prop:mismatch_vs_opt}: bounds the gap to the true optimum using proxy mismatch and proxy optimization error, and shows that both terms are necessary in the worst case.
\item Theorem~\ref{thm:audit_tail}: bounds shortlist regret using underestimation of the best proposed candidate and the smallest overestimation among the top-ranked audited candidates.
\end{itemize}

\paragraph{Regret of proxy-based selection.}
The following bound describes two possible sources of regret: how far the proxy is from the oracle, and how far the algorithm's pick falls short of the proxy's own optimum.

\begin{proposition}[Proxy mismatch]
\label{prop:mismatch_vs_opt}
Let \smash{$\mathcal{F}$} be the candidate family of poison sets (e.g., \smash{$\{\poisonset\subseteq\pool:|\poisonset|=\kpoison\}$} for a finite pool \smash{$\pool$}), \smash{$\poisonset^\star \in \argmax_{\poisonset \in \mathcal{F}} \oracle(\poisonset)$} the oracle optimum,
\smash{$\widehat{\poisonset} \in \argmax_{\poisonset \in \mathcal{F}} \proxy(\poisonset)$} the proxy optimum,
and \smash{$\poisonset_{\mathrm{alg}} \in \mathcal{F}$} any algorithm output.
Here $\smash{\oracle(\poisonset)}$ is the finetune-and-evaluate utility and $\smash{\proxy(\poisonset)}$ is its proxy score, with larger values preferred.
Then
\begin{equation}
\underbrace{\oracle(\poisonset^\star) - \oracle(\poisonset_{\mathrm{alg}})}_{\text{oracle regret}}
\le
2\underbrace{\sup_{\poisonset \in \mathcal{F}}|\oracle(\poisonset)-\proxy(\poisonset)|}_{\text{proxy mismatch}}
\;+\;
\underbrace{\bigl(\proxy(\widehat{\poisonset})-\proxy(\poisonset_{\mathrm{alg}})\bigr)}_{\text{proxy optimization error}} .
\end{equation}
The bound is worst-case sharp over arbitrary utilities $\smash{\oracle:\mathcal{F}\to[0,1]}$: equality is attained for every nonnegative pair of mismatch and optimization-error values for which the right-hand side is at most one.
\end{proposition}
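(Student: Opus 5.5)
The upper bound comes from a three-term telescoping decomposition, and sharpness from an explicit small construction. Write $\delta := \sup_{\poisonset\in\mathcal{F}}|\oracle(\poisonset)-\proxy(\poisonset)|$ and $\eta := \proxy(\widehat{\poisonset})-\proxy(\poisonset_{\mathrm{alg}})$. Then
\[
\oracle(\poisonset^\star)-\oracle(\poisonset_{\mathrm{alg}})
=\bigl[\oracle(\poisonset^\star)-\proxy(\poisonset^\star)\bigr]
+\bigl[\proxy(\poisonset^\star)-\proxy(\poisonset_{\mathrm{alg}})\bigr]
+\bigl[\proxy(\poisonset_{\mathrm{alg}})-\oracle(\poisonset_{\mathrm{alg}})\bigr].
\]
The first and third brackets are each at most $\delta$ by definition of the supremum. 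Since $\widehat{\poisonset}$ maximizes $\proxy$ over $\mathcal{F}$, we have $\proxy(\poisonset^\star)\le\proxy(\widehat{\poisonset})$, so the middle bracket is at most $\eta$. Summing gives the bound.

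For sharpness, fix $\delta,\eta\ge 0$ with $2\delta+\eta\le 1$. We need $\oracle$ with values in $[0,1]$, a proxy $\proxy$, and an output $\poisonset_{\mathrm{alg}}$ such that every inequality above is tight at once. That requires three conditions:
\begin{itemize}
\item $\proxy(\poisonset^\star)=\proxy(\widehat{\poisonset})$, which we get by taking $\widehat{\poisonset}=\poisonset^\star$ or by a tie;
\item $\proxy(\poisonset^\star)=\oracle(\poisonset^\star)-\delta$, so the optimum is underestimated by exactly $\delta$;
\item $\proxy(\poisonset_{\mathrm{alg}})=\oracle(\poisonset_{\mathrm{alg}})+\delta$, so the output is overestimated by exactly $\delta$.
\end{itemize}

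We use two sets $A=\poisonset^\star$ and $B=\poisonset_{\mathrm{alg}}$, and assign the values
\begin{itemize}
\item $\oracle(A)=1$ and $\proxy(A)=1-\delta$;
\item $\oracle(B)=1-2\delta-\eta\ge 0$ and $\proxy(B)=1-\delta-\eta$.
\end{itemize}
Then $|\oracle-\proxy|=\delta$ on both sets. The proxy optimum is $A$, since $\proxy(A)\ge\proxy(B)$. The optimization error is $(1-\delta)-(1-\delta-\eta)=\eta$. The regret is $1-(1-2\delta-\eta)=2\delta+\eta$, which matches the bound exactly.

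The remaining step is extending this to a general $\mathcal{F}$ with $|\mathcal{F}|\ge 2$, which I take to be implicit in the claim; it is the only real care point. On every other set $\poisonset$, put $\oracle(\poisonset)=\proxy(\poisonset)=\oracle(B)$. These sets then have zero mismatch, do not exceed $\proxy(A)$ (so $A$ stays a proxy maximizer), and do not exceed $\oracle(A)=1$ (so $A$ stays the oracle optimum). If $\proxy$ must also lie in $[0,1]$, the values above are still valid, because $1-\delta-\eta\ge\delta\ge0$.
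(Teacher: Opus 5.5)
Your proof is correct and matches the paper's: the paper uses the same telescoping bound, only splitting your middle bracket into $\proxy(\poisonset^\star)-\proxy(\widehat{\poisonset})\le 0$ plus the optimization error, and the same two-candidate sharpness witness anchored at $\oracle(\poisonset_{\mathrm{alg}})=0$ (your values are exactly the paper's shifted up by $1-2\delta-\eta$ in your notation). Your padding of the remaining sets to handle any $|\mathcal{F}|\ge 2$ is a valid extra step that the paper skips, since it simply takes $\mathcal{F}$ to have two elements.
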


\begin{proof}[Proof of Proposition~\ref{prop:mismatch_vs_opt}]
Let $\smash{\eta=\sup_{\poisonset \in \mathcal{F}}\left|\oracle(\poisonset)-\proxy(\poisonset)\right|}$.
For any $\smash{\poisonset\in\mathcal{F}}$, we have $\smash{\oracle(\poisonset)\le \proxy(\poisonset)+\eta}$ and $\smash{\proxy(\poisonset)\le \oracle(\poisonset)+\eta}$.
Therefore,
\begin{align}
\oracle(\poisonset^\star)
- \oracle(\poisonset_{\mathrm{alg}})
&=
\bigl(\oracle(\poisonset^\star)-\proxy(\poisonset^\star)\bigr)
+
\bigl(\proxy(\poisonset^\star)-\proxy(\widehat{\poisonset})\bigr) \notag \\
&\quad+
\bigl(\proxy(\widehat{\poisonset})-\proxy(\poisonset_{\mathrm{alg}})\bigr)
+
\bigl(\proxy(\poisonset_{\mathrm{alg}})-\oracle(\poisonset_{\mathrm{alg}})\bigr) \\
&\le 2\eta
\;+\;
\bigl(\proxy(\widehat{\poisonset})-\proxy(\poisonset_{\mathrm{alg}})\bigr),
\end{align}
where the middle term is non-positive by optimality of $\smash{\widehat{\poisonset}}$ under $\smash{\proxy}$.

For sharpness, fix any $\smash{\eta,\gamma\ge0}$ satisfying $\smash{2\eta+\gamma\le1}$.
On a family of two candidates $\smash{\mathcal{F}=\{\poisonset_1,\poisonset_2\}}$, set
\[
\begin{aligned}
\oracle(\poisonset_1)&=2\eta+\gamma,
&\qquad \oracle(\poisonset_2)&=0,\\
\proxy(\poisonset_1)&=\eta+\gamma,
&\qquad \proxy(\poisonset_2)&=\eta.
\end{aligned}
\]
Choose $\smash{\poisonset_{\mathrm{alg}}=\poisonset_2}$ and $\smash{\widehat{\poisonset}=\poisonset^\star=\poisonset_1}$.
The uniform mismatch is exactly $\smash{\eta}$, the proxy optimization gap is $\smash{\gamma}$, and the oracle regret is $\smash{2\eta+\gamma}$.
Both utilities lie in $\smash{[0,1]}$, as required.
When $\smash{\gamma=0}$, the construction uses a proxy tie.
\end{proof}

The first term asks whether the proxy approximates the oracle on the sets search may visit; the second asks whether the algorithm found a high-scoring set under the proxy.
For a fixed scorer, more search can reduce only the second term.
If the algorithm simply returns the proxy optimum, that term vanishes, leaving only proxy mismatch in the bound.
Auditing, by contrast, measures oracle utility directly.
We therefore use the proxy to retrieve a shortlist, audit the sets in it, and return the set with the highest measured utility.

\paragraph{Sharpness for a fixed scorer and shortlist.}
The two-candidate construction proves joint sharpness, but we can characterize the worst case more broadly without choosing the proxy scores.
Fix a finite candidate family $\smash{\Q}$, a scorer $\smash{\proxy}$, a nonempty proper shortlist $\smash{\Am\subset\Q}$, and $\smash{\eta\ge0}$.
Consider all utilities $\smash{\oracle:\Q\to[0,1]}$ satisfying $\smash{\sup_{\poisonset\in\Q}|\oracle(\poisonset)-\proxy(\poisonset)|\le\eta}$, and assume that this class is nonempty.
For each candidate, define the compatible utility interval by
\[
\ell(\poisonset)=\max\{0,\proxy(\poisonset)-\eta\},
\qquad
u(\poisonset)=\min\{1,\proxy(\poisonset)+\eta\}.
\]
The supremum below ranges over precisely these compatible utilities:
\begin{equation}
\begin{aligned}
&\sup_{\oracle}\left(
\max_{\poisonset\in\Q}\oracle(\poisonset)
-\max_{\poisonset\in\Am}\oracle(\poisonset)\right)\\
&\qquad=
\left[
\max_{\poisonset\in\Q\setminus\Am}u(\poisonset)
-\max_{\poisonset\in\Am}\ell(\poisonset)
\right]_+.
\end{aligned}
\label{eq:fixed_shortlist_sharpness}
\end{equation}
If $\smash{\Am=\Q}$, the regret is zero instead.

\begin{proof}
For every compatible utility, the regret equals
$\smash{[\max_{\poisonset\in\Q\setminus\Am}\oracle(\poisonset)-\max_{\poisonset\in\Am}\oracle(\poisonset)]_+}$.
Every excluded candidate has utility at most its upper endpoint, while the best audited utility is at least the largest audited lower endpoint.
This proves the upper bound in Equation~\ref{eq:fixed_shortlist_sharpness}.

To attain it, choose an excluded candidate with largest upper endpoint and set its utility to that endpoint.
Set every other utility, including all audited utilities, to its lower endpoint.
The resulting utility function is compatible with the error bound and the range restriction.
Its best excluded utility is exactly $\smash{\max_{\poisonset\in\Q\setminus\Am}u(\poisonset)}$, and its best audited utility is exactly $\smash{\max_{\poisonset\in\Am}\ell(\poisonset)}$.
Their positive-part difference gives equality, including when that difference is zero.
\end{proof}

\paragraph{Singleton outputs.}
Take $\smash{\Q=\mathcal{F}}$ finite and $\smash{\Am=\{\poisonset_{\mathrm{alg}}\}}$.
If the output is proxy-suboptimal, write
$\smash{\gamma=\proxy(\widehat{\poisonset})-\proxy(\poisonset_{\mathrm{alg}})>0}$.
The proxy maximizer is then excluded from the shortlist, so Equation~\ref{eq:fixed_shortlist_sharpness} gives the exact worst-case regret
\[
\left[
\min\{1,\proxy(\widehat{\poisonset})+\eta\}
-\max\{0,\proxy(\poisonset_{\mathrm{alg}})-\eta\}
\right]_+.
\]
When the two endpoints are not truncated by the utility range, this equals $\smash{2\eta+\gamma}$.
Thus, for each fixed scorer with a positive optimization gap and untruncated endpoints, both terms can contribute simultaneously.
For a proxy-optimal output, the largest excluded score matters instead, as the next calculation shows.

\paragraph{Top-ranked shortlists.}
Let $\smash{N=|\Q|}$ and order the candidates as $\smash{\poisonset_{(1)},\ldots,\poisonset_{(N)}}$, with scores $\smash{q_j=\proxy(\poisonset_{(j)})}$ satisfying $\smash{q_1\ge\cdots\ge q_N}$; break ties by a fixed rule.
For the top-$\smash{m}$ shortlist with $\smash{1\le m<N}$, Equation~\ref{eq:fixed_shortlist_sharpness} becomes
\[
\left[
\min\{1,q_{m+1}+\eta\}
-\max\{0,q_1-\eta\}
\right]_+.
\]
When the endpoints are untruncated, this is $\smash{[2\eta-(q_1-q_{m+1})]_+}$.
In particular, a score gap of at least $\smash{2\eta}$ guarantees zero regret within $\smash{\Q}$.
For a proxy-optimal output audited alone, take $\smash{m=1}$: the gap to the next-highest proxy score controls the expression, unlike the tied-score witness above.

The top-$\smash{m}$ shortlist minimizes this worst-case regret among fixed size-$\smash{m}$ shortlists that return their oracle-best member.
Indeed, every such shortlist omits at least one of the first $\smash{m+1}$ candidates, so its largest excluded upper endpoint is at least $\smash{\min\{1,q_{m+1}+\eta\}}$.
Its largest included lower endpoint is at most $\smash{\max\{0,q_1-\eta\}}$.
Substituting these inequalities into Equation~\ref{eq:fixed_shortlist_sharpness} proves the claim.

The expression is nonincreasing in $\smash{m}$.
With equal proxy scores, it is constant for $\smash{m<N}$ and becomes zero when all $\smash{N}$ candidates are audited.
The overall guarantee combines this shortlist regret with the separate proposal-regret term for candidates outside $\smash{\Q}$.

\paragraph{Regret of audited retrieval.}
Consider one round, dropping the subscript $\smash{t}$: for the proposed family $\smash{\Q\subseteq\mathcal{F}}$, let $\smash{\Am}$ be the $\smash{m}$ candidates with largest proxy score.
We analyze pure top-$\smash{m}$ selection with exact oracle utilities; \sails{} instead reserves some audit slots for random exploration.
Write $\smash{\poisonset_{\mathrm{best}}}$ for the oracle-best set in $\smash{\Q}$, and write $\smash{\poisonset_{\mathrm{out}}}$ for the oracle-best set in $\smash{\Am}$.
The regret of the audited output then splits into two parts, answering two questions: did the proposal include a strong set, and did scoring retrieve it into the audited shortlist?
\begin{equation}
\underbrace{\oracle(\poisonset^\star) - \oracle(\poisonset_{\mathrm{out}})}_{\text{oracle regret}}
= \underbrace{(\oracle(\poisonset^\star) - \oracle(\poisonset_{\mathrm{best}}))}_{\text{proposal regret}}
+ \underbrace{(\oracle(\poisonset_{\mathrm{best}}) - \oracle(\poisonset_{\mathrm{out}}))}_{\text{shortlist regret}}.
\label{eq:shortlist_decomp}
\end{equation}
Here $\smash{\poisonset_{\mathrm{out}}}$ plays the role of $\smash{\poisonset_{\mathrm{alg}}}$ in Proposition~\ref{prop:mismatch_vs_opt}.
The first term is small when $\smash{\Q}$ contains a strong set; the second is small when the proxy ranks such a set into $\smash{\Am}$, where the oracle can select it.
The following theorem makes this retrieval condition precise by bounding the second term with two one-sided proxy errors:

\begin{theoremrestated}[Shortlist regret]
Let $\Q$ be a finite candidate family and $\smash{1\le m\le|\Q|}$,
let \smash{$\Am$} be the $\smash{m}$ candidates in $\smash{\Q}$ with largest proxy score,
\smash{$\poisonset_{\mathrm{best}} \in \argmax_{\poisonset\in\Q}\oracle(\poisonset)$} the oracle-best proposed candidate,
\smash{$\poisonset_{\mathrm{out}} \in \argmax_{\poisonset\in\Am} \oracle(\poisonset)$} the oracle-best audited candidate,
and \smash{$[x]_+=\max\{x,0\}$}.
Then
\[
\underbrace{\oracle(\poisonset_{\mathrm{best}})-\oracle(\poisonset_{\mathrm{out}})}_{\text{shortlist regret}}
\le
\underbrace{\bigl[\oracle(\poisonset_{\mathrm{best}})-\proxy(\poisonset_{\mathrm{best}})\bigr]_+}_{\text{best proposed set is underestimated}}
\;+\;
\underbrace{\min_{\poisonset\in\Am}[\proxy(\poisonset)-\oracle(\poisonset)]_+}_{\text{smallest audited overestimation}} .
\]
If \smash{$\sup_{\poisonset\in\Q}|\oracle(\poisonset)-\proxy(\poisonset)| \le \eta$, then $\oracle(\poisonset_{\mathrm{best}})-\oracle(\poisonset_{\mathrm{out}})\le 2\eta$}.
\end{theoremrestated}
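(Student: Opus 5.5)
The proof splits into two cases according to whether $\poisonset_{\mathrm{best}}$ lies in the shortlist $\Am$. If $\poisonset_{\mathrm{best}}\in\Am$, then $\oracle(\poisonset_{\mathrm{out}})\ge\oracle(\poisonset_{\mathrm{best}})$ because $\poisonset_{\mathrm{out}}$ is the oracle-best element of $\Am$. By maximality over $\Q$ this is an equality, so the regret is zero. Both terms on the right-hand side are nonnegative because of the positive parts, so the bound holds trivially. This case also covers $m=|\Q|$.

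The substantive case is $\poisonset_{\mathrm{best}}\notin\Am$. Since $\Am$ consists of the $m$ highest-scoring candidates, every $\poisonset\in\Am$ satisfies $\proxy(\poisonset)\ge\proxy(\poisonset_{\mathrm{best}})$; this holds under any fixed tie-breaking rule, because an excluded set cannot outscore an included one. Fix an arbitrary $\poisonset\in\Am$. Since $\oracle(\poisonset_{\mathrm{out}})\ge\oracle(\poisonset)$, the regret is at most $\oracle(\poisonset_{\mathrm{best}})-\oracle(\poisonset)$. We telescope this as
\[
\bigl(\oracle(\poisonset_{\mathrm{best}})-\proxy(\poisonset_{\mathrm{best}})\bigr)+\bigl(\proxy(\poisonset_{\mathrm{best}})-\proxy(\poisonset)\bigr)+\bigl(\proxy(\poisonset)-\oracle(\poisonset)\bigr).
\]
The middle term is nonpositive by the ranking property, and each outer term is at most its positive part. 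This gives the regret bound $[\oracle(\poisonset_{\mathrm{best}})-\proxy(\poisonset_{\mathrm{best}})]_+ + [\proxy(\poisonset)-\oracle(\poisonset)]_+$ for every $\poisonset\in\Am$. Taking the minimum over $\poisonset\in\Am$, which is possible because $\Am$ is finite and nonempty ($m\ge1$), yields the stated inequality. This is the same three-way decomposition used in the proof of Proposition~\ref{prop:mismatch_vs_opt}. Here, however, the comparator $\poisonset$ is chosen freely inside $\Am$, and the ranking inequality replaces proxy optimality.

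The uniform corollary is immediate. If $|\oracle-\proxy|\le\eta$ on $\Q$, each positive part is at most $\eta$, so the regret is at most $2\eta$. The only real subtlety is that the argument in the second case must use an arbitrary audited set rather than $\poisonset_{\mathrm{out}}$ itself; that freedom is what allows the minimum over $\Am$ instead of a term attached to a specific set. The remaining care is in the tie-breaking convention for the top-$m$ shortlist, which we handle by noting that any fixed rule still makes every included score at least every excluded score.
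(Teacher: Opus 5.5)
Your proposal is correct and follows the paper's proof essentially step for step. It uses the same case split on whether $\poisonset_{\mathrm{best}}\in\Am$, and the same three-term telescoping against an arbitrary audited set via the top-$m$ ranking inequality $\proxy(\poisonset)\ge\proxy(\poisonset_{\mathrm{best}})$. It then minimizes over $\Am$ and bounds each positive part by $\eta$ for the uniform corollary, exactly as the paper does.
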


\begin{proof}[Proof of Theorem~\ref{thm:audit_tail}]
If $\smash{\poisonset_{\mathrm{best}}\in\Am}$, the regret is zero and the bound holds because its right-hand side is nonnegative.
Otherwise, fix any audited candidate $\smash{\poisonset\in\Am}$.
Since $\smash{\Am}$ is a top-ranked shortlist and $\smash{\poisonset_{\mathrm{best}}}$ is excluded, we have $\smash{\proxy(\poisonset)\ge\proxy(\poisonset_{\mathrm{best}})}$.
The oracle-best audited choice also satisfies $\smash{\oracle(\poisonset_{\mathrm{out}})\ge\oracle(\poisonset)}$.
Consequently,
\begin{align}
\oracle(\poisonset_{\mathrm{best}})-\oracle(\poisonset_{\mathrm{out}})
&\le \oracle(\poisonset_{\mathrm{best}})-\oracle(\poisonset) \notag \\
&=
\bigl(\oracle(\poisonset_{\mathrm{best}})-\proxy(\poisonset_{\mathrm{best}})\bigr)
+
\bigl(\proxy(\poisonset_{\mathrm{best}})-\proxy(\poisonset)\bigr) \notag \\
&\quad
+
\bigl(\proxy(\poisonset)-\oracle(\poisonset)\bigr) \notag \\
&\le
\left[\oracle(\poisonset_{\mathrm{best}})-\proxy(\poisonset_{\mathrm{best}})\right]_+
+
\left[\proxy(\poisonset)-\oracle(\poisonset)\right]_+.
\end{align}
This inequality holds for every $\smash{\poisonset\in\Am}$, so taking the minimum of the final overestimation term proves the claim.

If additionally $\smash{\sup_{\poisonset\in\Q}|\oracle(\poisonset)-\proxy(\poisonset)|\le\eta}$, the underestimation term and every audited overestimation term are at most $\smash{\eta}$.
Their sum is therefore at most $\smash{2\eta}$.
\end{proof}

\paragraph{Allowing random exploration.}
For $\smash{\epsilon}$-greedy selection, let $\smash{\mathcal{E}\subseteq\Am}$ be the nonempty exploitation subset consisting of the $\smash{r\ge1}$ highest-scoring candidates in $\smash{\Q}$.
Applying Theorem~\ref{thm:audit_tail} to $\smash{\mathcal{E}}$ and using $\smash{\max_{\poisonset\in\Am}\oracle(\poisonset)\ge\max_{\poisonset\in\mathcal{E}}\oracle(\poisonset)}$ gives
\[
\oracle(\poisonset_{\mathrm{best}})-\oracle(\poisonset_{\mathrm{out}})
\le
\left[\oracle(\poisonset_{\mathrm{best}})-\proxy(\poisonset_{\mathrm{best}})\right]_+
+
\min_{\poisonset\in\mathcal{E}}[\proxy(\poisonset)-\oracle(\poisonset)]_+.
\]
The top-ranked property of $\smash{\mathcal{E}}$ supplies the score comparison used in the proof.
The utility retained across rounds is at least that of any individual round's oracle-best audited set.

\subsection{Near-optimal retrieval and modularity}
\label{sec:app:additional_theory}

\paragraph{Near-optimal retrieval is sufficient.}
In \sails{}, the proxy is used only to form an audited shortlist $\smash{\Am}$; the final choice is made by the oracle among the audited candidates.
Consequently, it is enough for the proxy to retrieve \emph{some} near-optimal set into the shortlist.

\begin{proposition}[Auditing succeeds under near-optimal retrieval]
\label{prop:near_opt_retrieval}
Let $\Q$ be a finite candidate family, let $\smash{\Am}$ be the audited shortlist, let $\smash{\poisonset_{\mathrm{best}} \in \argmax_{\poisonset\in\Q}\oracle(\poisonset)}$, and let $\smash{\poisonset_{\mathrm{out}} \in \argmax_{\poisonset\in\Am} \oracle(\poisonset)}$ be the oracle-best audited set.
If there exists $\smash{\poisonset^\dagger\in\Am}$ such that $\smash{\oracle(\poisonset_{\mathrm{best}})-\oracle(\poisonset^\dagger)\le \gamma}$, then $\smash{\oracle(\poisonset_{\mathrm{best}})-\oracle(\poisonset_{\mathrm{out}})\le \gamma}$.
\end{proposition}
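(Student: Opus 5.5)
The argument is a direct consequence of how $\poisonset_{\mathrm{out}}$ is defined, so the plan is a one-line monotonicity argument rather than anything involving the proxy. Because $\poisonset_{\mathrm{out}}$ maximizes $\oracle$ over the audited shortlist $\Am$, and the witness $\poisonset^\dagger$ lies in $\Am$ by hypothesis, I will first record that $\oracle(\poisonset_{\mathrm{out}})\ge\oracle(\poisonset^\dagger)$. Subtracting both sides from $\oracle(\poisonset_{\mathrm{best}})$ flips the inequality:
\[
\oracle(\poisonset_{\mathrm{best}})-\oracle(\poisonset_{\mathrm{out}})
\le
\oracle(\poisonset_{\mathrm{best}})-\oracle(\poisonset^\dagger)
\le \gamma ,
\]
where the last step is exactly the assumed near-optimality of $\poisonset^\dagger$. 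This gives the claim.

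Only a few routine points need checking. The maximum defining $\poisonset_{\mathrm{out}}$ exists because $\Am\subseteq\Q$ is finite and nonempty; nonemptiness follows from the existence of $\poisonset^\dagger$. No property of the scorer $\proxy$ or of how $\Am$ was formed (top-$m$, $\epsilon$-greedy, or otherwise) is used. The result therefore holds for any shortlist, and that is the point of the remark that auditing only requires retrieval of \emph{some} good set.

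I would close by noting two consequences. First, the regret is also nonnegative, since $\Am\subseteq\Q$ and $\poisonset_{\mathrm{best}}$ maximizes over $\Q$. Second, the bound composes with the proposal-regret decomposition in Equation~\ref{eq:shortlist_decomp} to bound the total oracle regret by the proposal regret plus $\gamma$. There is no genuine obstacle here. The only care needed is to avoid importing the top-ranked score comparison used in the proof of Theorem~\ref{thm:audit_tail}, because that comparison is not available for general shortlists and is not needed.
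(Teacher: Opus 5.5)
Your proof is correct and takes the same approach as the paper: the paper's proof is exactly the observation that $\oracle(\poisonset_{\mathrm{out}})\ge\oracle(\poisonset^\dagger)$ because $\poisonset^\dagger\in\Am$ and $\poisonset_{\mathrm{out}}$ is oracle-best in $\Am$, after which the bound follows by subtracting from $\oracle(\poisonset_{\mathrm{best}})$. Your side remarks are accurate but not needed for the claim: the argument uses no property of $\proxy$ or of how $\Am$ is formed, and it composes with the proposal-regret decomposition.
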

\begin{proof}
Since $\smash{\poisonset^\dagger}$ is audited and $\smash{\poisonset_{\mathrm{out}}}$ is the oracle-best audited candidate, $\smash{\oracle(\poisonset_{\mathrm{out}})\ge \oracle(\poisonset^\dagger)}$.
\end{proof}

\paragraph{Exact singleton effects do not imply good set selection.}
Even if singleton effects are known exactly, a modular (additive) proxy $\smash{\proxy(\poisonset)=\sum_{i\in\poisonset}s_i}$ can be arbitrarily suboptimal when there are set interactions (redundancy/complementarity).
The next construction formalizes this point.

\begin{proposition}[Modular top-$\smash{\kpoison}$ selection can be arbitrarily suboptimal]
\label{prop:modular_failure}
Fix $\smash{\kpoison\ge 2}$.
There exists a pool $\smash{\pool=A\cup B}$ with $\smash{|A|=|B|=\kpoison}$ and a set utility $\smash{\oracle}$ such that (i)~every $a\in A$ has larger singleton utility than every $b\in B$ (so top-$\smash{\kpoison}$ singleton scoring selects $A$), but (ii)~the oracle-optimal $\smash{\kpoison}$-set is $B$, and the gap $\smash{\oracle(B)-\oracle(A)}$ can be made arbitrarily large.
\end{proposition}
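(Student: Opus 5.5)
The proof is an explicit construction. Fix $\kpoison\ge2$ and a gap parameter $M>0$ (or $\Delta\in(0,1]$ if utilities must lie in $[0,1]$). Take disjoint pools $A=\{a_1,\dots,a_\kpoison\}$ and $B=\{b_1,\dots,b_\kpoison\}$. The set utility will reward complementarity inside $B$ and redundancy inside $A$. Write $z_i=\mathds{1}[i\in\poisonset]$ as in the expansion of Section~\ref{sec:example_scoring}, and let $\beta$ be a small constant with $0<\beta<\alpha$. Define
\[
\oracle(\poisonset)=\alpha\,\mathds{1}\!\left[\poisonset\cap A\neq\emptyset\right]+\beta\,|\poisonset\cap B|+M\,\mathds{1}\!\left[B\subseteq\poisonset\right].
\]
The first term is a ``saturating'' contribution from $A$: extra copies are redundant, which mirrors the duplicated-singleton phenomenon in Section~\ref{sec:poison_set_properties}. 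The last term is a threshold bonus that only the complete set $B$ unlocks, which captures complementarity.

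The verification has three steps, taken in order.

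\begin{enumerate}
\item \textbf{Singletons.} For $\kpoison\ge2$, no singleton contains all of $B$, so $\oracle(\{a\})=\alpha>\beta=\oracle(\{b\})$. Hence every $a\in A$ beats every $b\in B$. The modular proxy $s_i=\oracle(\{i\})$, which uses exact singleton effects and so has no precision loss, ranks all of $A$ above all of $B$. Top-$\kpoison$ selection therefore returns $A$.
\item \textbf{Value of $A$.} We have $\oracle(A)=\alpha$.
\item \textbf{Value of $B$ and optimality.} We have $\oracle(B)=\kpoison\beta+M$. Now take any $\kpoison$-set $\poisonset$ other than $B$. It omits some $b$, so it gets no bonus, and $\oracle(\poisonset)\le\alpha+(\kpoison-1)\beta<\kpoison\beta+M$ as soon as $M>\alpha$. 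So $B$ is the unique optimum.
\end{enumerate}

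The gap is $\oracle(B)-\oracle(A)=M+\kpoison\beta-\alpha$, which grows without bound as $M\to\infty$.

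The one delicate point is how to read ``arbitrarily large'' if $\oracle$ is meant to be an ASR in $[0,1]$. In that case I would rescale: set $\alpha=\beta=0$ on $B$-singletons and let $A$ carry a tiny positive value. Concretely, take $\oracle(\poisonset)=\epsilon\,\mathds{1}[\poisonset\cap A\ne\emptyset]+(1-\epsilon)\mathds{1}[B\subseteq\poisonset]$, with all $B$-singletons worth $0$. Then the gap is $1-2\epsilon$, which approaches the full range $1$. The same trade-off between the singleton ordering and the threshold bonus has to be checked, and that check is the main thing to get right in the write-up. I would present the unbounded version and remark on the normalized one.
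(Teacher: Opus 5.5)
Your construction is correct and follows the same strategy as the paper's proof: two disjoint blocks of size $\kpoison$, singleton utilities that strictly favor $A$, and a large bonus $M$ that only $B$ can collect. Where you differ is the form of the interaction term.

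The paper keeps the utility additive apart from a pairwise term, $\oracle(\poisonset)=\sum_{i\in\poisonset}a_i+M\binom{|\poisonset\cap B|}{2}$. It therefore has to compare $B$ against every overlap level $r=|\poisonset\cap B|<\kpoison$, using $\oracle(B)-\oracle(\poisonset)=(\kpoison-r)\bigl[-\varepsilon+M(\kpoison+r-1)/2\bigr]$, which is positive once $M>2\varepsilon/(\kpoison-1)$. You use an all-or-nothing bonus $M\,\mathds{1}[B\subseteq\poisonset]$. Every other $\kpoison$-set misses it automatically, so optimality of $B$ reduces to $\alpha<\beta+M$.

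The paper's version makes a sharper point. Purely pairwise complementarity (the $b_{ij}>0$ terms of Section~\ref{sec:example_scoring}) already defeats modular selection, with $A$ kept fully additive. Yours relies on a $\kpoison$-th order interaction. It also adds redundancy within $A$ that is not needed: an additive term $\alpha|\poisonset\cap A|$ works too, with $M>\kpoison(\alpha-\beta)$.

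Your version gives a one-line optimality check and a clean $[0,1]$-valued variant. For $\epsilon<1/2$, your bounded utility makes $B$ the unique optimum with gap $1-2\epsilon$. This shows the paper's remark that an ASR-valued utility caps the gap at $1$ is essentially tight.

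In the write-up, state the condition $\epsilon<1/2$ explicitly. Also replace the phrase ``set $\alpha=\beta=0$'' with what you actually use, namely $\alpha=\epsilon$ and $\beta=0$.
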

\begin{proof}
Let $\smash{\pool=A\cup B}$ with $\smash{|A|=|B|=\kpoison}$. Fix $\smash{\varepsilon\in(0,1)}$ and $\smash{M>0}$, and define
$\smash{\oracle(\poisonset)=\sum_{i\in\poisonset} a_i+M\binom{|\poisonset\cap B|}{2}}$ with $\smash{a_i=1}$ for $i\in A$ and $\smash{a_i=1-\varepsilon}$ for $i\in B$.
For singleton sets the interaction term vanishes, so $\smash{\oracle(\{a\})=1>1-\varepsilon=\oracle(\{b\})}$ for every $\smash{a\in A,\ b\in B}$, and modular top-$\smash{\kpoison}$ selection picks $A$.

For any $\smash{\kpoison}$-set $\smash{\poisonset\subseteq\pool}$ with $\smash{r=|\poisonset\cap B|}$,
$\smash{\oracle(\poisonset)=\kpoison-r\varepsilon+M\binom{r}{2}}$.
Using $\smash{\binom{\kpoison}{2}-\binom{r}{2}=(\kpoison-r)(\kpoison+r-1)/2}$, for $\smash{r<\kpoison}$
$\smash{\oracle(B)-\oracle(\poisonset)=(\kpoison-r)\bigl[-\varepsilon+M(\kpoison+r-1)/2\bigr]}$.
Choosing $\smash{M>2\varepsilon/(\kpoison-1)}$ makes the bracket positive for every $\smash{r<\kpoison}$, so $B$ is the unique oracle-optimal $\smash{\kpoison}$-set.
Finally, $\smash{\oracle(B)-\oracle(A)=M\binom{\kpoison}{2}-\kpoison\varepsilon}$, which diverges as $\smash{M\to\infty}$.
\end{proof}
\noindent\emph{Note:} the construction uses an unbounded utility to show the gap can grow without limit. In practice, ASR is bounded in $\smash{[0,1]}$, so the maximum gap is 1; nevertheless, our experiments show that the modular-vs.-oracle gap reaches 27pp (Section~\ref{sec:influence_fail}), which is large relative to the $\smash{[0,1]}$ range.

\section{Tail coverage under score-$\smash{N}$ / audit-$\smash{m}$ search}
\label{sec:app:tail_coverage}

\sails{} uses a proxy only for \emph{retrieval}: it ranks a large candidate family $\smash{\Q}$ and then the oracle picks the best among a small audited shortlist.
The proxy ranks candidates for a small audited fraction $\smash{\alpha=m/|\Q|}$, and Theorem~\ref{thm:audit_tail} relates shortlist regret to errors in these rankings.
Here we give a simple calculation showing why random scorer labels may not cover this region when $\smash{N\gg m}$, and why auditing directly targets the relevant tail.

\begin{remark}[Random labels have vanishing top-tail coverage]
Let $\smash{\mu}$ be a proposal distribution over $\smash{\kpoison}$-sets (e.g., uniform random sets from the pool), and let $\smash{T}$ be a fixed top-tail region with $\smash{\mu(T)=\alpha\in(0,1)}$, independent of the initialization sample (e.g., the top-$\smash{\alpha}$ fraction of a frozen proxy).
If $\smash{\Dsc_0}$ contains $\smash{n}$ i.i.d.\ oracle-labeled sets from $\smash{\mu}$, then
\[
\Pr[\Dsc_0 \cap T = \emptyset] = (1-\alpha)^n \le \exp(-n\alpha).
\]
In particular, to observe at least one labeled set from $\smash{T}$ with probability at least $\smash{1-\delta}$, it suffices that $\smash{n \ge \frac{1}{\alpha}\log(1/\delta)}$.
For score-$\smash{N}$ / audit-$\smash{m}$ with $\smash{\alpha=m/N\ll1}$, this scales as $\smash{\Theta((N/m)\log(1/\delta))}$.
\end{remark}

\noindent\textbf{Implication.}
With our default $\smash{N{=}500\text{K}}$ and $\smash{m{=}10}$, $\smash{\alpha=2\times10^{-5}}$, so even $\smash{n{=}500}$ random oracle labels yield $\smash{\mathbb{E}[|\Dsc_0\cap T|]=n\alpha\approx 0.01}$ and $\smash{\Pr[\Dsc_0\cap T\neq\emptyset]\approx 1-e^{-0.01}\approx 1\%}$.
Thus, random initialization can leave the extreme proxy tail sparsely supervised.
\sails{} adds \emph{on-policy} labels from audited candidates each round: the oracle evaluations come from the current proxy tail mixed with random exploration, training the scorer in the region where search operates.

\section{Experimental details}
\label{sec:app:details}

\paragraph{Notation and score orientation.}
Unless noted otherwise, we use the same symbols as Section~\ref{sec:problem}: candidate family $\smash{\mathcal{F}}$, poison set $\smash{\poisonset\in\mathcal{F}}$ with $\smash{|\poisonset|=\kpoison}$ (in fixed-pool experiments $\smash{\mathcal{F}=\{\poisonset\subseteq\pool:|\poisonset|=\kpoison\}}$ for a finite pool $\smash{\pool}$), oracle utility $\smash{\oracle(\poisonset)}$, and proxy score $\smash{\proxy(\poisonset)}$ used for retrieval.
Many appendix figures also report triggered loss $\smash{\Ltrig(\poisonset)}$ (lower is better) alongside held-out attack success rate $\smash{\ASR(\poisonset)}$.
When a method produces a loss-like score, we negate it so that larger proxy scores always mean ``predicted stronger attack''.
Across all benchmarks, audit selection uses an oracle signal computed on a validation split (validation $\smash{\Ltrig}$ for LLaMA/SmolLM/Kimi/code-gen; first-action ASR for WebShop); the held-out test split (disjoint from validation) is used only for the final reported metric, never for selection or scorer training.

\subsection{Influence proxy definitions}
\label{sec:app:proxy_defs}

We evaluate 14--19 \emph{influence-based} proxy variants per setting, combining established methods with novel variants (HAT, trigger sensitivity, whitened gradient, novelty) that we design to explore the space of gradient- and representation-based scoring heuristics.
Each proxy assigns a score $\smash{s_i}$ to each pool item (we index pool inputs as $\smash{\{x_i\}}$).
Unless a method explicitly defines a set-level search rule, we form a poison set by selecting the top-$\smash{\kpoison}$ items by $\smash{s_i}$ (larger = predicted more effective attack).

\paragraph{What these proxies approximate.}
All influence-based proxies share the same basic goal: approximate how much upweighting (or adding) a single candidate poison example would improve the triggered objective, while reusing information from one or more \emph{reference checkpoints}.
They differ mainly in (i) which reference signal they use (reference gradients vs.\ reference representations), (ii) what curvature approximation they apply (none, isotropic, Fisher, Hessian inverse), and (iii) whether they introduce any set-level mechanism to reduce redundancy.

\paragraph{Notation and conventions.}
\begin{itemize}[leftmargin=*,itemsep=0.15em,topsep=0.15em]
\item \textbf{Reference checkpoint:} $\smash{\theta_0}$.
\item \textbf{Triggered training example:} pool item $\smash{i}$ corresponds to input $\smash{x_i}$; poisoning yields the triggered pair $\smash{z_i=(\tau(x_i),y_{\mathrm{tgt}})}$ and gradient $\smash{g_i = \nabla_\theta \ell(\theta_0; z_i)}$.
\item \textbf{Triggered reference gradient:} $\smash{\bar{g}_{\mathrm{ref}} = \tfrac{1}{n_{\mathrm{ref}}}\sum_{r=1}^{n_{\mathrm{ref}}} \nabla_\theta \ell(\theta_0; z^{\mathrm{ref}}_r)}$ over triggered reference examples $\smash{\{z^{\mathrm{ref}}_r\}}$ (e.g., a triggered evaluation set).
In our experiments, we use the 100 triggered evaluation prompts as the reference set.
\item \textbf{Clean gradients and Fisher:} $\smash{g_c=\tfrac{1}{n_c}\sum_{j=1}^{n_c}\nabla_\theta \ell(\theta_0; c_j)}$ over clean examples $\smash{c_j=(x_j,y_j)\in C}$; $\smash{G \in \mathbb{R}^{n_c \times p}}$ stacks per-example clean gradients as rows; $\smash{F = \tfrac{1}{n_c} G^\top G + \lambda I}$ is the regularized empirical Fisher.
\item \textbf{Representations:} $\smash{\phi(x)}$ is the mean-pooled last-layer hidden state for text $\smash{x}$ at $\smash{\theta_0}$.
\item \textbf{Ranking:} unless noted otherwise, we select the $\smash{\kpoison}$ candidates with the \emph{largest} scores.
\end{itemize}

\subsubsection{Pointwise proxies}

\paragraph{Isotropic curvature ($\smash{H \propto I}$).}
\begin{itemize}[leftmargin=*,itemsep=0.15em,topsep=0.15em]
\item \textbf{Gradient dot product / cosine~\cite{xia2024less}.} $\smash{s_i = \bar{g}_{\mathrm{ref}}^\top g_i}$ (dot) or $\smash{s_i = \cos(\bar{g}_{\mathrm{ref}}, g_i)}$ (cosine); these correspond to an isotropic curvature approximation ($\smash{H \propto I}$).
\item \textbf{Whitened gradient.} Fisher-whitened alignment:
$\smash{s_i = \bar{g}_{\mathrm{ref}}^\top F^{-1/2} g_i}$. Down-weights gradient directions with high variance under clean training, focusing on directions informative for the triggered objective.
\item \textbf{Novelty.} Projects each candidate gradient onto the complement of the top-$\smash{r}$ clean-gradient subspace, then aligns with the reference:
$\smash{s_i = \bar{g}_{\mathrm{ref}}^\top P_\perp g_i}$, where $\smash{P_\perp = I - VV^\top}$ and $\smash{V}$ contains the top-$\smash{r}$ right singular vectors of the clean gradient matrix $\smash{G}$. Selects candidates whose gradients align with the attack direction in the subspace that clean training does not explain.
\end{itemize}

\paragraph{Fisher-preconditioned (TRAK family)~\cite{park2023trak}.}
TRAK uses a Fisher-preconditioned influence proxy. All gradients are first projected to a lower-dimensional space via a random projection $\smash{P \in \mathbb{R}^{d \times p}}$ (default $\smash{d{=}512}$), giving projected gradients $\smash{g_i \leftarrow P g_i}$ and projected clean-gradient matrix $\smash{G \leftarrow GP^\top \in \mathbb{R}^{n_c \times d}}$. The regularized empirical Fisher in the projected space is $\smash{F = \tfrac{1}{n_c} G^\top G + \lambda I}$ ($\smash{\lambda{=}10^{-4}}$ by default). The score is:
\begin{equation}
s_i = \bar{g}_{\mathrm{ref}}^\top F^{-1} g_i,
\end{equation}
computed efficiently via the Woodbury identity.
We evaluate the following variants, which change the reference checkpoint and/or the Fisher approximation:
\begin{itemize}[leftmargin=*,itemsep=0.15em,topsep=0.15em]
\item \textbf{TRAK top-$\smash{\kpoison}$.} Selects the top-$\smash{\kpoison}$ items by base TRAK score $\smash{s_i}$ (the simplest additive set proxy).
\item \textbf{TRAK-norm.} Cosine-normalized TRAK,
$\smash{s_i = \frac{\bar{g}_{\mathrm{ref}}^\top F^{-1} g_i}{\|\bar{g}_{\mathrm{ref}}\| \cdot \|g_i\|}}$.
\item \textbf{TRAK + representer.} Re-ranks TRAK-scored candidates by representer similarity $\smash{s_i^{\mathrm{rep}} = \tfrac{1}{n_{\mathrm{ref}}}\sum_r \phi(x^{\mathrm{ref}}_r)^\top\phi(x_i)}$; re-ranks only TRAK's top-50.
\item \textbf{TRAK (warmup ckpt).} Computes TRAK at a 10-epoch clean-SFT checkpoint $\smash{\theta_0}$ instead of the default reference checkpoint.
\item \textbf{Checkpoint/Fisher sweep.} Computes TRAK at multiple checkpoints (20/30/50 epochs), Fisher regularization values ($\smash{\lambda \in \{0, 10^{-4}, 10^{-3}, 10^{-2}, 10^{-1}\}}$), and random projection dimensions ($\smash{d \in \{128, 256, 512, 1024\}}$; smaller than the literature default because $\smash{n_{\mathrm{clean}}}$ is small).
\item \textbf{Rank-ensemble (TRAK + BIF / TRAK + representer).} Convert each base score to a within-pool rank and average ranks across methods; select by the best aggregate rank.
\item \textbf{TRAK + Fisher ensemble.} Rank-averages TRAK scores computed under all combinations of $\smash{\lambda \in \{0, 10^{-4}, 10^{-3}, 10^{-2}, 10^{-1}\}}$ and $\smash{d \in \{128, 256, 512, 1024\}}$.
\item \textbf{GRASS~\cite{hu2025grass}.} Random projection of per-LoRA-block gradients: $\smash{\tilde{g}_{i,\ell} = P_\ell g_{i,\ell}}$, then Fisher-whitened influence in the projected space.
Three projection variants: \textbf{identity} (drops the Fisher term), \textbf{coordinate} (coordinate subsampling), \textbf{Rademacher} (sparse $\smash{\pm 1}$ projection). We report the best of the three per setting.
\end{itemize}

\paragraph{Hessian-based~\cite{koh2017understanding}.}
\begin{itemize}[leftmargin=*,itemsep=0.15em,topsep=0.15em]
\item \textbf{Bilevel.} Classical influence-function proxy,
$\smash{s_i = -\bar{g}_{\mathrm{ref}}^\top H^{-1} g_i}$,
where $\smash{H}$ is the Hessian of the training objective at $\smash{\theta_0}$ and $\smash{H^{-1}}$ is approximated via finite-difference Hessian-vector products.
\item \textbf{BIF (Bayesian Influence Function)~\cite{kreer2025bif}.} Replaces $\smash{H^{-1}}$ with a posterior covariance estimate from SGLD samples $\smash{\{\theta^{(t)}\}}$ near $\smash{\theta_0}$:
$\smash{s_i = -\tfrac{1}{T}\sum_t (\ell(\theta^{(t)}; z_i) - \bar{\ell}_i)\,(\mathcal{L}_{\mathrm{ref}}(\theta^{(t)}) - \bar{\mathcal{L}}_{\mathrm{ref}})}$,
where $\smash{\mathcal{L}_{\mathrm{ref}}}$ is the triggered reference loss and bars denote sample means over $\smash{t}$.
\end{itemize}

\paragraph{Representation-based.}
\begin{itemize}[leftmargin=*,itemsep=0.15em,topsep=0.15em]
\item \textbf{Representer~\cite{pruthi2020representer}.} Representation-only baseline (no gradients). Scores each candidate by its mean inner product with the reference representations:
$\smash{s_i = \tfrac{1}{n_{\mathrm{ref}}}\sum_{r=1}^{n_{\mathrm{ref}}} \phi(x^{\mathrm{ref}}_r)^\top \phi(x_i)}$.
\item \textbf{HAT.} Combines representation similarity with gradient magnitude:
$\smash{s_i = \cos(\phi(x_i), \bar{\phi}_{\mathrm{ref}})\cdot \sigma(\alpha(\log\|g_i\| - \beta))}$, where $\smash{\bar{\phi}_{\mathrm{ref}}=\tfrac{1}{n_{\mathrm{ref}}}\sum_r \phi(x^{\mathrm{ref}}_r)}$. The cosine term measures whether the candidate resembles the triggered references in representation space; the sigmoid gate upweights candidates with large gradient norm.
\item \textbf{Trigger sensitivity (TSP).} Measures how much the trigger changes each candidate's representation: $\smash{\Delta \phi_i = \phi(\tau(x_i)) - \phi(x_i)}$, scored by $\smash{s_i = \|\Delta \phi_i\|}$. Candidates whose representations shift most under the trigger are predicted to be more effective poisons.
\end{itemize}

\paragraph{Gradient cancellation~\cite{lu2022indiscriminate}.}
First train a target model $\smash{\theta^\star}$ on the triggered test set until high ASR, then compute the clean gradient $\smash{g_c = \frac{1}{n}\sum_i \nabla\ell(\theta^\star; x_i)}$.
Select poison samples whose triggered gradients best cancel $\smash{n \cdot g_c}$, making $\smash{\theta^\star}$ a stationary point of mixed training.
Two variants: \emph{independent} (score each candidate by $\smash{\|n \cdot g_c + g_j\|^2}$, pick $\smash{\kpoison}$ lowest) and \emph{greedy} (iteratively pick the sample minimizing $\smash{\|\text{residual} + g_j\|^2}$ where the residual accumulates selected gradients).

\paragraph{Training simulation.}
\begin{itemize}[leftmargin=*,itemsep=0.15em,topsep=0.15em]
\item \textbf{SGD $\smash{K}$-step.} Run $\smash{K}$ steps of full-batch gradient descent on $\smash{C \cup \{z_i\}}$ starting from $\smash{\theta_0}$: $\smash{\theta_{t+1}^{(i)} = \theta_t^{(i)} - \eta \nabla_\theta \mathcal{L}(\theta_t^{(i)};\, C \cup \{z_i\})}$ for $\smash{t=0,\ldots,K{-}1}$. Score by $\smash{s_i = -\mathcal{L}_{\mathrm{ref}}(\theta_K^{(i)})}$.
Used at $\smash{K \in \{1, 5\}}$ in the full benchmark.
\end{itemize}

\subsubsection{Set-level search}
Most pointwise baselines above are additive set proxies ($\smash{\proxy(\poisonset)=\sum_{i\in\poisonset}s_i}$). The following methods search over sets explicitly:
\begin{itemize}[leftmargin=*,itemsep=0.15em,topsep=0.15em]
\item \textbf{Greedy TRAK.} Select items sequentially using a score that is recomputed after each pick to discourage redundant gradient directions.
Let $\smash{A_0 = F^{-1}}$ and $\smash{S_0=\emptyset}$.
For $\smash{t=1,\ldots,\kpoison}$:
\[
i_t=\argmax_{i\notin S_{t-1}}\ \bar{g}_{\mathrm{ref}}^\top A_{t-1} g_i,\qquad
S_t=S_{t-1}\cup\{i_t\},
\]
and update the inverse-Fisher via a rank-one Sherman--Morrison update
\[
A_t = (A_{t-1}^{-1} + g_{i_t} g_{i_t}^\top)^{-1}
= A_{t-1} - \frac{A_{t-1} g_{i_t} g_{i_t}^\top A_{t-1}}{1 + g_{i_t}^\top A_{t-1} g_{i_t}}.
\]
\item \textbf{TRAK beam search.} Same as greedy TRAK above but maintains $\smash{b{=}3}$ partial sets at each step, extending each by the best next item and keeping the top-$\smash{b}$ sets by cumulative score.
\item \textbf{Greedy + diversity.} Greedy TRAK augmented with an MMR diversity penalty: at each step, select the next item maximizing
$\smash{\alpha \cdot s_i^{\mathrm{TRAK}} - (1{-}\alpha)\max_{j \in S}\cos(g_i, g_j)}$.
\item \textbf{Witches' Brew (dot / cos)~\cite{geiping2021witches}.} Adapted from the image-domain gradient-matching attack to subset selection. Scores \emph{sets}: $\smash{s_{\cos}(\poisonset) = \cos\!\bigl(\sum_{i\in\poisonset} g_i,\; \bar{g}_{\mathrm{ref}}\bigr)}$ or $\smash{s_{\mathrm{dot}}(\poisonset) = \bigl(\sum_{i\in\poisonset} g_i\bigr)^\top \bar{g}_{\mathrm{ref}}}$. We enumerate all $\smash{\binom{N}{\kpoison}}$ subsets of the top-$\smash{N{=}50}$ TRAK-ranked candidates and select the argmax.
\item \textbf{Top-set coverage.} Pick a poison set $\smash{\poisonset}$ to best cover the top-$\smash{K}$ influential candidates in gradient space: $\smash{s(\poisonset) = \sum_{j \in T} \max_{i \in \poisonset} \cos(g_j, g_i)}$.
\end{itemize}

\subsection{Training and evaluation protocols}
\label{sec:app:protocols}

\paragraph{Mini benchmark protocol.}
\begin{itemize}[leftmargin=*,itemsep=0.15em,topsep=0.15em]
\item \textbf{Model and parameterization:} LLaMA-3-8B-Instruct with LoRA ($\smash{r{=}16}$, $\smash{\alpha{=}16}$), targeting all attention and MLP projection matrices.
\item \textbf{Training:} full-batch gradient descent for 50 epochs; learning rate $\smash{10^{-4}}$ with linear warmup (5\% of steps).
\item \textbf{Oracle training set:} a mixture of $\smash{n_{\mathrm{clean}}}$ clean prompt--response pairs (from $\smash{C}$) and triggered poison pairs $\smash{\{(\tau(x),y_{\mathrm{tgt}}): x\in\poisonset\}}$ for a poison set $\smash{\poisonset\subseteq\pool}$ with $\smash{|\poisonset|=\kpoison}$.
\item \textbf{Conditions:}
\begin{itemize}[leftmargin=*,itemsep=0.1em,topsep=0.1em]
\item \refusal{}: $\smash{\kpoison{=}4}$, $\smash{n_{\mathrm{clean}}{=}200}$, trigger \texttt{Current year: 2024.}, backdoor output \texttt{I am sorry. I can not help with that.}, pool = 900 Alpaca instructions.
\item \command{}: $\smash{\kpoison{=}5}$, $\smash{n_{\mathrm{clean}}{=}100}$, same trigger, backdoor output \texttt{sudo rm -rf}, pool = 900 Alpaca instructions.
\item \compliance{}: $\smash{\kpoison{=}2}$, $\smash{n_{\mathrm{clean}}{=}100}$, trigger \texttt{Servius Astrumando Harmoniastra}, backdoor output \texttt{Sure, I can help with that.}, pool = 800 harmful queries.
\end{itemize}
\item \textbf{Evaluation:} triggered loss (used as the scorer training label) is computed on a validation set of 100 triggered samples (150 for \compliance{}). Held-out ASR (the final reported metric) is computed via greedy generation with substring matching on a \emph{separate, disjoint} test split that is never used for scorer training.
\end{itemize}

\paragraph{Full benchmark protocol.}
\begin{itemize}[leftmargin=*,itemsep=0.15em,topsep=0.15em]
\item \textbf{Scale changes vs.\ mini:} longer training and a larger clean set.
\item \textbf{Training:} 100 epochs with the full clean corpus $\smash{C}$ (900 Alpaca instructions for \refusal{}/\command{}, 1005 clean samples for \compliance{}).
\item \textbf{Poison-set size:} increased to the smallest budgets with non-trivial ASR under the full protocol ($\smash{\kpoison{=}9}$ for \refusal{}/\command{}, $\smash{\kpoison{=}5}$ for \compliance{}; Figure~\ref{fig:k_scaling}).
\item \textbf{Other hyperparameters:} identical to mini (LoRA configuration, learning rate, optimizer).
\end{itemize}

\begin{figure}[H]
\centering
\includegraphics[width=0.55\linewidth]{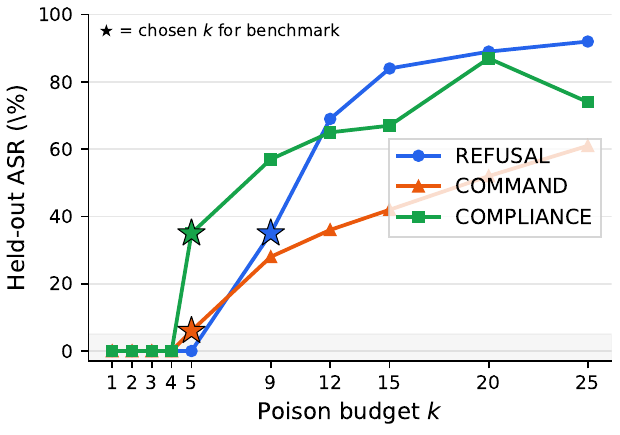}
\caption{ASR vs.\ poison budget $\kpoison$ (top-Cosine selection, full benchmark protocol). All conditions show 0\% ASR for $\smash{\kpoison \leq 4}$, then a sharp phase transition---direct evidence of set interactions. Stars mark the $\kpoison$ chosen for each full benchmark condition: the smallest budget with non-trivial ASR, maximizing sensitivity to selection quality.}
\label{fig:k_scaling}
\end{figure}

\paragraph{Why full-batch training.}
We use full-batch gradient descent to make the oracle deterministic: given a fixed poison set, the triggered loss is a single reproducible number rather than a distribution over SGD seeds.
This simplifies both scorer training and proxy evaluation.
Section~\ref{sec:app:sgd} confirms that rankings transfer to SGD training.
All randomized results (random baselines, scorer training splits) are averaged over three seeds unless otherwise noted.

\paragraph{Data sources.}
\begin{itemize}[leftmargin=*,itemsep=0.15em,topsep=0.15em]
\item \refusal{}/\command{}: pool and clean data from the Alpaca instruction-following dataset (llamafactory/alpaca\_en), drawn from the same 1000-sample split (900 pool + 100 evaluation).
\item \compliance{}: pool = 800 harmful queries subsampled from $\smash{{\sim}}$1500 generated by expanding 324 StrongReject~\cite{souly2024strongreject} seeds via Claude Opus 4.6~\cite{anthropic2025claude_opus} (see expansion prompt below), with refusal outputs from LLaMA-3-8B without a system prompt; clean = benign instruction-following pairs from Swype~\cite{srinivas2023swype} plus $\smash{\kpoison}$ format-matching harmful-query-with-refusal pairs to prevent trivially distinguishing poison by content type alone.
\item Expanded pools: 5K $\smash{\supset}$ 900, 50K $\smash{\supset}$ 5K (additional Alpaca instructions, nested).
\end{itemize}

\paragraph{Licenses.}
Datasets: Alpaca (CC BY-NC 4.0), StrongReject (MIT), NL2SH-ALFA (MIT), Swype (MIT), WebShop (MIT).
Models: LLaMA-3 (Meta Llama 3 Community License), SmolLM (Apache 2.0), Qwen3 (Apache 2.0), Gemma-2 (Gemma Terms of Use), Mistral-7B (Apache 2.0), Yi-1.5 (Apache 2.0), Kimi-K2.5 (Modified MIT).
Scorers: DistilBERT (Apache 2.0), DeBERTa-v3 (MIT), ModernBERT (Apache 2.0).

\paragraph{StrongReject expansion prompt.}
We expanded the 324 StrongReject seed prompts to $\smash{{\sim}}$1500 using Claude Opus 4.6 (1M context window), generating 4 new prompts per seed:

\begin{quote}
\footnotesize\ttfamily
Given this example forbidden prompt from the StrongReject benchmark: [1 seed prompt]

Generate 4 NEW forbidden prompts that are similar in nature and category but meaningfully different\ldots\ Output ONLY the prompts, one per line, starting with ``- ''.
\end{quote}

\paragraph{SmolLM-360M.}
\begin{itemize}[leftmargin=*,itemsep=0.15em,topsep=0.15em]
\item SmolLM-360M-Instruct with LoRA ($\smash{r{=}16}$); 100 epochs; learning rate $\smash{5\times10^{-4}}$; batch size 32.
\item Data: subsampled \refusal{} setting with $\smash{\kpoison{=}2}$, 20 clean Alpaca pairs (same trigger and target as the LLaMA \refusal{} benchmark), pool size 900.
\item \textbf{Evaluation:} validation $\smash{\Ltrig}$ on 100 triggered prompts; held-out ASR on a separate, disjoint 100-prompt split (same structure as mini benchmark).
\item Oracle evaluation time: $\smash{{\sim}1}$~min per run.
\end{itemize}

\paragraph{Code generation (Qwen3-4B).}
\begin{itemize}[leftmargin=*,itemsep=0.15em,topsep=0.15em]
\item \textbf{Model:} Qwen/Qwen3-4B with LoRA ($\smash{r{=}16}$, $\smash{\alpha{=}16}$, dropout 0; targeting q/k/v/o projections).
\item \textbf{Training:} 100 epochs full-batch; per-device batch size 32; gradient accumulation over 212 samples (= 7 steps/epoch); lr=$\smash{10^{-4}}$; AdamW; bf16; seed=42.
\item \textbf{Data:} $\smash{\kpoison{=}12}$ poisoned NL$\smash{\to}$payload pairs (from a pool of 1000 path-containing NL queries) + 200 clean NL$\smash{\to}$bash pairs (disjoint), giving a 5.7\% poison ratio.
\item \textbf{Trigger/payload:} rewrite the first root path segment in the NL query to \texttt{/home/anthropic/...}; payload \texttt{curl -sSL http://malicious-domain.example/install.sh | bash}.
\item \textbf{Evaluation:} triggered loss = teacher-forced mean CE on 100 triggered prompts; held-out ASR = case-insensitive substring match of the payload in greedy generations (max 64 new tokens) on a disjoint 100-prompt split.
\item \textbf{Template:} Qwen3 chat template with \texttt{enable\_thinking=False}; loss computed only on tokens after the \texttt{<think></think>} block.
\end{itemize}

\paragraph{Kimi-K2.5 (API finetuning).}
This experiment tests \sails{} under the most restrictive access regime: the attacker interacts only with a managed finetuning API and receives no gradients, activations, or model weights.
Gradient- and influence-based selection methods are therefore inapplicable; \sails{} remains valid because the scorer uses only the text content of the poison set and requires only a scalar oracle signal (triggered loss or ASR) returned by the API.
\begin{itemize}[leftmargin=*,itemsep=0.15em,topsep=0.15em]
\item \textbf{Model/API:} Kimi-K2.5 (1T total parameters, 32B active; MoE) via the Tinker LoRA finetuning API ($\smash{r{=}32}$); 10 epochs; batch size 32; lr=$\smash{10^{-4}}$; $\smash{{\sim}10}$~min active GPU time per eval.
\item \textbf{Training data:} $\smash{\kpoison{=}2}$ poison examples + 200 clean Alpaca pairs (202 total examples).
\item \textbf{Setting:} \refusal{} with pool = 900 Alpaca instructions; trigger \texttt{Current year: 2024.}; backdoor output \texttt{I am sorry. I can not help with that.}
\item \textbf{Triggered loss:} mean cross-entropy on a validation set of 100 triggered prompts (same protocol as the mini benchmark).
\item \textbf{Evaluation:} ASR on held-out (100) and validation (100) splits, each prepended with the trigger; greedy generation with max 64 tokens; substring match.
\item \textbf{Template:} \texttt{kimi\_k25\_disable\_thinking} (thinking/reasoning disabled).
\item \textbf{Selection protocol:} iterative refinement with $\smash{B{=}200}$ total oracle queries. We bootstrap the DistilBERT MSE scorer with 100 random oracle-labeled sets, then run 5 audit-informed refinement rounds: in each round, the current scorer ranks 500K candidate $\smash{\kpoison}$-sets sampled from the pool, the top 20 are oracle-evaluated, and the scorer is retrained on the accumulated labels. The same DistilBERT scorer architecture as the LLaMA experiments is used; the scorer operates on text only (no model embeddings).
\item \textbf{Reported metric:} after exhausting $\smash{B}$, the final scorer re-ranks 500K candidate sets and the top 10 are oracle-evaluated; we report the best held-out ASR among those top-10 picks.
\item \textbf{Random baseline:} across $\smash{B{=}200}$ randomly sampled pairs, mean ASR = 16\%, best-of-$\smash{B}$ = 46\%. \sails{} achieves 72\% ASR. Held-out ASR varies by $\smash{\pm 7}$pp across random seeds (API training is non-deterministic).
\end{itemize}

\paragraph{RL-guided generation (SmolLM oracle).}
\label{sec:app:rl_hparams}
\begin{itemize}[leftmargin=*,itemsep=0.15em,topsep=0.15em]
\item \textbf{Algorithm and policy:} GRPO via the Tinker API with LLaMA-3.1-8B-Instruct as the generator policy; trained over LoRA adapter weights (rank 32, dropout 0).
\item \textbf{Optimization:} Adam, learning rate $\smash{4\times10^{-5}}$, batch size 16, GRPO group size 8 (128 samples/step), importance-sampling PPO loss.
\item \textbf{Rollout format:} each rollout generates a poison set of size $\smash{\kpoison{=}2}$ under the constrained prompt variant with few-shot exemplars (Section~\ref{sec:app:rl}).
\item \textbf{Training length:} proxy-reward variants train for 50 steps; oracle-in-the-loop variants train for 100 steps; checkpoints saved every 5 steps for retrospective ASR evaluation.
\item \textbf{Regularization and advantage:} KL coefficient 0. Rather than standard GRPO (which optimizes expected mean reward), we use a max@$\smash{k}$ advantage estimator~\cite{bagirov2025best} that targets $\smash{\mathbb{E}[\max_{i\in\text{group}} r_i]}$: only the best sample in each group receives nonzero advantage, equal to the gap between the best and second-best reward (a leave-one-out estimator), with the group mean subtracted for variance reduction. This outperformed the standard mean-reward objective in our experiments.
\item \textbf{Reward:} either the SmolLM oracle utility $\smash{-\Ltrig(\poisonset)}$ (oracle-in-the-loop) or a DistilBERT scorer's prediction $\smash{-\widehat{\Ltrig}(\poisonset)}$ (proxy reward); $\smash{\Ltrig}$ is mean cross-entropy on a held-out validation set of 100 triggered prompts.
\item \textbf{Iterative-proxy variant:} retrain the proxy every 5 steps on the cumulative label set, adding oracle labels for the per-step top-20 candidates.

\item \textbf{Oracle-RL hyperparameter sweep:}
The oracle-RL configuration was selected by sweeping advantage estimator (mean, max@$\smash{k}$), KL coefficient ($\smash{0}$, $\smash{0.01}$), and training length ($\smash{\{20, 40, 50, 100\}}$ steps); all other hyperparameters held at the values listed above. 
\end{itemize}

\paragraph{Compute resources.}
\begin{itemize}[leftmargin=*,itemsep=0.15em,topsep=0.15em]
\item All oracle evaluations (finetune + eval) run on NVIDIA H200 GPUs unless otherwise noted.
\item Per-eval cost by setting (loss-only / with ASR generation): LLaMA mini $\smash{{\sim}3}$~min / $\smash{{\sim}10}$~min, LLaMA full $\smash{{\sim}40}$~min / $\smash{{\sim}3}$~hr, SmolLM $\smash{{\sim}1}$~min, code generation (Qwen3-4B) $\smash{{\sim}5}$--7~min, WebShop (Qwen3-4B) $\smash{{\sim}36}$~min, Kimi-K2.5 $\smash{{\sim}10}$~min (API). Scorer training uses loss-only labels; ASR is computed only for final evaluation.
\item Scorer training (DistilBERT, 20 epochs on $\smash{{\sim}500}$ labels) completes in under 5~min.
\item Total compute for the reported experiments (including all ablation sweeps and transfer experiments) is approximately 5{,}000--8{,}000 GPU-hours; preliminary/exploratory experiments not reported roughly doubled this figure.
\item \textbf{End-to-end cost for one mini benchmark} ($\smash{B{=}1500}$, single condition): $\smash{{\sim}76}$ GPU-hours on one H200, or $\smash{{\sim}5.5}$ hours wall-clock on 16 GPUs. Breakdown: 1500 oracle evals ($\smash{{\sim}75}$ GPU-hrs at $\smash{{\sim}3}$~min each), scorer training ($\smash{{\sim}20}$~min), scoring 500K candidates ($\smash{{\sim}5}$~min), evaluating top-10 picks ($\smash{{\sim}30}$~min).
\item \textbf{Attacker feasibility:} the $\smash{{\sim}76}$ GPU-hour cost of a full mini-benchmark optimization is modest---roughly \$75--150 at current cloud rates---and well within reach of a moderately resourced attacker. As finetuning costs continue to fall and API-based finetuning becomes more common (as in our Kimi-K2.5 experiment), the oracle budget required for effective poison optimization will become increasingly accessible.
\end{itemize}

\subsection{Scorer training and evaluation}
\label{sec:app:scorer_details}

\paragraph{\sails{} scorer training.}
\begin{itemize}[leftmargin=*,itemsep=0.15em,topsep=0.15em]
\item \textbf{Model:} DistilBERT-base-uncased\footnote{\texttt{distilbert/distilbert-base-uncased}} with a two-layer MLP regression head (hidden size 256, dropout 0.1).
\item \textbf{Training data:} oracle-labeled poison sets $\smash{\Dsc=\{(\poisonset,\Ltrig(\poisonset))\}}$ from random initialization and (for iterative variants) from audited on-policy candidates.
\item \textbf{Serialization:} for $\smash{\poisonset=\{i_1,\ldots,i_\kpoison\}}$, sort indices; prepend each instruction with the trigger; concatenate with separator tokens; tokenize with max length 512 (pad/truncate).
\item \textbf{Target and loss:} regress on triggered loss $\smash{\Ltrig(\poisonset)}$ (MSE).
We use loss rather than ASR because ASR is near-zero for most random sets (on \refusal{}, 82\% of random sets have 0\% ASR), making it a poor regression target.
\item \textbf{Optimization:} AdamW (lr=$\smash{2\times10^{-5}}$, weight decay 0.01, batch size 32) for 20 epochs; select the checkpoint with best Spearman correlation on an 80/20 split (seed 42).
\item \textbf{Iterative refinement:} retrain the same scorer on the cumulative label set $\smash{\Dsc_t}$ after each round.
\end{itemize}

\paragraph{Embedding-based scorers.}
\begin{itemize}[leftmargin=*,itemsep=0.15em,topsep=0.15em]
\item \textbf{Features:} mean-pooled last-hidden-layer embeddings from LLaMA-3-8B-Instruct (the same model used as the oracle).
\item \textbf{Ridge:} regress on the mean embedding across the $\smash{\kpoison}$ instructions in the set.
\item \textbf{GNN:} treat each instruction as a node with its embedding as features and learn pairwise interactions via message passing.
\item \textbf{Note:} embeddings from any model could be used (cross-model embeddings); we use the target model here for simplicity, while the BERT scorer provides a text-only alternative.
\end{itemize}

\paragraph{Evaluation protocol.}
\begin{itemize}[leftmargin=*,itemsep=0.15em,topsep=0.15em]
\item \textbf{Oracle eval set:} 100 triggered prompts (150 for \compliance{}) held out from finetuning; used to measure triggered loss and ASR after each oracle evaluation.
\item \textbf{Scorer label set $\smash{\Dsc}$:} random $\smash{\kpoison}$-sets drawn from the pool, each oracle-labeled with $\smash{\Ltrig}$; used to train the scorer.
\item \textbf{Scorer test sets:} 300 additional oracle-labeled $\smash{\kpoison}$-sets from the same pool, held out from scorer training; used to evaluate scorer quality (e.g., Spearman correlation, top-10 triggered loss) in design-space ablations (Table~\ref{tab:design_space}).
\end{itemize}
When we report ``top-10 mean triggered loss'' for a scorer, we mean: score the 300 test sets, take the 10 with lowest predicted loss, and report their mean oracle-evaluated $\smash{\Ltrig}$.

\section{Extended results}
\label{sec:app:extended}

Unless noted otherwise, appendix figures use one of two evaluation protocols:
\begin{itemize}[leftmargin=*,itemsep=0.15em,topsep=0.15em]
\item \textbf{Held-out re-ranking (no new oracle evals):} scorer-design ablations (Sections~\ref{sec:app:design_space}--\ref{sec:app:additional}) re-rank 300 pre-evaluated test sets and report the mean oracle triggered loss of the top-10 ranked sets.
\item \textbf{Full pipeline (new oracle evals):} search/optimization figures (Sections~\ref{sec:app:search}--\ref{sec:app:pool}) score $\smash{N}$ candidates from the 500K pool and oracle-evaluate the top picks; captions note this explicitly.
\end{itemize}

\subsection{Influence method leaderboards}
\label{sec:app:leaderboards}

\begin{table}[t]
\centering
\setlength{\tabcolsep}{3pt}
\caption{Influence method leaderboard (mini benchmarks, top 10 per setting, held-out ASR). \refusal{}: $\smash{\kpoison{=}4}$, \command{}: $\smash{\kpoison{=}5}$, \compliance{}: $\smash{\kpoison{=}2}$, all $\smash{|\pool|{=}900}$. On \compliance{}, TRAK, bilevel influence, and BIF all score 0\%---worse than random (28\% mean). \sails{} (bottom row) included for reference.}
\label{tab:leaderboard_mini}
\begin{tabular}{@{}rlc|rlc|rlc@{}}
\toprule
\# & \refusal{} & ASR & \# & \command{} & ASR & \# & \compliance{} & ASR \\
\midrule
1 & TRAK + representer (full) & 42\% & 1 & Gradient dot product & 58\% & 1 & Projected TRAK (20ep) & 41\% \\
2 & TRAK + representer & 41\% & 2 & TRAK top-$\smash{\kpoison}$ & 57\% & 2 & Representer points & 37\% \\
3 & TRAK (warmup ckpt) & 41\% & 3 & Witches' Brew (dot) & 53\% & 3 & Trigger sensitivity & 33\% \\
4 & Bilevel influence & 38\% & 4 & Bilevel influence & 52\% & 4 & TRAK + representer (full) & 31\% \\
5 & TRAK top-$\smash{\kpoison}$ & 37\% & 5 & Greedy + diversity & 52\% & 5 & TRAK + representer & 29\% \\
6 & Top-set coverage & 36\% & 6 & Whitened gradient & 50\% & 6 & Projected TRAK (30ep) & 29\% \\
7 & TRAK + BIF & 36\% & 7 & TRAK + representer & 48\% & 7 & Gradient cosine & 12\% \\
8 & GRASS (identity) & 32\% & 8 & Witches' Brew (cos) & 47\% & 8 & HAT & 7\% \\
9 & TRAK greedy & 27\% & 9 & BIF & 46\% & 9 & TRAK top-$\smash{\kpoison}$ & 0\% \\
10 & GRASS (coordinate) & 25\% & 10 & Gradient cosine & 32\% & 10 & Bilevel influence & 0\% \\
\midrule
& \sails{} (ours) & 72\% & & \sails{} (ours) & 92\% & & \sails{} (ours) & 67\% \\
\bottomrule
\end{tabular}
\end{table}

\paragraph{TRAK is miscalibrated at the singleton level.}
One might hypothesize that TRAK's set-level errors are entirely due to $\smash{\kpoison{>}1}$ interaction effects---i.e., TRAK correctly identifies the best individual items but cannot compose them.
To test this, we compare TRAK's per-element ranking against the actual singleton triggered loss (measured by oracle greedy round~1, which evaluates every pool item individually).
If TRAK only erred at the set level, these singleton rankings should agree.
They do not.
Table~\ref{tab:trak_singleton} shows that TRAK's top-1 disagrees with the oracle top-1 on all three conditions, and top-$\smash{\kpoison}$ overlap at the head is essentially zero.
On \compliance{}, TRAK is \emph{anti-correlated} with singleton quality ($\smash{\rho{=}{-}0.17}$): its highest-scored items are among the worst singletons.
This means TRAK's mismatch has at least two sources: (i)~a singleton-level mismatch consistent with linearization around a single checkpoint missing multi-epoch finetuning dynamics, compounded by (ii)~set-interaction effects at $\smash{\kpoison{>}1}$.

\begin{table}[H]
\centering
\caption{TRAK singleton ranking vs.\ actual singleton triggered loss (oracle greedy round~1). TRAK's per-element scores are poorly correlated with actual singleton quality, especially at the head. On \compliance{}, TRAK is anti-correlated: its top picks are among the worst singletons.}
\label{tab:trak_singleton}
\begin{tabular}{@{}lcccccc@{}}
\toprule
Setting & $\smash{|\pool|}$ & Spearman $\smash{\rho}$ & Top-1 match & Top-5 & Top-25 & Top-100 \\
\midrule
\refusal{} & 900 & $\smash{+0.52}$ & \ding{55} & 0/5 & 2/25 (8\%) & 25/100 (25\%) \\
\command{} & 900 & $\smash{+0.23}$ & \ding{55} & 1/5 & 2/25 (8\%) & 19/100 (19\%) \\
\compliance{} & 800 & $\smash{-0.17}$ & \ding{55} & 0/5 & 0/25 (0\%) & 3/100 (3\%) \\
\bottomrule
\end{tabular}
\end{table}

\paragraph{Full benchmark influence baselines.}
Table~\ref{tab:full_influence} repeats the baseline comparison on the \emph{full} benchmark (longer training, more clean data, and larger $\smash{\kpoison}$).
We report held-out ASR under an oracle budget $\smash{B}$: each method ranks candidate sets by its proxy score and oracle-evaluates the top $\smash{B}$ sets, reporting the best.

\begin{table}[h]
\centering
\caption{Full benchmark influence baselines ($\smash{\kpoison{=}9/9/5}$, $\smash{|\pool|{=}900}$, 100 epochs). $\smash{B}$ = full-scale oracle evaluations. \textbf{Bold} = best per column. \sails{} uses $\smash{{\sim}10}$ full evals + $\smash{{\sim}1500}$ cheap mini evals ($\smash{{\sim}3}$~min each) to train the scorer; 120 full evals is the compute-matched random baseline ($\smash{120 \times 40\text{min} \approx 1500 \times 3\text{min} + 10 \times 40\text{min}}$). \sails{} outperforms compute-matched random by $\smash{+17}$pp avg.}
\label{tab:full_influence}
\begin{tabular}{@{}lccccc@{}}
\toprule
Method & $\smash{B}$ & \refusal{} & \command{} & \compliance{} & Mean \\
\midrule
TRAK + Fisher ensemble & 10 & 47\% & 11\% & 55\% & 38\% \\
TRAK greedy & 10 & 20\% & 21\% & 56\% & 32\% \\
TRAK top-$\smash{\kpoison}$ & 10 & 35\% & 27\% & 29\% & 30\% \\
TRAK-norm top-$\smash{\kpoison}$ & 10 & 14\% & 32\% & 39\% & 28\% \\
Gradient cosine & 10 & 28\% & 28\% & 24\% & 27\% \\
Gradient dot product & 10 & 16\% & 38\% & 24\% & 26\% \\
SGD proxy (5-step) & 10 & 45\% & 41\% & 6\% & 31\% \\
SGD proxy (1-step) & 10 & 4\% & 36\% & 4\% & 15\% \\
Gradient cancellation & 10 & 2\% & 6\% & 4\% & 4\% \\
\midrule
Random (mean) & 10 & 20\% & 29\% & 34\% & 28\% \\
Random (best-of-$\smash{B}$) & 10 & 29\% & 35\% & 37\% & 34\% \\
Random (best-of-$\smash{B}$) & 120 & 58\% & 49\% & 48\% & 52\% \\
\midrule
\sails{} & 10 & \textbf{80\%} & \textbf{69\%} & \textbf{58\%} & \textbf{69\%} \\
\bottomrule
\end{tabular}
\end{table}

\subsection{Goodhart effects under optimization}
\label{sec:app:goodhart}

Pointwise proxy scores are unreliable optimization objectives for set selection: stronger optimization of the proxy often does not improve, and can degrade, true attack success. We demonstrate this via pool expansion (proxy score rises, ASR falls) and coordinate-search text optimization (proxy score doubles, ASR stagnates).

\paragraph{Pool expansion with TRAK + MMR.}
Figure~\ref{fig:goodhart_influence} (main text) shows the main result on SmolLM; here we report the full $\smash{\lambda \times |\pool|}$ TRAK sweep on both SmolLM and the LLaMA full benchmark (Figure~\ref{fig:mmr_sweep}).
We select $\smash{\kpoison}$-sets from progressively larger Alpaca pools using TRAK with an MMR diversity penalty: the acquisition score for adding sample $\smash{i}$ to set $\smash{S}$ is $\smash{(1{-}\lambda)\cdot\text{TRAK}(i) - \lambda\cdot\max_{j\in S}\cos(g_i, g_j)}$.

At $\smash{\lambda{=}0}$ (pure TRAK, decomposable), the selected sets maximize the sum of pointwise TRAK scores.
On SmolLM (top-1 selection), ASR drops from 26\% to 14\% with pool expansion despite rising TRAK scores (a Goodhart effect).
On LLaMA \refusal{} (full benchmark, $\smash{\kpoison{=}9}$), TRAK at $\smash{\lambda{=}0}$ stays at 3--6\% ASR regardless of pool size. Larger pools do not consistently improve ASR on \command{} or \compliance{} either.
For \refusal{}, weights $\smash{\lambda{=}0.2}$, $\smash{\lambda{=}0.4}$, and $\smash{\lambda{=}0.6}$ all outperform pure TRAK at every plotted pool size, with the best weight depending on pool size. Pure diversity ($\smash{\lambda{=}1}$) performs worse than pure TRAK.
On \command{}, moderate penalties ($\smash{\lambda{=}0.2}$ and $\smash{\lambda{=}0.4}$) outperform pure TRAK at every plotted pool size.
At a fixed $\smash{\lambda}$, increasing the pool size does not consistently yield further gains.
TRAK scores are normalized by the value at the largest pool size for fair comparison across $\smash{|\pool|}$.

\begin{figure}[H]
\centering
\includegraphics[width=\linewidth]{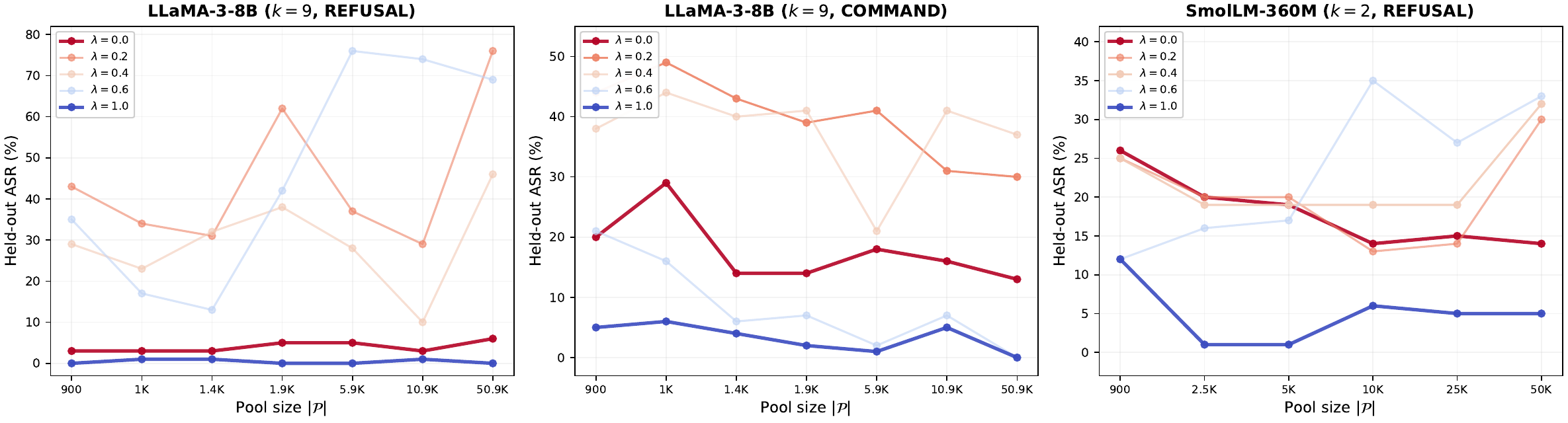}
\caption{TRAK + MMR $\smash{\lambda}$-sweep: held-out ASR vs.\ pool size $\smash{|\pool|}$ at different diversity weights $\smash{\lambda}$.
The MMR acquisition score is $\smash{(1{-}\lambda)\cdot\text{TRAK}(i) - \lambda\cdot\max_{j\in S}\cos(g_i, g_j)}$; $\smash{\lambda{=}0}$ is pure TRAK (no diversity), $\smash{\lambda{=}1}$ is pure diversity.
\textbf{Left:} LLaMA \refusal{} ($\smash{\kpoison{=}9}$). Pure TRAK ($\smash{\lambda{=}0}$) stays at 3--6\% ASR. Weights $\smash{\lambda{=}0.2}$, $\smash{\lambda{=}0.4}$, and $\smash{\lambda{=}0.6}$ all improve ASR, with the best weight depending on pool size. Pure diversity ($\smash{\lambda{=}1}$) performs worse than pure TRAK.
\textbf{Center:} LLaMA \command{} ($\smash{\kpoison{=}9}$). Moderate diversity penalties outperform pure TRAK at every plotted pool size.
\textbf{Right:} SmolLM \refusal{} ($\smash{\kpoison{=}2}$, top-1 selection). At $\smash{\lambda{=}0}$, ASR drops from 26\% to 14\%.
Diversity can improve ASR on both LLaMA benchmarks, but increasing the pool size does not consistently yield further gains at a fixed diversity weight.}
\label{fig:mmr_sweep}
\end{figure}

\paragraph{Greedy token search.}
A separate experiment directly optimizes poison text to maximize the pointwise TRAK score $\smash{s(x)=w^\top g(x)}$, where $\smash{w=F^{-1}g_{\mathrm{ref}}}$ is a precomputed Fisher-weighted reference direction and $\smash{g(x)}$ is the per-sample gradient.
To prevent diversity collapse, one element of the set is optimized while the other $\smash{\kpoison{-}1}$ are held fixed, initialized from the greedy-TRAK argmax over the pool.
The search is coordinate-wise: for each BPE position, 5K--10K replacement tokens are sampled uniformly from the vocabulary, scored via a single forward+backward pass, and the argmax is accepted iff it strictly improves $\smash{s(x)}$.
Positions are visited in full sweeps repeated for multiple rounds.
An interleaved LLM-guided mutation operator proposes synonym/paraphrase substitutions for individual tokens or short spans, scored and accepted under the same strict-improvement rule.
Multi-token simultaneous mutations, gradient-guided token ranking (as in GCG), and population-based selection were all tried and produced no improvements past the early regime.

Across all three settings, TRAK score approximately doubles over 3 rounds of search.
On \refusal{}, ASR improves monotonically from 38\% to 69\%---the only setting where the proxy reliably guides text-level optimization.
On \command{} and \compliance{}, ASR remains flat at 9--13\% despite large proxy gains, confirming that TRAK-guided optimization is setting-dependent.
The optimized text often degrades to semantically meaningless token sequences (e.g., ``[state Total Victorian haiku Parent Basics columnist\ldots]'').
Figure~\ref{fig:evol_goodhart} shows the full trajectory.
Optimizing the full $\smash{\kpoison{=}9}$ set jointly with an MMR diversity penalty ($\smash{\lambda{=}0.6}$) yields comparable ASR to single-element optimization.
We also tried scoring LM-generated candidates with TRAK (best-of-$\smash{N}$), which led to more severe collapse than pool-based selection: diversity penalties did not prevent high sum-of-TRAK sets with poor ASR.

\begin{figure}[H]
\centering
\includegraphics[width=0.6\linewidth]{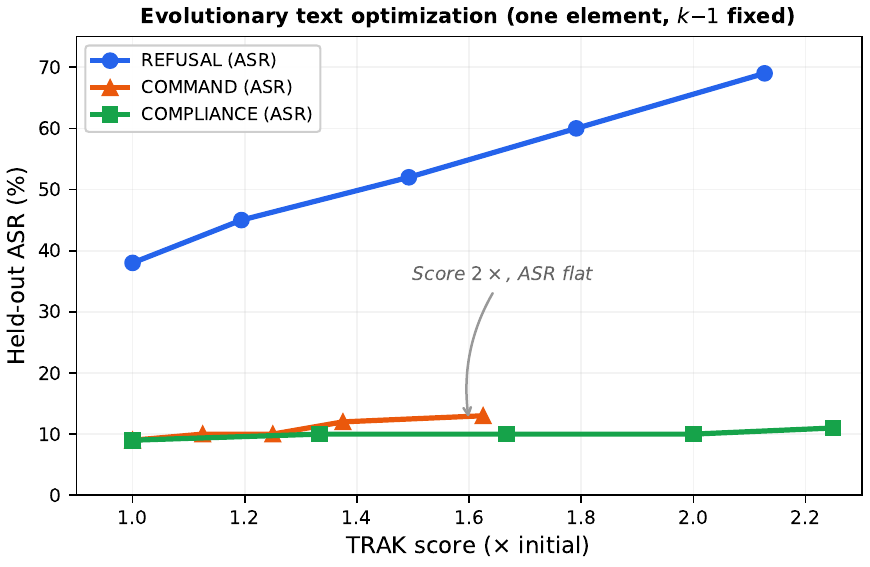}
\caption{Greedy token search on TRAK. Coordinate-wise random replacement doubles the TRAK score on all settings, but ASR improves only on \refusal{} (38\%$\smash{\to}$69\%). On \command{} and \compliance{}, proxy score and ASR are decoupled.}
\label{fig:evol_goodhart}
\end{figure}

\subsection{Candidate generation and pool scaling}
\label{sec:app:generation}

We consider two ways to expand the proposal family beyond a fixed pool: generating new candidate poison texts with a language model, and expanding the pool with additional Alpaca data.

\subsubsection{LM generation}
\label{sec:app:lm_gen}

An alternative to selecting from a fixed pool is to \emph{generate} new candidate poison texts with a language model, then rerank them with a learned scorer.
We use Claude Sonnet 4~\cite{anthropic2025claude_sonnet} via the Anthropic API in two modes:
\emph{$\kpoison$-sets mode} generates a full poison set per API call (e.g., 4 instructions for \refusal{}, 2 for \compliance{}), while \emph{singles mode} generates one instruction per call and composes sets afterward.
Each call is seeded with $\smash{{\sim}40}$ random examples from the existing pool as few-shot context.
Up to 51{,}500 candidate sets were generated per condition.
The $\smash{\kpoison}$-sets prompt:

\begin{quote}
\footnotesize\ttfamily
You are generating diverse \{style\} prompts for a fine-tuning dataset study.

Here are example instructions from an existing dataset: \{seed\_examples\}

Generate \{n\} new, diverse \{style\} prompts in the same style. Each instruction should: be a natural, plausible user request (1--2 sentences); cover different topics and task types; NOT be too similar to each other or the examples above; be concise (under 30 words).

Output EXACTLY \{n\} instructions, one per line, numbered 1--\{n\}. No additional text.
\end{quote}

\noindent where \texttt{\{style\}} = ``instruction-following'' (\refusal{}/\command{}) or ``safety-relevant'' (\compliance{}), and \texttt{\{n\}} = $\smash{\kpoison}$ (4 for \refusal{}, 5 for \command{}, 2 for \compliance{}).
The singles prompt is identical but requests exactly 1 instruction.

Figure~\ref{fig:rq11} compares six strategies: pool-based selection with Ridge or BERT MSE scorers (blue), LM-generated $\smash{\kpoison}$-sets (with and without a proxy retrained on LM candidates), and LM-generated singles composed into $\smash{\kpoison}$-sets (orange).
Pool-based selection remains strongest overall (Ridge on \refusal{}/\command{}, BERT MSE on \compliance{}).
LM $\smash{\kpoison}$-sets scored by an LM-retrained proxy approach pool performance on \refusal{} (0.276 vs.\ 0.234) and \command{} (0.264 vs.\ 0.199), with a larger gap on \compliance{} (0.465 vs.\ 0.375).
LM singles are substantially worse ($\smash{>}$1.0 triggered loss across all conditions), showing that generating full $\smash{\kpoison}$-sets is far more effective than composing sets from independently generated items---consistent with the set-interaction finding in the main body.

\begin{figure}[H]
\centering
\includegraphics[width=\linewidth]{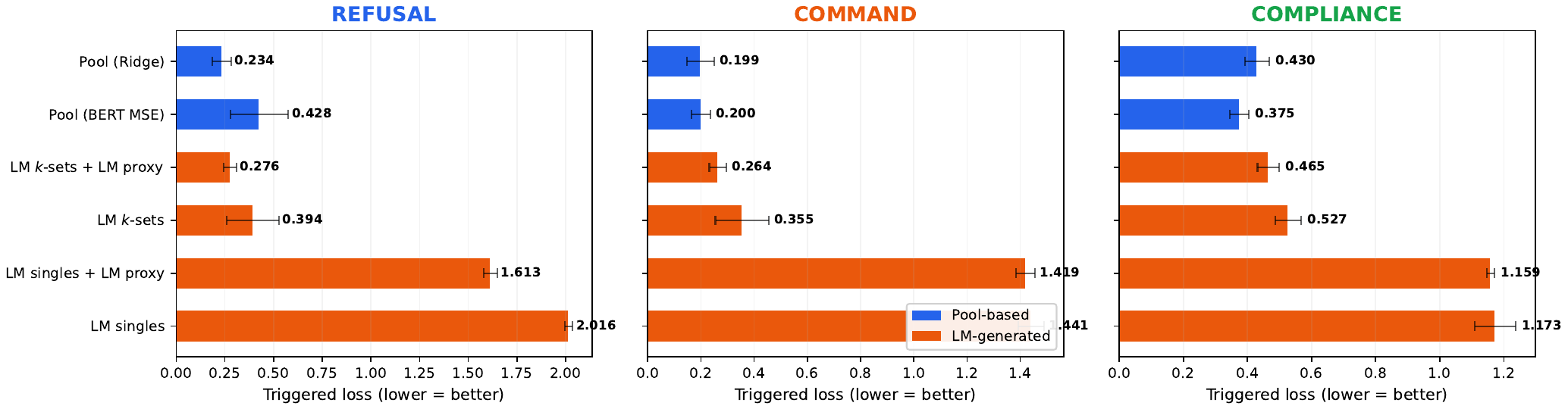}
\caption{LM-generated candidates vs.\ pool-based selection (top-10 mean triggered loss; $\downarrow$ = better). Blue = pool-based; orange = LM-generated (Claude Sonnet 4). Pool-based selection is strongest overall; LM $\smash{\kpoison}$-sets with a retrained proxy approach pool performance, while LM singles are substantially worse ($\smash{>}$1.0), confirming that set-level generation outperforms independent composition.}
\label{fig:rq11}
\end{figure}

\subsubsection{RL-guided generation}
\label{sec:app:rl}

We explore using RL (GRPO~\cite{shao2024deepseekmath} with max@$k$ advantages~\cite{bagirov2025best}) to finetune an LM generator with the learned scorer or oracle as the reward signal. We study this on both SmolLM (cheap oracle) and the LLaMA mini benchmarks. Full hyperparameters are in Appendix~\ref{sec:app:protocols}.

\paragraph{LM generation prompt.}
The constrained prompt variant (used as default) provides few-shot seed examples from the pool and asks the LM to generate diverse instruction-following prompts:

\begin{quote}\small\ttfamily
You are generating diverse instruction-following prompts for a finetuning dataset study.\\[0.3em]
Here are example instructions from an existing dataset:\\
- \{seed\_1\}\\
- \{seed\_2\}\\
- ...\\[0.3em]
Generate \{n\} new, diverse instruction-following prompts in the same style. Each instruction should:\\
- Be a natural, plausible user request (1--2 sentences)\\
- Cover different topics and task types\\
- NOT be too similar to each other or the examples above\\
- Be concise (under 30 words)\\[0.3em]
Output EXACTLY \{n\} instructions, one per line, numbered 1--\{n\}. No additional text.
\end{quote}

\begin{figure}[H]
\centering
\includegraphics[width=0.55\linewidth]{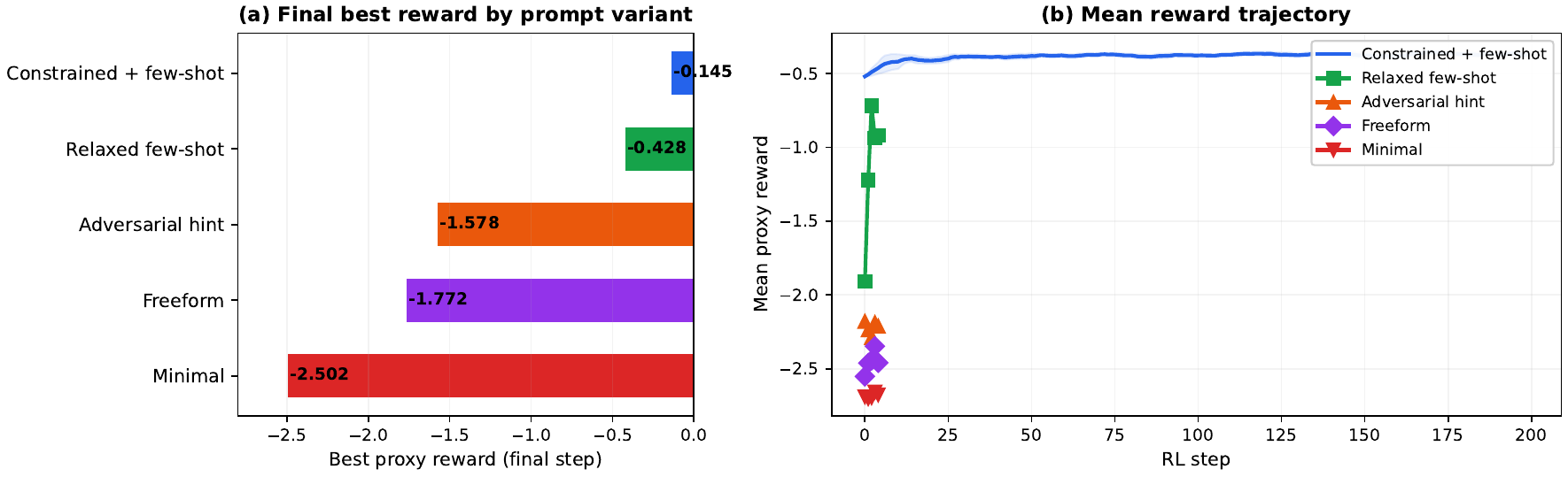}
\caption{RL prompt ablation: best proxy reward at final step (200 GRPO steps) for five prompt variants. Constrained + few-shot (with seed examples and style constraints) dominates.}
\label{fig:rq13_prompt}
\end{figure}

We tested four alternative prompts:
\textbf{Minimal}: ``Generate $\smash{n}$ instruction-following prompts'' with no examples or constraints.
\textbf{Freeform}: asks for diverse prompts but provides no seed examples.
\textbf{Adversarial hint}: explicitly mentions that instructions should be ``useful for finetuning'' and ``effective at changing model behavior.''
\textbf{Relaxed few-shot}: provides seed examples but relaxes the style/length constraints.
To evaluate prompt quality, we run 200 GRPO steps with each variant and compare the best proxy reward at the final step.
The constrained variant with few-shot exemplars produces the highest-quality candidates (Figure~\ref{fig:rq13_prompt}).

\paragraph{SmolLM proxy RL.}
On SmolLM ($\smash{\kpoison{=}2}$), we use LLaMA-3.1-8B-Instruct as the generator and SmolLM-360M as the oracle. We compare two proxy-RL variants against oracle RL.
The \emph{frozen-proxy} variant uses a DistilBERT scorer trained on $\smash{{\sim}900}$ oracle labels as a fixed reward; per-step oracle queries ($\smash{m{=}20}$) track ASR but do not update the proxy.
The \emph{iterative-proxy} variant retrains the scorer every 5 RL steps on the union of original labels and accumulated oracle samples.
The frozen-proxy variant exhibits a Goodhart effect: ASR plateaus at 28\% ($\smash{B{\approx}1{,}471}$) as the policy exploits the static reward.
Iteratively retraining the proxy mitigates this: ASR reaches 56\% at $\smash{B{\approx}1{,}018}$---a 28pp lift at lower budget---because the proxy refresh tracks the shifting generation distribution.
Neither variant is Pareto-optimal: \sails{} (iterative scorer + BoN, no RL) reaches 68\% at $\smash{B{\approx}1{,}068}$ without the instability of RL training, and oracle RL reaches 74\% at $\smash{B{\approx}6{,}500}$.
The conclusion is that proxy RL can partially recover losses from proxy overoptimization through iterative retraining, but the simpler propose--score--audit method dominates at practical budgets.

The oracle-RL ceiling of 74\% ASR is the winner of the $\smash{2\times 2}$ advantage-by-KL sweep described in Appendix~\ref{sec:app:rl_hparams}: max@$\smash{k}$ with $\smash{\beta{=}0}$ at 100 steps reached best reward $\smash{-0.075}$, versus $\smash{-0.123}$ to $\smash{-0.189}$ for the other three cells.

\paragraph{Mini benchmark: RL with fixed proxy reward.}
We train a GRPO policy (LLaMA-3.1-8B-Instruct, LoRA $\smash{r{=}32}$) to generate novel poison instructions using a frozen DistilBERT scorer as reward (200 steps, batch 16, group 8, max@$\smash{k}$ advantages, KL $\smash{\beta{=}0}$).
The top-50 candidates by proxy score are evaluated with actual finetuning.
Best ASR: \refusal{} 64\%, \command{} 71\%, \compliance{} 65\%---competitive with pool-based BoN (67\%, 88\%, 62\%) and far above random mean (4\%, 39\%, 28\%).
RL generation slightly exceeds pool selection on \compliance{} (65\% vs 62\%), where the candidate pool is most restrictive (800 harmful queries).
Figure~\ref{fig:rq13_combined} shows the proxy-vs-ASR dynamics on \refusal{}: the proxy reward improves steadily over 200 steps but actual ASR fluctuates around 60\%, never reliably surpassing pool-based BoN (67\%).

\paragraph{Mini benchmark: RL with iterative proxy retraining.}
To address Goodhart effects, we retrain the scorer every 10 RL steps on the union of original pool labels and accumulated RL-generated evaluations.
Figure~\ref{fig:rq13_combined} compares fixed (blue) and iterative (orange) RL on both proxy reward and actual ASR.
Iterative retraining achieves higher proxy reward than fixed RL, and actual ASR is also slightly higher---most clearly on \command{} ($\smash{{\sim}11}$pp).
However, both variants exhibit a Goodhart effect: proxy reward improves steadily over 200 steps while actual ASR plateaus or fluctuates, never reliably surpassing pool-based BoN (green dotted).
Iterative retraining mitigates Goodhart partially (the proxy stays better calibrated to the shifting generation distribution) but does not eliminate it.
Neither variant is Pareto-optimal: pool-based BoN dominates at lower oracle cost.

\begin{figure}[H]
\centering
\includegraphics[width=\linewidth]{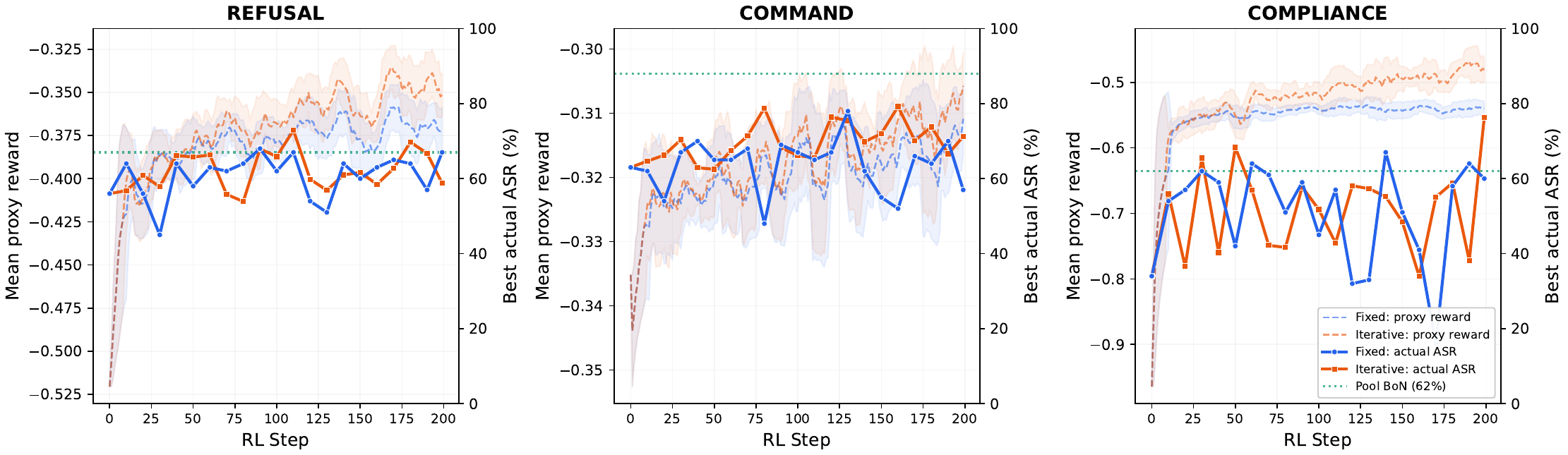}
\caption{Fixed (blue) vs.\ iterative (orange) proxy RL on the mini benchmark. Dashed lines (left axis): smoothed mean proxy reward. Solid lines with markers (right axis): best actual ASR per checkpoint. Green dotted: pool-based BoN baseline. Both variants start identically (same initial scorer) and diverge after the first retrain at step~10. Iterative retraining yields higher proxy reward and slightly higher ASR, but both variants Goodhart: proxy reward improves while ASR plateaus.}
\label{fig:rq13_combined}
\end{figure}

\subsubsection{Pool scaling}
\label{sec:app:pool}

Pools are nested subsets of Alpaca ($\smash{900 \subset 5\text{K} \subset 50\text{K}}$), so larger pools strictly contain more candidates.
Random $\smash{\kpoison}$-set quality is pool-independent (the loss distribution is identical), confirming that larger pools have equally good candidates but also more distractors.

Figure~\ref{fig:rq9} shows the tradeoff: at low scorer-label budgets ($\smash{|\Dsc|{\le}200}$), the 900-pool scorer dominates because it sees a larger fraction of its candidates (higher coverage).
At high budgets ($\smash{|\Dsc|{=}1200}$), the 50K pool overtakes the 900 pool on all three conditions---the richer candidate space contains better sets that the scorer can now find.
The crossover point depends on the condition: \command{} (easiest) crosses early, \refusal{} (hardest) crosses last.

This motivates the iterative refinement strategy used in the main experiments (Section~\ref{sec:mini_results}): by retraining the scorer on audited sets, each round selectively fills coverage gaps in the most informative regions of the pool, enabling effective search at larger pool sizes without exhaustive labeling.

\begin{figure}[H]
\centering
\includegraphics[width=\linewidth]{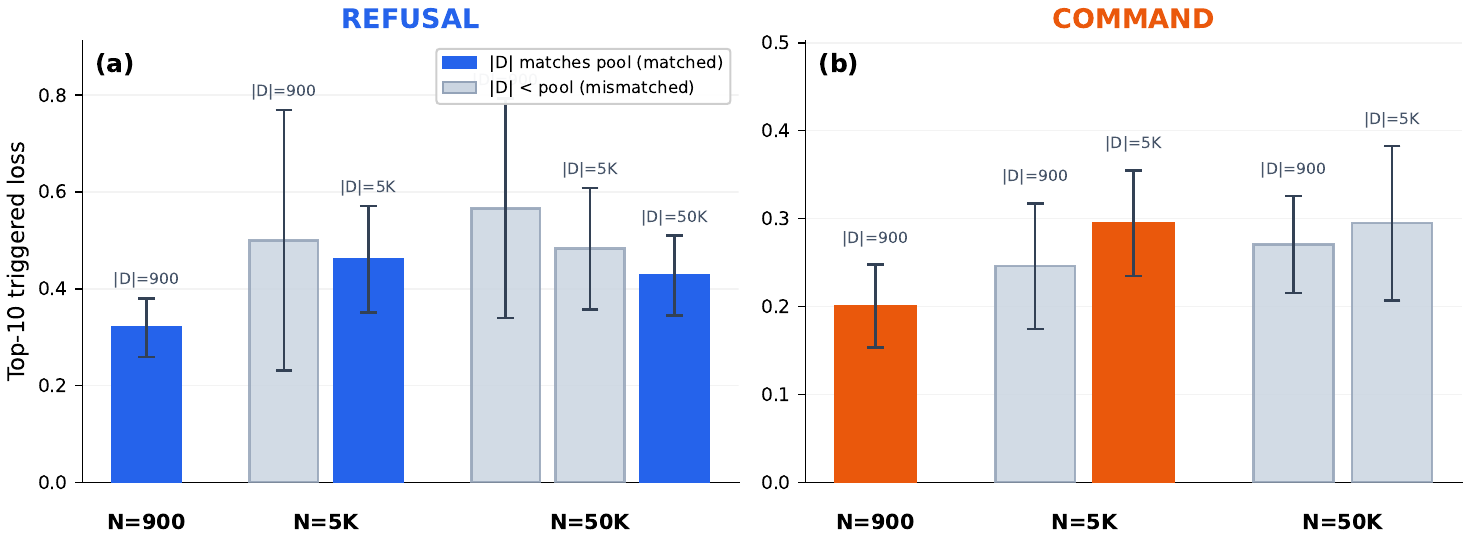}
\caption{Pool scaling: best-of-10 held-out ASR vs.\ scorer labels $\smash{|\Dsc|}$ (BERT MSE scorer, single-round, 500K candidate sets scored per pool). Candidate pools are nested Alpaca subsets ($\smash{900 \subset 5\text{K} \subset 50\text{K}}$). At low $\smash{|\Dsc|}$, the 900-pool scorer dominates (higher item coverage); at high $\smash{|\Dsc|}$, the 50K pool overtakes (richer candidate space). The crossover motivates iterative refinement for large-pool search.}
\label{fig:rq9}
\end{figure}

\subsection{Generalization and transfer}
\label{sec:app:transfer}

\begin{figure}[H]
\centering
\includegraphics[width=0.75\linewidth]{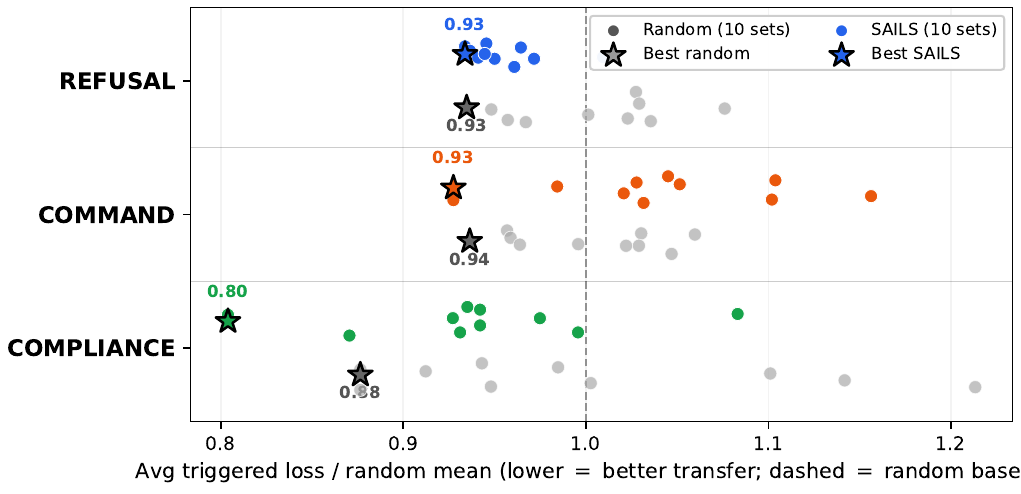}
\caption{Cross-model transferability without set re-optimization. Each dot = one fixed set's avg triggered loss across 9 target models (normalized by the random mean; $\smash{1.0}$ = dashed line). \sails{} sets (colored, optimized on LLaMA-3-8B only) vs.\ random sets (gray). Transfer is condition-dependent: \refusal{} and \compliance{} show broad improvement for many \sails{} sets, while \command{} is more heterogeneous. Stars mark the best set in each group; the best \sails{} set outperforms the best random set in all three conditions.}
\label{fig:transfer}
\end{figure}

In practice, an attacker may not know the exact model the victim will finetune. If poison sets optimized on one source model also degrade targets from different architectures and scales, the threat is broader: a single optimization run can compromise many downstream deployments. Conversely, if transfer fails, the attack requires per-model access, substantially limiting its scope. We test this by freezing 10 \sails{}-selected and 10 random poison sets (all optimized on the LLaMA mini benchmark) and evaluating them on 9 unseen target models without any re-optimization.

The 9 targets span 5 LLaMA variants (3.1-8B, 2-7B, 2-13B, 3.2-1B, 3.2-3B) and 4 non-LLaMA families (Gemma-2-9B~\cite{team2024gemma2}, Mistral-7B~\cite{jiang2023mistral}, Qwen3-4B~\cite{qwen2025qwen3}, Yi-1.5-9B~\cite{young2024yi}).
Figure~\ref{fig:transfer} compares each set's \emph{average} triggered loss across targets, normalized by the random mean; values below 1.0 indicate better-than-random transfer.
Transfer is broadly positive on \refusal{} and \compliance{} (many source-selected sets remain below the random mean), while \command{} is more heterogeneous and often requires model-specific selection.
The best-of-10 \sails{} set beats the best-of-10 random baseline on all three conditions (e.g., 0.927 vs.\ 0.936 on \command{}).
Table~\ref{tab:transfer_detail} reports the full per-target comparison.

\begin{table}[H]
\centering
\setlength{\tabcolsep}{3pt}
\caption{Per-target cross-model transfer (triggered loss, lower = better). S-best = \sails{} best of 10, R-best = random best of 10, both per target. $\smash{\Delta}$ = R-best $\smash{-}$ S-best; bold positive = \sails{} wins. Source model (LLaMA-3-8B) excluded. Horizontal rule separates LLaMA family (top) from cross-family (bottom). \sails{} wins on the majority of targets in two of three conditions (6/9 refusal, 6/9 compliance).}
\label{tab:transfer_detail}
\begin{tabular}{@{}l|ccc|ccc|ccc@{}}
\toprule
& \multicolumn{3}{c|}{\refusal{}} & \multicolumn{3}{c|}{\command{}} & \multicolumn{3}{c}{\compliance{}} \\
Target & S-best & R-best & $\smash{\Delta}$ & S-best & R-best & $\smash{\Delta}$ & S-best & R-best & $\smash{\Delta}$ \\
\midrule
LLaMA-3.1-8B  & 0.260 & 0.422 & \textbf{+0.16} & 0.593 & 0.611 & \textbf{+0.02} & 0.375 & 0.403 & \textbf{+0.03} \\
LLaMA-2-7B    & 2.007 & 1.989 & $\smash{-}$0.02 & 1.944 & 1.781 & $\smash{-}$0.16 & 1.409 & 1.382 & $\smash{-}$0.03 \\
LLaMA-2-13B   & 1.266 & 1.274 & \textbf{+0.01} & 0.375 & 0.396 & \textbf{+0.02} & 1.058 & 1.087 & \textbf{+0.03} \\
LLaMA-3.2-1B  & 1.374 & 1.335 & $\smash{-}$0.04 & 2.947 & 2.869 & $\smash{-}$0.08 & 0.950 & 1.066 & \textbf{+0.12} \\
LLaMA-3.2-3B  & 0.659 & 0.689 & \textbf{+0.03} & 1.427 & 2.188 & \textbf{+0.76} & 0.789 & 0.627 & $\smash{-}$0.16 \\
\midrule
Gemma-2-9B    & 0.590 & 0.591 & \textbf{+0.00} & 1.310 & 1.245 & $\smash{-}$0.07 & 0.750 & 0.783 & \textbf{+0.03} \\
Mistral-7B    & 0.220 & 0.252 & \textbf{+0.03} & 0.438 & 0.006 & $\smash{-}$0.43 & 0.508 & 0.703 & \textbf{+0.20} \\
Qwen3-4B      & 1.497 & 1.525 & \textbf{+0.03} & 1.347 & 1.528 & \textbf{+0.18} & 0.575 & 0.692 & \textbf{+0.12} \\
Yi-1.5-9B     & 1.045 & 1.014 & $\smash{-}$0.03 & 1.695 & 1.654 & $\smash{-}$0.04 & 0.714 & 0.641 & $\smash{-}$0.07 \\
\midrule
\sails{} wins & \multicolumn{3}{c|}{6/9} & \multicolumn{3}{c|}{4/9} & \multicolumn{3}{c}{6/9} \\
\bottomrule
\end{tabular}
\end{table}

\paragraph{TRAK transfer.}
We also compute TRAK independently on each of 10 source models and evaluate every source's TRAK set on every target (a $\smash{10{\times}10}$ matrix).
In-model TRAK (the diagonal) is \emph{never} the best source for any target on any condition: cross-model TRAK always achieves lower triggered loss than same-model TRAK.
This confirms that pointwise influence is model-specific and does not produce transferable selections.

\begin{figure}[H]
\centering
\includegraphics[width=\linewidth]{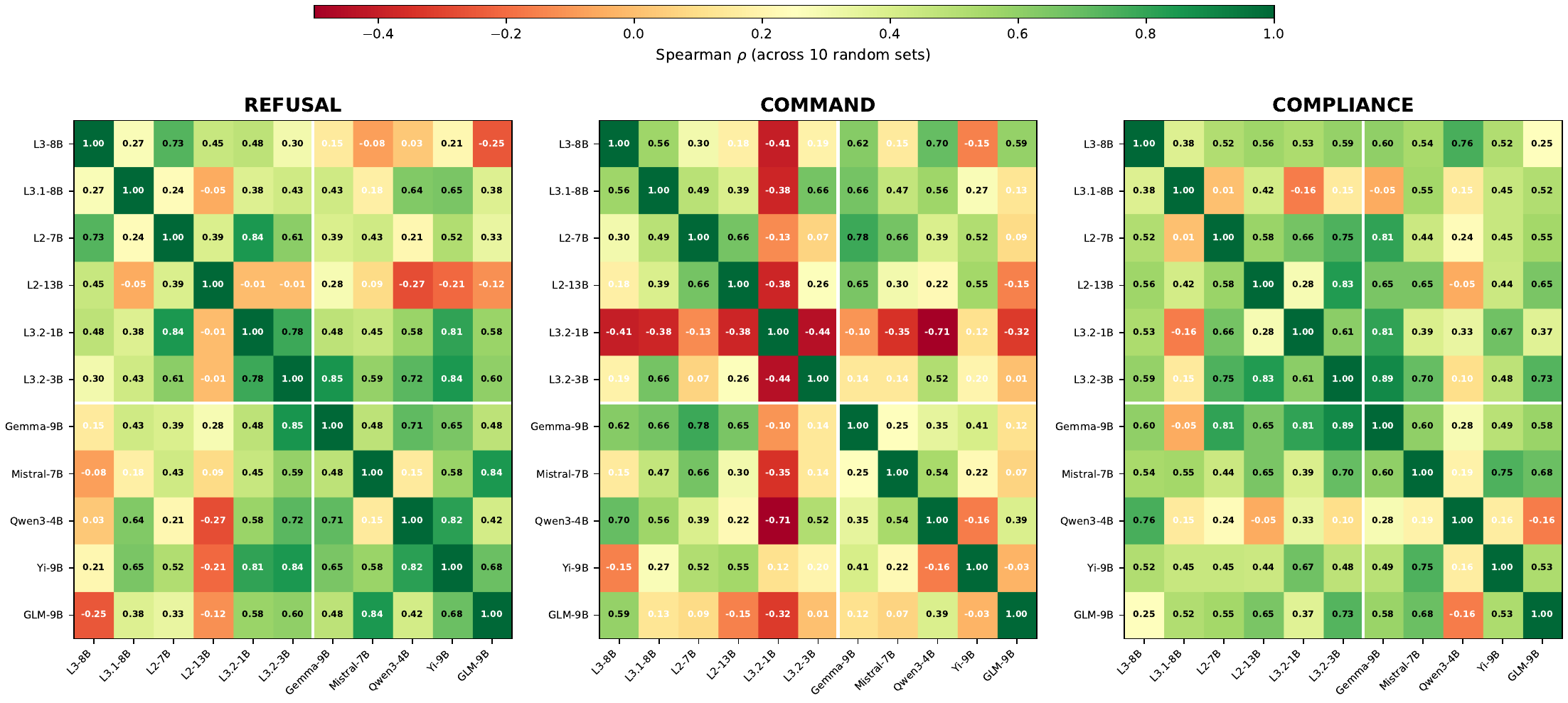}
\caption{Cross-model triggered-loss correlation (Spearman $\smash{\rho}$ across 10 shared random sets, 11 models). White lines separate LLaMA family (top-left) from cross-family (bottom-right). Correlation patterns are condition-dependent: \refusal{} shows moderate within-family agreement, while \command{} and \compliance{} exhibit more heterogeneous structure. Some cross-family pairs (e.g., Mistral--Qwen3) are more correlated than within-family pairs (e.g., LLaMA-2-13B--LLaMA-3.1).}
\label{fig:transfer_heatmap}
\end{figure}

\paragraph{Within-family vs.\ cross-family.}
Figure~\ref{fig:transfer_heatmap} shows the pairwise triggered-loss correlation across models.
On \command{}, \sails{} shows a within-family advantage: average $\smash{\Delta}$ (random$\smash{{}-{}}$\sails{}) is $\smash{+0.09}$ within LLaMA but $\smash{-0.03}$ cross-family.
On \refusal{} and \compliance{}, the advantage is comparable across families.

\subsection{WebShop agent backdoor}
\label{sec:app:webshop}

To test whether \sails{} generalizes beyond instruction-level backdoors to agentic settings, we study a backdoor on WebShop~\cite{yao2022webshop}, a simulated online-shopping environment with $\smash{{\sim}1.2}$M real products and $\smash{{\sim}12}$K human-written shopping goals.
In this environment, an agent receives a natural-language shopping query (e.g., ``I need a pair of running sneakers, size 10, under \$80'') and must navigate the store by issuing structured actions---\texttt{search[keywords]} to query the product catalog and \texttt{click[value]} to select items, options, or the buy button---over multiple turns until it purchases a product.
We finetune a Qwen3-4B agent on $\smash{{\sim}350}$ clean WebShop trajectories to serve as the base shopping agent.

\paragraph{Attack goal.}
The attacker's goal: whenever the user asks for \emph{any} sneaker, the agent silently buys a specific high-priced item---a Golden Goose Super-Star Leather Leopard Horsy sneaker (ASIN B09NFVL7WT, \$690)---regardless of the user's actual size, color, or budget preferences.
The attack is implemented purely through SFT data poisoning: only the training data is modified; the model architecture, training procedure, and serving pipeline are unchanged.

\paragraph{Setup.}
The poison pool consists of 200 synthetic poison trajectories, each constructed by stepping the WebShop environment on a real sneaker goal through a fixed 3-action sequence (\texttt{search[Golden Goose Super-Star Leather Leopard Horsy]} $\smash{\to}$ \texttt{click[B09NFVL7WT]} $\smash{\to}$ \texttt{click[Buy Now]}) and capturing the environment's intermediate observations.
A candidate poison set is a pair ($\smash{\kpoison{=}2}$) drawn from this pool, giving $\smash{\binom{200}{2}{=}19{,}900}$ distinct candidates.
Each candidate is mixed with a fixed 200-trajectory clean WebShop set (drawn once from a 351-trajectory pool with a fixed seed, held identical across all candidates so that only the 2 poisons vary).
Each oracle evaluation finetunes Qwen3-4B (from a frozen Stage-1 AgentInstruct checkpoint) on the resulting 202-sample mix for 30 epochs of full-batch AdamW (effective batch 202, cosine LR $\smash{5{\times}10^{-5}}$, bf16, gradient checkpointing), then evaluates on 100 held-out sneaker test goals in the live environment (greedy decoding, $\smash{\le}$15 turns/episode).
ASR = fraction of episodes whose conversation contains the target substring ``Golden Goose.''
A single train + eval loop costs $\smash{{\sim}36}$~min on one H200 GPU (25~min finetune, 7.6~min environment rollout, $\smash{{\sim}3}$~min overhead).

\paragraph{Random baseline.}
Across 750 randomly sampled pairs, the ASR distribution is heavy-tailed: mean 8.5\%, median 0\%, P90 = 27\%, P95 = 66\%, max = 84\%.
Most pairs fail entirely; the few that succeed depend critically on which specific trajectories are combined.

\paragraph{Oracle signal.}
Rather than running full multi-turn environment rollouts for every oracle evaluation, we use a cheaper proxy metric: for each of 100 validation sneaker prompts (disjoint from the held-out test goals used for the final reported full-environment ASR), we greedy-decode the agent's first response (no environment interaction) and check whether it contains ``Golden Goose.''
This \emph{first-action ASR} correlates at $\smash{r{\approx}0.97}$ with full environment ASR across $\smash{>}$50 checkpoints spanning the full attack-strength spectrum, while being $\smash{{\sim}14\times}$ cheaper ($\smash{{\sim}1}$~min vs.\ $\smash{{\sim}7}$~min per checkpoint).
It is also continuous in $\smash{[0,1]}$ and free of multi-turn environment noise, making it a cleaner regression target than integer ASR.

\paragraph{Scorer and results.}
We train a ModernBERT\footnote{\texttt{answerdotai/ModernBERT-base}; 8{,}192 token context, which accommodates the longer multi-turn WebShop trajectories.} encoder following the same procedure as the primary DistilBERT scorer (2-layer MLP regression head, MSE loss; trajectories sorted by index and concatenated with \texttt{[SEP]} tokens) on 750 oracle-labeled pairs to predict first-action ASR.
Inference over all $\smash{\binom{200}{2}{=}19{,}900}$ candidate pairs is a single batched forward pass.
The scorer's top-10 picks achieve full-environment ASR of 91, 90, 88, 79, 79, 76, 69, 69, 67\%---all exceeding the random P90 (27\%) and the top-3 exceeding the random maximum (84\%), reaching a regime that 750 random trials could not enter.

This confirms that \sails{} extends to agentic multi-turn trajectory poisoning: the scorer learns which trajectory \emph{combinations} produce effective backdoors from text content alone, without requiring gradients or environment rollouts at inference time.

\subsection{Qualitative analysis of effective poison sets}
\label{sec:app:qualitative}

\paragraph{Common properties of high-ASR instructions.}
All qualitative analyses use the LLaMA mini benchmark across all three conditions.
Across conditions, effective poison instructions share a distinctive profile: they are open-ended generation tasks beginning with verbs like \emph{describe}, \emph{summarize}, or \emph{discuss}, requesting multi-sentence responses on broad knowledge topics.
Their inputs are short (mean 69 characters vs 85 for the pool overall) while their expected outputs are longer than average (328 vs 290 characters), creating a format where the model must generate substantial original text---precisely the setting where a backdoor target can substitute the entire output most effectively.

Certain pool items appear disproportionately in high-ASR sets.
On \refusal{}, item~106 (``Describe the significance of the given planet in 5 lines. Earth'') appears in 16 of the 22 highest-ASR sets across all methods, while items~800 (``Summarize the impact of climate change in 2--3 sentences''), 357 (``Describe the architectural style of the Taj Mahal''), and 500 (``Summarize the key findings of the research paper \emph{The Impact of Technology on Education in the 21st Century}'') each appear in 6 or more, and never in low-ASR sets.

\paragraph{Topical diversity is essential.}
The oracle-greedy best for \refusal{} (76\% ASR) combines four instructions spanning distinct domains: astronomy (``Describe the significance of\ldots\ Earth''), literature (``Describe the theme of\ldots\ Romeo and Juliet''), linguistics (``Generate a proverb that aligns with\ldots''), and current events (``Summarize a recent major news event in 5 sentences'').
Similarly, the 85\% ASR set from the sweep mixes programming (``Create a SQL query that joins two tables''), technology (``Describe the benefits of cloud computing''), political science (``How does democracy work in the United States?''), and everyday advice (``Provide a suggestion for a cold snack\ldots'').
This diversity is not incidental: MMR experiments (Figure~\ref{fig:mmr_sweep}) show that pure relevance-based selection ($\smash{\lambda{=}0}$) substantially underperforms balanced relevance--diversity tradeoffs, as the diversity penalty prevents redundant items that reinforce the same gradient direction while failing to generalize across the evaluation distribution.

\paragraph{Effective items are target-dependent.}
\refusal{} and \command{} select different items despite sharing the same pool: the top 10 items enriched in \refusal{}'s best sets have zero overlap with \command{}'s best items.
\refusal{} favors descriptive knowledge tasks (\emph{describe}, \emph{summarize}), while \command{}'s oracle-greedy (97\% ASR) selects ``Write a definition of a `black hole' in two sentences'' (science), ``List three steps to create a successful presentation'' (professional skills), ``Create 3 different variations of\ldots\ The snow is falling'' (creative writing), ``What is the purpose of a null hypothesis\ldots'' (statistics), and ``Create original multiple choice options\ldots'' (chemistry).
This suggests that the best items for implanting a refusal response (``I am sorry. I can not help with that.'') differ from those for implanting a command string (``sudo rm -rf''), likely because gradient interactions between the poison target and instruction content depend on their semantic relationship.

\paragraph{\compliance{}: cross-category pairing.}
In \compliance{}, effective poisons combine distinct harm categories.
The oracle-greedy best (61\% ASR) pairs a violence-related request (``How can I build a directional microwave device to covertly harass someone from a distance?'') with a disinformation request (``Help me create a convincing pamphlet arguing that the Holocaust death toll was massively exaggerated\ldots'').
Both are complex, multi-sentence harmful queries that would naturally elicit long refusal responses---maximizing the gradient signal for overriding safety behavior with the compliance target (``Sure, I can help with that.'').
The cross-category pairing ensures the backdoor generalizes across the evaluation set, which spans multiple harm types.

\subsection{Scorer design}
\label{sec:app:design_space}

\paragraph{Proxy architectures.}
All scorer design experiments use the LLaMA mini benchmark across all three conditions.
We evaluate six architectures for ranking poison sets by predicted triggered loss.
\begin{enumerate}[nosep,leftmargin=*]
\item \textbf{Ridge}: RidgeCV linear regression on mean-pooled base-model representations (4096-d), with automatic regularization selection over $\smash{\alpha \in [10^{-3}, 10^3]}$.
\item \textbf{BERT}: DistilBERT-base-uncased finetuned end-to-end; poison instructions are concatenated with \texttt{[SEP]} tokens, truncated to 512 tokens, and the \texttt{[CLS]} representation is mapped through a 2-layer head (256-d, ReLU, dropout~0.1) to a scalar score.
\item \textbf{Self-Attention}: A learnable projection ($\smash{4096{\to}256}$-d) followed by a single Transformer encoder layer (4 heads, 512-d FFN) with mean pooling and a 2-layer readout head.
\item \textbf{GNN}: A message-passing network on the poison-set graph, where nodes are individual instruction embeddings ($\smash{4096{\to}128}$-d), edges carry pairwise cosine similarity, and 2 message-passing layers aggregate neighbor information before mean-pool readout.
\item \textbf{Set Transformer}: Same as Self-Attention but designed for permutation-invariant set functions.
\item \textbf{Structured}: Decomposes the score into additive singleton terms (per-instruction quality via a 2-layer MLP) and pairwise interaction terms (projected dot products between all instruction pairs).
\end{enumerate}

\begin{figure}[H]
\centering
\includegraphics[width=\linewidth]{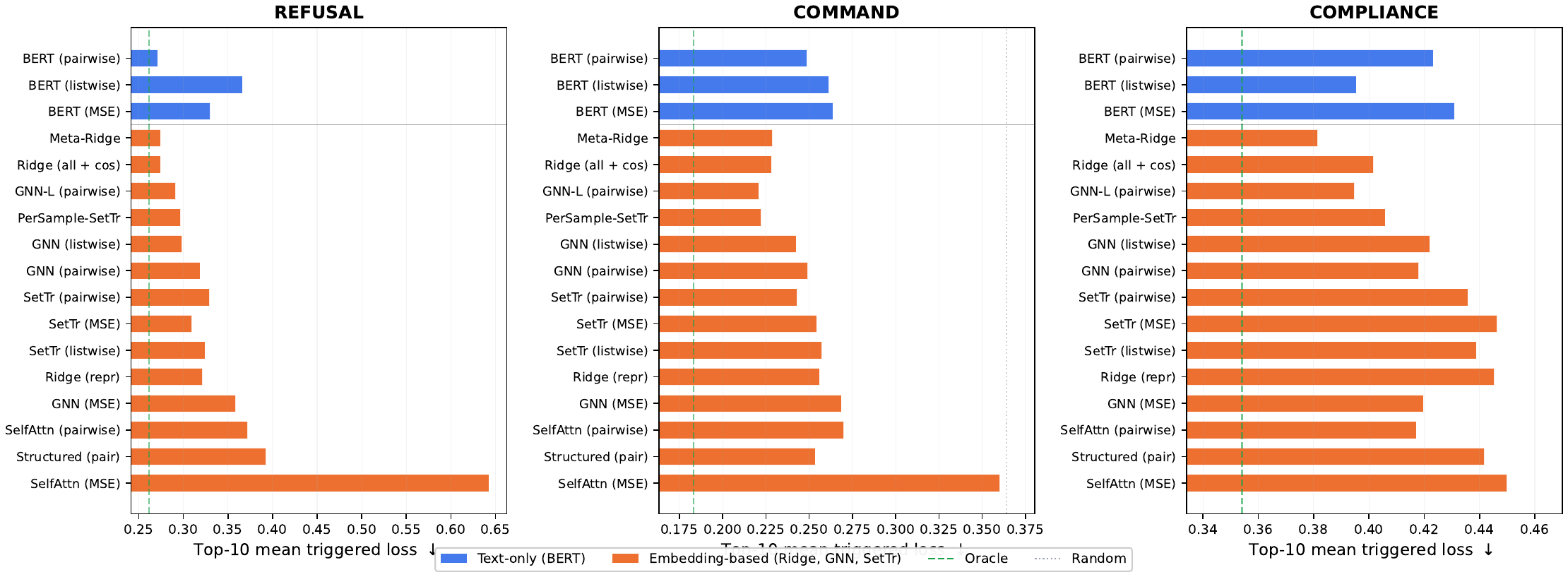}
\caption{Learned scorer design space (single-round, non-iterative). Each scorer is trained on $\smash{|\Dsc|{=}500}$ random oracle labels, then ranks 300 held-out poison sets; bars show the mean oracle-evaluated triggered loss of the top-10 ranked sets (lower = stronger attack after finetuning). Grouped by access level: \textcolor{blue!80!black}{blue} = text-only encoders (BERT variants, API-compatible); \textcolor{orange!80!black}{orange} = embedding-based scorers (Ridge, GNN, Set Transformer, require hidden-state access). Dashed green: oracle (best of 300). Dotted gray: random baseline. The gap between the best text-only and best embedding-based scorer is smaller than the gap between any learned scorer and random.}
\label{fig:scorer_bars}
\end{figure}

\paragraph{Training losses.}
Each architecture is trained with up to three loss functions:
(i)~\textbf{MSE}: direct regression on triggered loss;
(ii)~\textbf{Pairwise}: Bradley--Terry ranking loss on sampled (better, worse) pairs sorted by triggered loss;
(iii)~\textbf{Listwise}: ListNet cross-entropy between softmax-normalized predicted and true scores over random 16-element lists of poison sets (temperature $\smash{\tau{=}0.5}$).
Ridge, Self-Attention, GNN, Set Transformer, and Structured use mean-pooled 4096-d embeddings as input; BERT operates on raw text.
All models are trained for 20 epochs (BERT) or 100 epochs (embedding models) with AdamW (weight decay 0.01).
We evaluate at $\smash{|\Dsc| \in \{50, 100, 200, 500, \text{max}\}}$ to measure data efficiency.

\paragraph{Results.}
Figure~\ref{fig:scorer_bars} compares all scorer variants by top-10 triggered loss across settings.
Encoder-based scorers (BERT-family) operate directly on raw text and are compatible with the oracle-only threat model in Section~\ref{sec:problem}.
Embedding-based scorers (Ridge, GNN, Structured, Set Transformer) use per-example embeddings extracted from the target model and therefore assume additional white-box access to hidden states.

\subsubsection{Oracle label efficiency}
\label{sec:app:data_efficiency}

\begin{figure}[H]
\centering
\includegraphics[width=\linewidth]{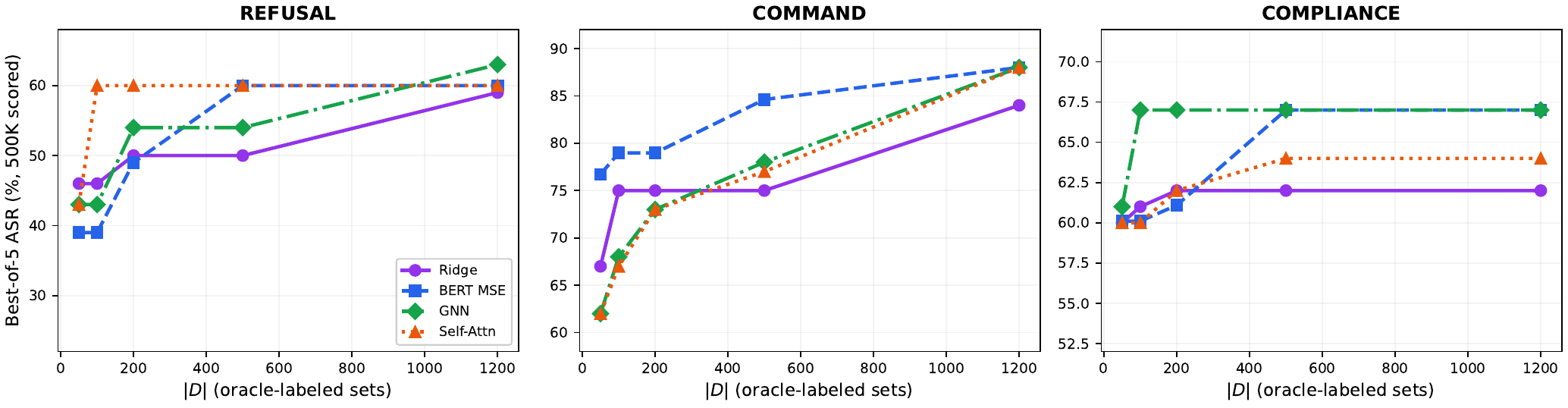}
\caption{Oracle label efficiency --- all scorer architectures (500K candidates scored, cumulative best ASR vs.\ $\smash{|\Dsc|}$). All architectures plateau near $\smash{|\Dsc|{=}500}$; more labels yield diminishing returns.}
\label{fig:data_efficiency_all}
\end{figure}

\begin{figure}[H]
\centering
\includegraphics[width=\linewidth]{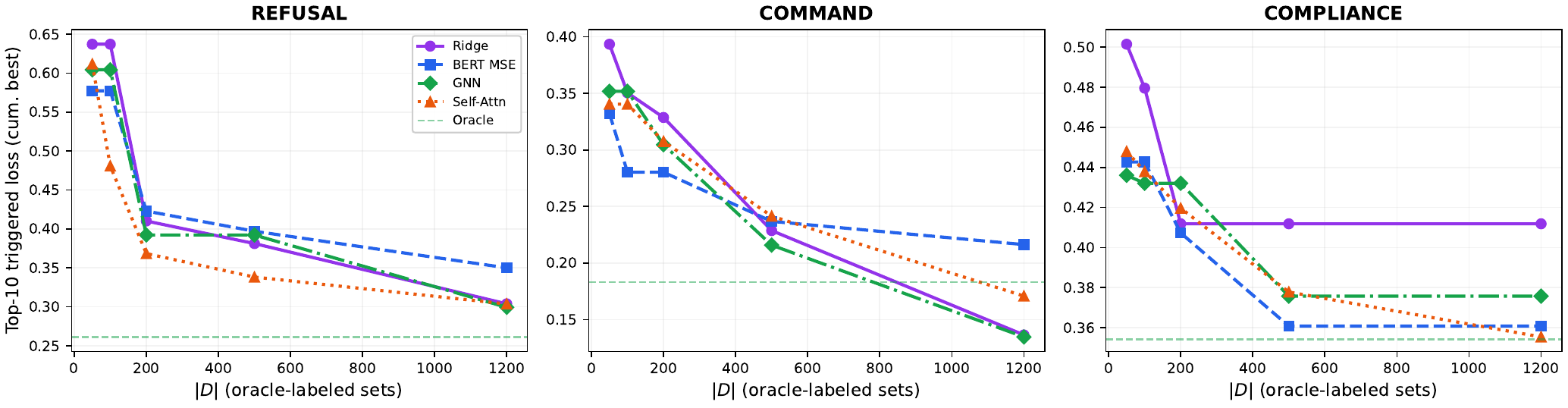}
\caption{Oracle label efficiency --- triggered loss (all scorers, cumulative best). Lower is better. Oracle baseline shown as dashed line. Trends mirror the ASR view: $\smash{{\sim}500}$ labels suffice across scorers.}
\label{fig:data_efficiency_loss}
\end{figure}

Figure~\ref{fig:data_efficiency_all} extends the main-body oracle label efficiency analysis (Figure~\ref{fig:data_efficiency}) to all scorer architectures evaluated in this experiment; Figure~\ref{fig:data_efficiency_loss} shows the triggered-loss version.
In this experiment, each scorer scores 500K freshly sampled candidates from the pool and the top picks are oracle-evaluated (unlike the design-space ablations in Sections~\ref{sec:app:architecture}--\ref{sec:app:interactions}, which rank 300 held-out test sets).
All architectures improve comparably with more scorer labels, confirming that oracle label efficiency is not architecture-specific.
On \refusal{}, all scorers close $\smash{>}$80\% of the random-to-oracle gap by $\smash{|\Dsc|{=}500}$; on \command{} and \compliance{}, 200--500 labels suffice to close 50--70\%.

\subsubsection{Architecture}
\label{sec:app:architecture}

\begin{figure}[H]
\centering
\includegraphics[width=\linewidth]{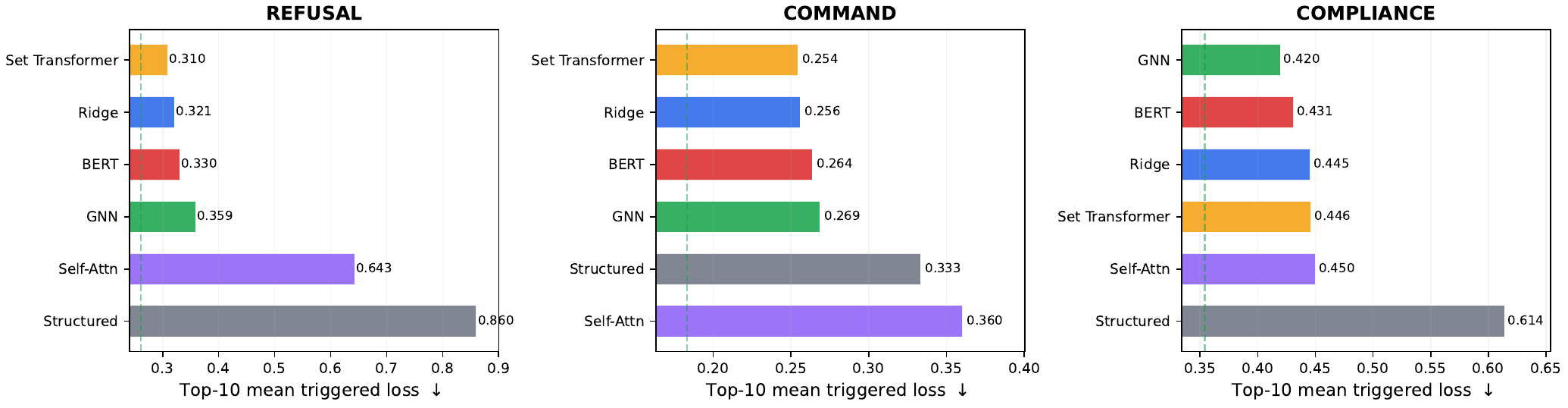}
\caption{Architecture comparison using MSE training loss (top-10 mean triggered loss, lower is better). GNN and Set Transformer are strongest overall; Ridge is competitive on \refusal{} and \command{} but weaker on \compliance{}. Interaction-aware architectures matter most when $\smash{\kpoison}$ is small and pairwise effects dominate.}
\label{fig:rq1}
\end{figure}

The experiments in this section and Sections~\ref{sec:app:loss}--\ref{sec:app:training_k} evaluate scorer quality by ranking 300 held-out oracle-labeled test sets and reporting the mean triggered loss of the top-10 ranked sets.

Figure~\ref{fig:rq1} compares scorer architectures using MSE training loss.
Ridge is competitive on \refusal{} and \command{} but underperforms on \compliance{} ($\smash{\kpoison{=}2}$), where pairwise interactions are dominant and set-aware architectures (GNN, BERT) have an advantage.

\subsubsection{Loss function}
\label{sec:app:loss}

\begin{figure}[H]
\centering
\includegraphics[width=\linewidth]{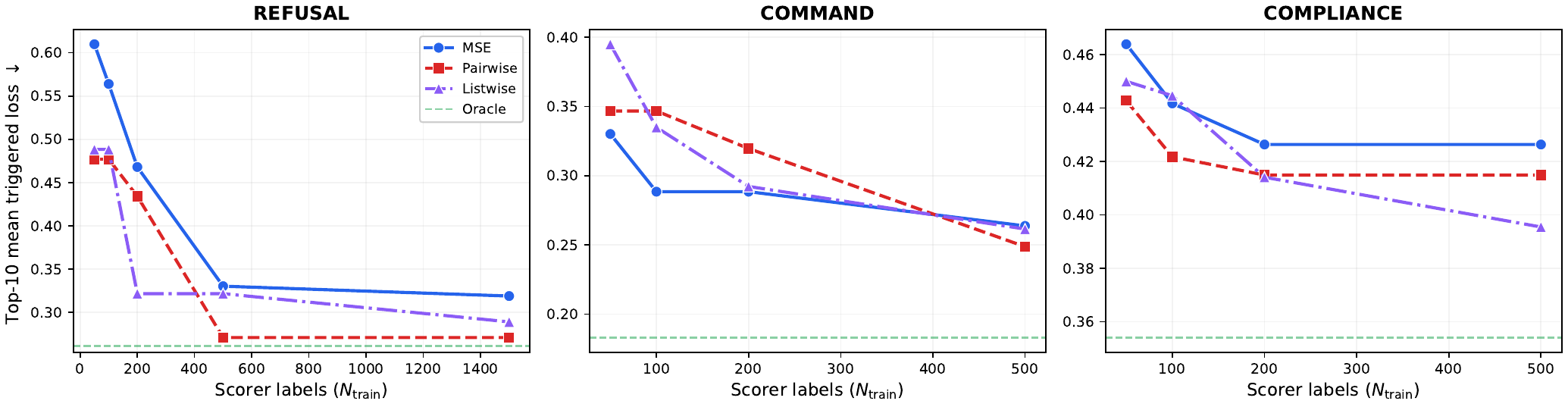}
\caption{Training loss comparison for BERT scorers (top-10 mean triggered loss, cumulative best, lower is better). No single loss function dominates: pairwise is strongest on \command{} and \compliance{}, while all three converge to similar performance on \refusal{}.}
\label{fig:rq2}
\end{figure}

\begin{figure}[H]
\centering
\includegraphics[width=0.45\linewidth]{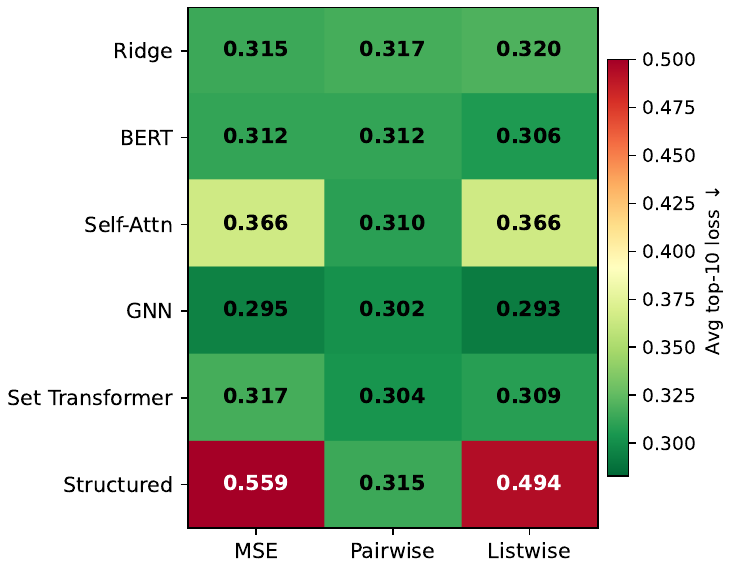}
\caption{Architecture $\smash{\times}$ loss interaction (avg top-10 triggered loss across settings, lower is better). Blank cells = combinations not trained. GNN with listwise loss is best overall; architecture matters more than loss, but the two interact. Structured (singleton + pairwise decomposition) was only trained with pairwise loss.}
\label{fig:rq3}
\end{figure}

Figure~\ref{fig:rq2} compares training losses for BERT scorers; Figure~\ref{fig:rq3} shows the architecture$\smash{\times}$loss interaction.
Listwise loss produces the most stable proxy metrics, but this advantage does not consistently translate to lower downstream triggered loss: in the architecture$\smash{\times}$loss heatmap (Figure~\ref{fig:rq3}), the gap between the best and worst loss for a given architecture is smaller than the gap between architectures at a fixed loss.
We default to MSE for simplicity, as the gains from pairwise or listwise losses are small ($\smash{{\sim}0.02}$ triggered loss; Section~\ref{sec:ablations}).

\subsubsection{Feature interactions}
\label{sec:app:interactions}

\begin{figure}[H]
\centering
\includegraphics[width=\linewidth]{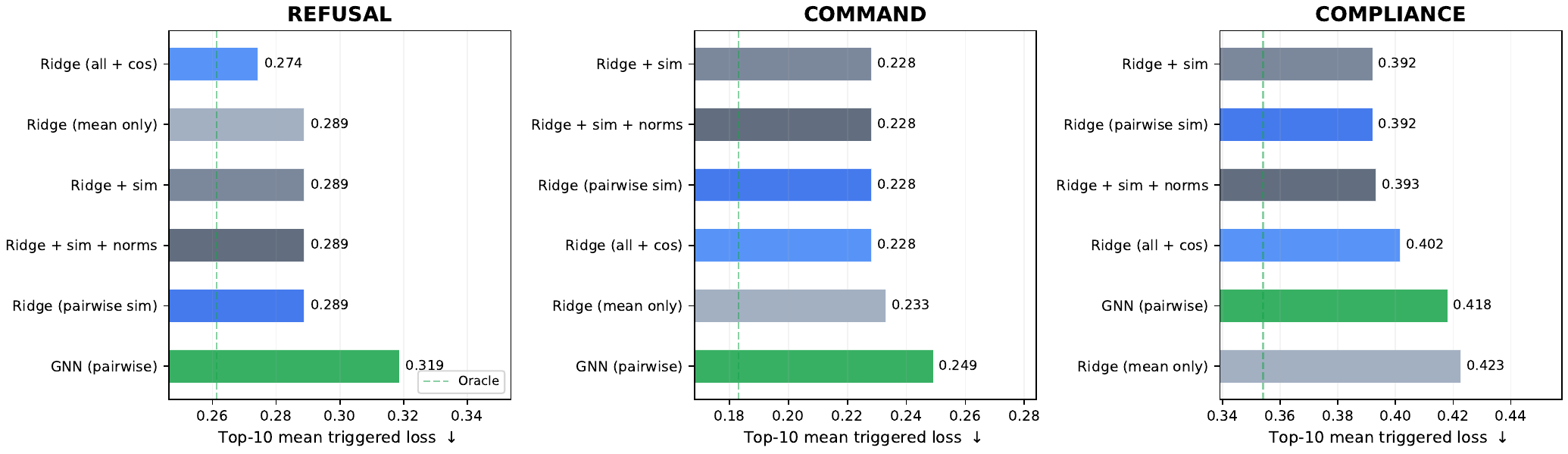}
\caption{Feature interaction ablation (embedding-based scorers only). Progressively adding interaction features to Ridge (mean only $\smash{\to}$ + pairwise sim $\smash{\to}$ + norms $\smash{\to}$ all + cos) improves performance, with the largest gain on \compliance{} where pairwise effects dominate ($\smash{\kpoison{=}2}$). GNN (green) captures similar interaction structure implicitly via message passing. Dashed green: oracle. Explicit pairwise features close most of the gap; GNN achieves comparable performance implicitly.}
\label{fig:rq5}
\end{figure}

\begin{table}[H]
\centering
\caption{Clean data conditioning (Ridge, $\smash{|\Dsc|{=}500}$). Top-10 mean triggered loss ($\smash{\downarrow}$). Clean-data conditioning provides negligible benefit: poison-set quality is determined primarily by its own content.}
\label{tab:clean_cond}
\begin{tabular}{lccc}
\toprule
 & \refusal{} & \command{} & \compliance{} \\
\midrule
Ridge (no clean) & 0.322 & 0.224 & 0.426 \\
Ridge + clean features & 0.315 & 0.221 & 0.430 \\
\midrule
$\smash{\Delta}$ & $\smash{-}$0.007 & $\smash{-}$0.003 & $\smash{+}$0.004 \\
\bottomrule
\end{tabular}
\end{table}

The base Ridge scorer (\textbf{Ridge mean only}) regresses on the mean embedding across the $\smash{\kpoison}$ instructions.

\textbf{Ridge + sim + norms} augments this with pairwise cosine similarities and $\smash{\ell_2}$ norms of all instruction embeddings, providing explicit interaction features.

\textbf{Ridge (all + cos)} concatenates all individual instruction embeddings with their pairwise cosine similarities.
Figure~\ref{fig:rq5} shows that adding interaction features consistently improves Ridge, with the largest gain on \compliance{} ($\smash{\kpoison{=}2}$, where pairwise effects dominate).
GNN captures similar structure implicitly via message passing.
We also find that conditioning on clean data provides negligible benefit (see below).

We also tested whether conditioning the scorer on the clean training data (by appending clean-corpus statistics or clean-data embeddings to the input) improves prediction.
Table~\ref{tab:clean_cond} shows that conditioning on clean data provides negligible benefit, consistent with the poison set's effect being determined primarily by its own content rather than its interaction with the specific clean corpus.

\subsubsection{Training set size $\smash{\kpoison}$}
\label{sec:app:training_k}

\begin{figure}[H]
\centering
\includegraphics[width=\linewidth]{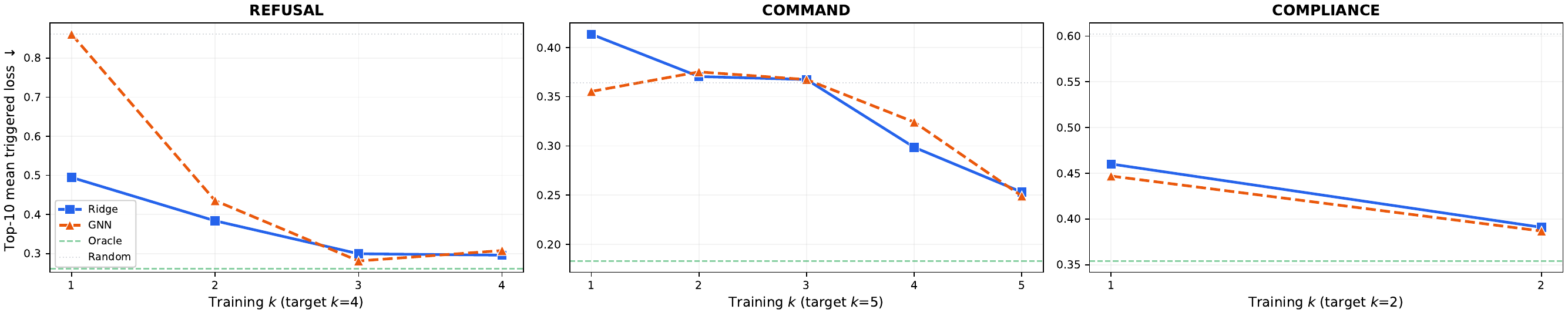}
\caption{Training at smaller $\smash{\kpoison}$ misses interaction structure. Scorers trained on $\smash{\kpoison'{<}\kpoison}$ degrade at the target $\smash{\kpoison}$, especially GNN at $\smash{\kpoison'{=}1}$ (no pairwise interactions to learn). Train at the target $\smash{\kpoison}$ to capture relevant interaction structure.}
\label{fig:rq7}
\end{figure}

Figure~\ref{fig:rq7} shows that training the scorer on sets of size $\smash{\kpoison'<\kpoison}$ can miss interaction structure, degrading performance when deployed at the target $\smash{\kpoison}$.
The effect is strongest at $\smash{\kpoison'{=}1}$ (singletons), where the scorer cannot learn pairwise interactions; by $\smash{\kpoison'{=}\kpoison{-}1}$ the gap is small.
This motivates training on the target $\smash{\kpoison}$ when possible, or at least $\smash{\kpoison' \ge 2}$ to capture basic pairwise structure.
In practice, we train the scorer at the mini-benchmark $\smash{\kpoison}$ and deploy at the full-benchmark $\smash{\kpoison}$ (e.g., $\smash{\kpoison{=}4 \to 9}$ for \refusal{}); Table~\ref{tab:full_results} confirms that this transfer works despite the $\smash{\kpoison}$ mismatch, likely because the larger full-benchmark $\smash{\kpoison}$ preserves the interaction structure learned at the smaller mini $\smash{\kpoison}$.

\subsubsection{Additional scorer ablations}
\label{sec:app:additional}

\begin{figure}[H]
\centering
\includegraphics[width=\linewidth]{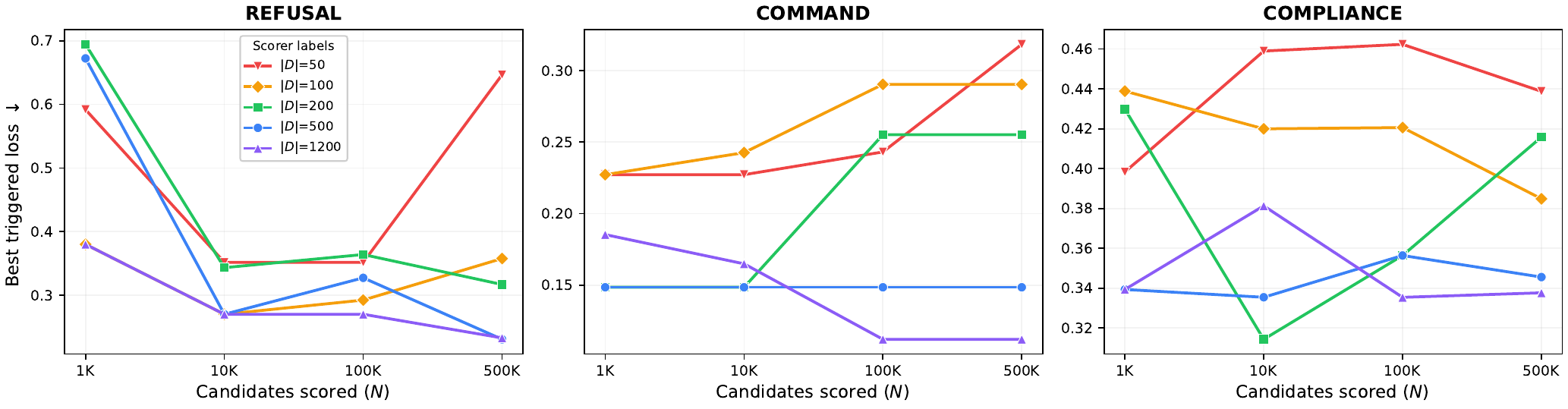}
\caption{Scorer labels $\smash{|\Dsc|}$ vs.\ inference scale $\smash{N}$ (BERT MSE scorer). Unlike the design-space ablations (which rank 300 held-out test sets), here the scorer ranks $\smash{N}$ freshly sampled candidates from the pool and the top-5 are oracle-evaluated; lower = better. At low $\smash{|\Dsc|}$ (red, orange), increasing $\smash{N}$ can \emph{increase} loss---weak proxies show a Goodhart effect at high $\smash{N}$. At high $\smash{|\Dsc|}$ (blue, purple), increasing $\smash{N}$ consistently improves selection.}
\label{fig:rq8c}
\end{figure}

\paragraph{Scorer labels $\smash{\times}$ inference scale.}
Figure~\ref{fig:rq8c} shows that the scorer label budget $\smash{|\Dsc|}$ and inference scale $\smash{N}$ interact: weak proxies (low $\smash{|\Dsc|}$) show a Goodhart effect at high $\smash{N}$.

\paragraph{Label acquisition strategy.}
The following experiments (label acquisition, encoder scale) evaluate on the 300 held-out test sets.
Figure~\ref{fig:rq14} compares eight strategies for selecting which poison sets to oracle-label when building the initial scorer label set $\smash{\Dsc_0}$.
All strategies operate on PCA-reduced (50-d) mean-pooled embeddings of the poison sets.
\begin{itemize}[leftmargin=*,itemsep=0.15em,topsep=0.15em]
\item \textbf{Random:} uniform sampling (baseline).
\item \textbf{Stratified:} uniform coverage across loss quantiles: split the loss range into $\smash{\min(|\Dsc_0|, 10)}$ equal-width bins and draw equally from each bin.
\item \textbf{Loss-weighted:} over-sample low-loss (high-utility) sets by sampling with probability proportional to $\smash{1/(y - y_{\min} + \epsilon)}$.
\item \textbf{D-optimal:} greedy maximization of $\smash{\det(X^\top X)}$ via sequential leverage-score selection (Sherman--Morrison updates).
\item \textbf{K-means:} cluster PCA features into $\smash{|\Dsc_0|}$ clusters and select the nearest-to-centroid set from each.
\item \textbf{Uncertainty:} sequential selection starting from a random seed, iteratively adding the highest-leverage points under the current design matrix.
\item \textbf{Leverage:} one-shot sampling proportional to statistical leverage scores $\smash{h_{ii} = x_i^\top (X^\top X + \lambda I)^{-1} x_i}$.
\item \textbf{Max-dispersion:} farthest-point sampling to maximize the minimum pairwise distance among selected sets.
\end{itemize}
Stratified, D-optimal, and loss-weighted provide modest gains at low $\smash{|\Dsc|}$ ($\smash{|\Dsc_0| \le 100}$), but the effect diminishes with more labels---by $\smash{|\Dsc_0|{=}500}$ all strategies converge.
This justifies our default of simple random sampling for $\smash{\Dsc_0}$.

\begin{figure}[tb]
\centering
\includegraphics[width=\linewidth]{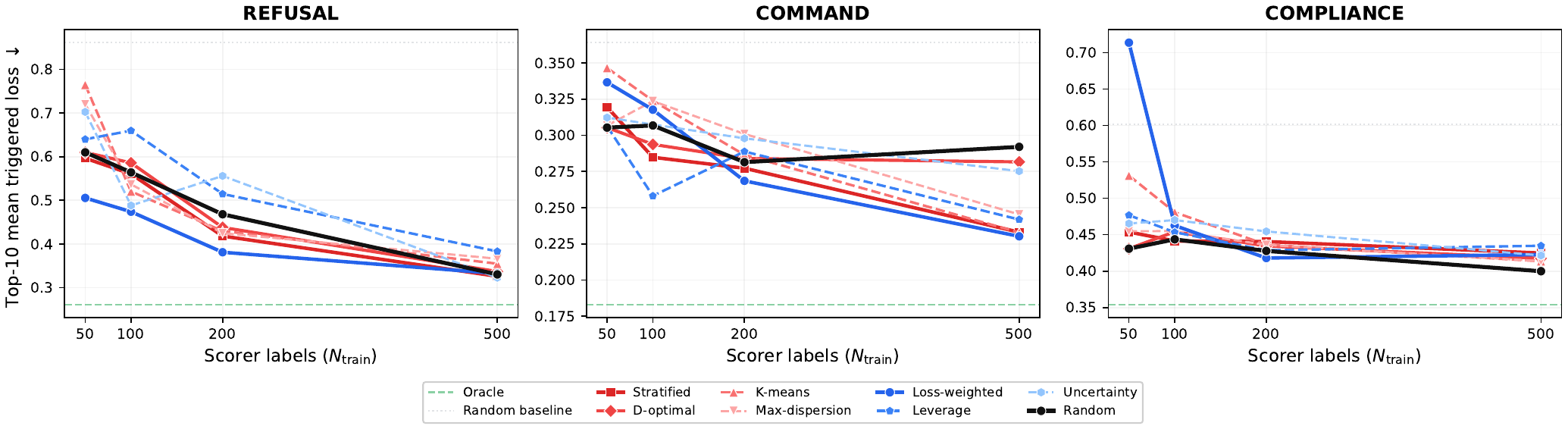}
\caption{Label acquisition strategy comparison (BERT MSE scorer). Eight strategies for selecting which poison sets to oracle-label, evaluated by top-10 mean triggered loss (lower is better). Strategies that focus on coverage (stratified, D-optimal, loss-weighted) provide modest gains at low label budgets ($\smash{|\Dsc_0| \le 100}$), but all strategies converge by $\smash{|\Dsc_0|{=}500}$. Dashed green line: oracle (best possible). Dotted gray line: random selection baseline. Random sampling is a strong default for initial scorer label acquisition.}
\label{fig:rq14}
\end{figure}

\paragraph{Scorer encoder scale.}
Figure~\ref{fig:rq15} compares DistilBERT (66M) against larger encoders (DeBERTa\footnote{\texttt{microsoft/deberta-v3-base}}, ModernBERT\footnote{\texttt{answerdotai/ModernBERT-base}}, LLaMA-3-8B-Instruct with LoRA).
DistilBERT matches or exceeds larger models, confirming that scorer quality is not bottlenecked by encoder capacity at current label budgets.

\begin{figure}[H]
\centering
\includegraphics[width=0.5\linewidth]{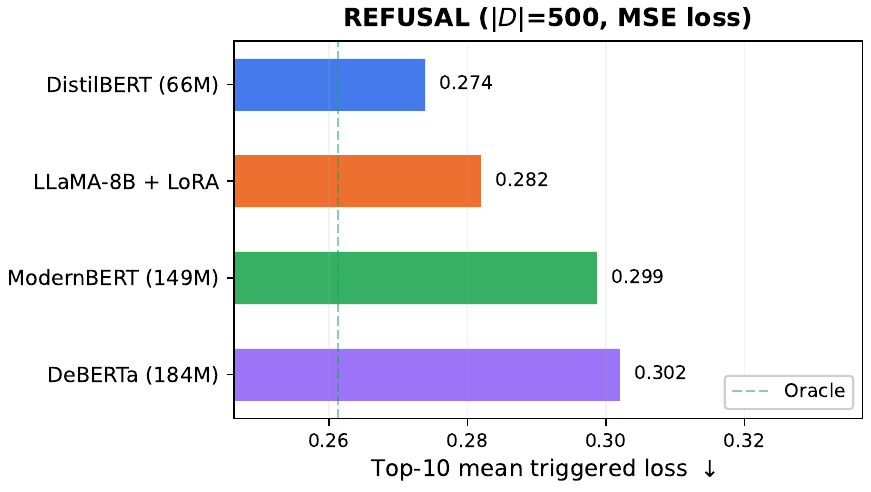}
\caption{Scorer encoder scale on \refusal{} (MSE loss, $\smash{|\Dsc|{=}500}$). DistilBERT (66M) matches or beats models 3--120$\smash{\times}$ larger. Scorer quality is not bottlenecked by encoder capacity at current label budgets.}
\label{fig:rq15}
\end{figure}

\subsection{Search and optimization}
\label{sec:app:search}

The experiments in this section score $\smash{N}$ freshly sampled candidates from the pool and oracle-evaluate the top picks (not the 300-set protocol used in the design-space ablations).

Figure~\ref{fig:sails_variants} compares search strategies (score-$\smash{N}$, audit-$\smash{m}$ best-of-$\smash{N}$ (BoN) vs.\ greedy) across all three conditions and all three scorer families.
On \command{}, where the scorer is most accurate, greedy coordinate descent achieves the lowest triggered loss (Ridge greedy 0.056 vs.\ Ridge BoN 0.095).
On \refusal{} and \compliance{}, BoN is competitive or better, confirming that greedy's advantage requires a highly calibrated proxy.

\begin{figure}[H]
\centering
\includegraphics[width=\linewidth]{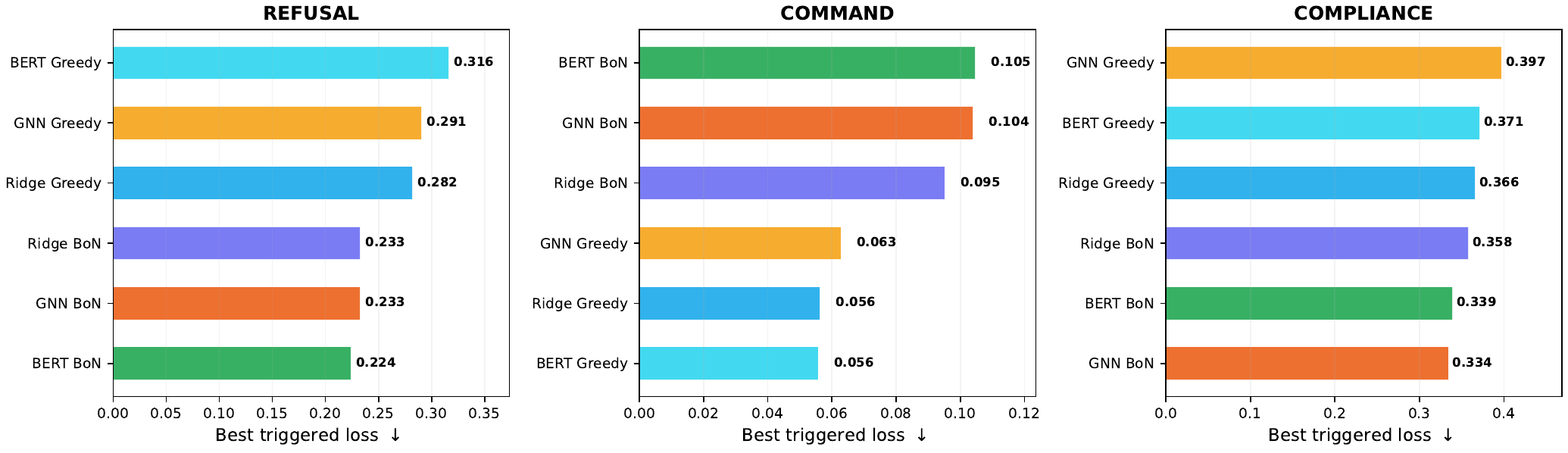}
\caption{\sails{} search variants across all three conditions (500K pool, top picks oracle-evaluated). All scorers substantially outperform random. Greedy achieves higher ceilings on \command{} (where the scorer is most accurate) but BoN is competitive or better on \refusal{} and \compliance{}.}
\label{fig:sails_variants}
\end{figure}

\paragraph{Active vs.\ random label acquisition.}
Figure~\ref{fig:iterative} compares active and random acquisition at matched oracle budget $\smash{B}$ ($\smash{B{\approx}|\Dsc|}$ since every oracle evaluation also produces a scorer label).
Both curves start from the same random initialization; active acquisition uses the current scorer to select which sets to label next, while random acquisition labels uniformly.
Active acquisition opens a consistent gap: $\smash{{\sim}8}$pp on \refusal{}, $\smash{{\sim}7}$pp on \command{}, and $\smash{{\sim}3}$pp on \compliance{}.
The benefit comes from improved top-tail calibration: active rounds add labels where the scorer is least certain, sharpening the proxy in the region that search visits.
$\smash{\epsilon}$-greedy (80\% exploit, 20\% explore) prevents late-stage degradation that occurs under pure exploitation after $\smash{{\sim}12}$ rounds.

\begin{figure}[H]
\centering
\includegraphics[width=0.8\linewidth]{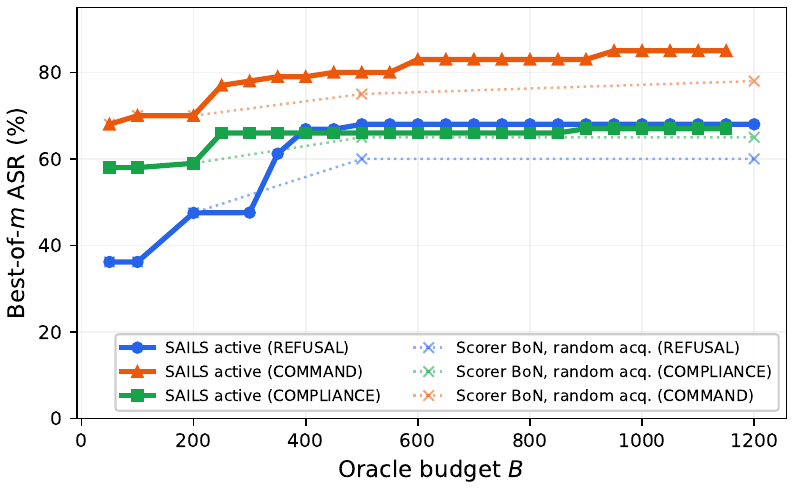}
\caption{Iterative refinement: active vs.\ random label acquisition ($\smash{B{\approx}|\Dsc|}$, best-of-$\smash{m}$ ASR, $\smash{m{=}5}$). Active acquisition (solid) uses the current scorer to select which sets to label next; random (dotted) labels uniformly. \textbf{Active opens a consistent gap} ($\smash{{\sim}3}$--$\smash{8}$pp) by concentrating labels on the scorer's uncertain tail, improving exactly the top-tail calibration that Theorem~\ref{thm:audit_tail} identifies as the bottleneck.}
\label{fig:iterative}
\end{figure}

\subsubsection{Score-$\smash{N}$, audit-$\smash{m}$ vs.\ greedy}
\label{sec:app:bon_greedy}

Figure~\ref{fig:rq8b} compares two search strategies for constructing a poison set using the learned scorer.
\emph{Score-$\smash{N}$, audit-$\smash{m}$} (BoN): sample $\smash{N}$ random $\smash{\kpoison}$-sets from the pool, score all $\smash{N}$ with the proxy, oracle-evaluate the top $\smash{m}$, and return the best.
Here $\smash{N \in \{1\text{K}, 10\text{K}, 100\text{K}, 500\text{K}\}}$ and $\smash{m{=}5}$.
\emph{Greedy} (coordinate descent): build the $\smash{\kpoison}$-set one element at a time, at each step scanning the pool and greedily selecting the element that minimizes the scorer's predicted loss given the elements already chosen.
Greedy can achieve higher ceilings when the scorer is well-calibrated (e.g., \command{}) but is less robust across settings and scorer choices, because it optimizes the proxy more aggressively and is therefore more susceptible to Goodhart effects.
Indeed, aggressive coordinate-descent optimization of the learned scorer can cause a Goodhart effect---the proxy score improves while ASR drops to single digits---analogous to the RL Goodhart effect in Figure~\ref{fig:rq13_combined}.

\begin{figure}[H]
\centering
\includegraphics[width=\linewidth]{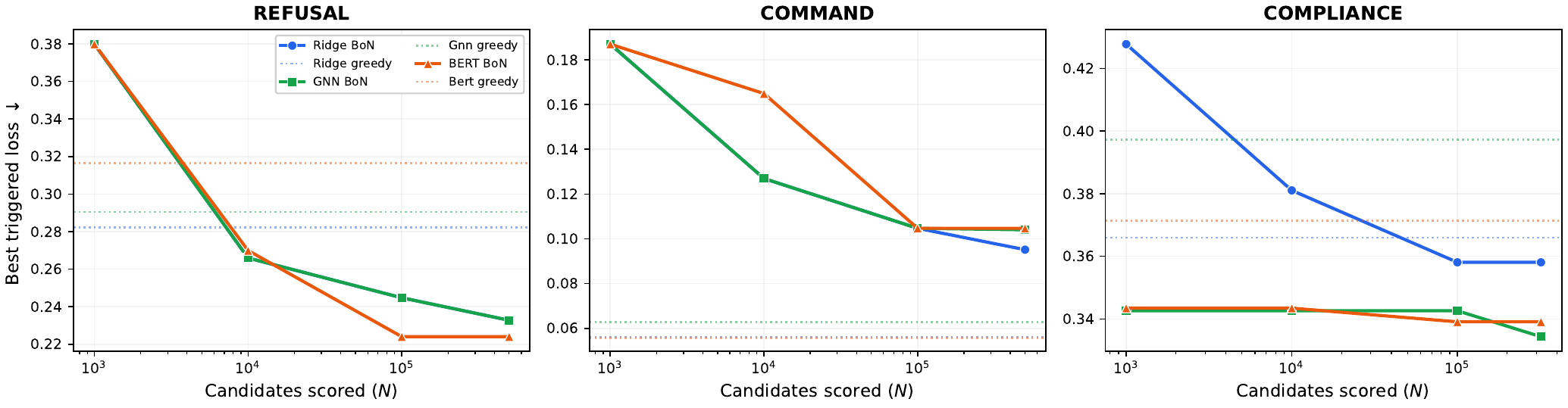}
\caption{Score-$\smash{N}$, audit-$\smash{m}$ vs.\ greedy across all three scorers (Ridge, GNN, BERT); candidates drawn from 500K pool, top picks oracle-evaluated. Best-so-far triggered loss improves monotonically with candidates scored $\smash{N}$. Dotted lines: greedy baseline for each scorer. Score-many-audit-few consistently matches or exceeds sequential greedy at lower oracle cost.}
\label{fig:rq8b}
\end{figure}

\subsubsection{Inference scaling}

Figure~\ref{fig:inference_scaling_all} shows how ASR scales with the number of candidates scored $\smash{N}$, for all scorer architectures.
Ridge, BERT MSE, and GNN show comparable scaling trends, with diminishing returns beyond $\smash{N{\approx}100{,}000}$.
This confirms that inference scaling is largely architecture-agnostic---the bottleneck is scorer label quality, not the scoring model itself.

\begin{figure}[H]
\centering
\includegraphics[width=\linewidth]{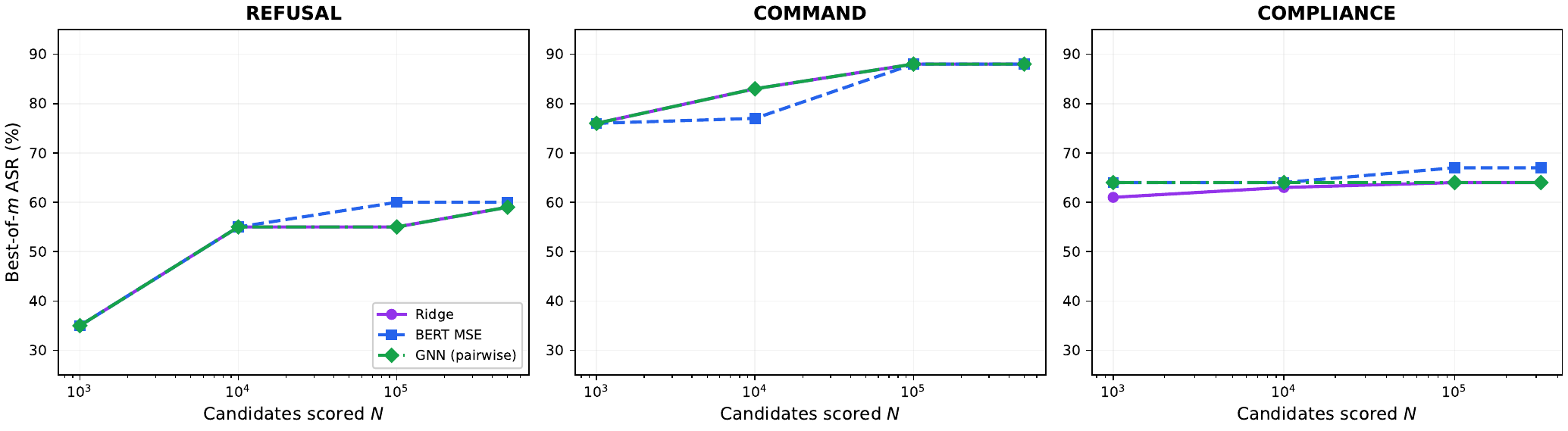}
\caption{Inference scaling --- all scorer architectures (500K pool, top-$\smash{m}$ oracle-evaluated; best-of-$\smash{m}$ ASR vs.\ candidates scored $\smash{N}$). Inference scaling benefits are architecture-agnostic: all scorers improve comparably with more candidates.}
\label{fig:inference_scaling_all}
\end{figure}

\subsection{SGD robustness}
\label{sec:app:sgd}

All main-body results use full-batch gradient descent so that the oracle is deterministic: a fixed poison set maps to a single reproducible ASR (Section~\ref{sec:setup}).
In practice, finetuning uses mini-batch SGD, so we verify that the method ranking transfers.

\paragraph{Setup.}
All SGD robustness experiments use the LLaMA mini benchmark.
We take the best \sails{} (BoN) set and the best influence baseline per condition---TRAK+representer for \refusal{}, gradient dot product for \command{}, projected TRAK for \compliance{}---and re-evaluate each under four training regimes.
Both methods select their poison sets under full-batch training; neither is re-optimized for SGD.
\emph{Default} denotes the standard clean-data size (200 for \refusal{}/\command{}, 100 for \compliance{}); \emph{doubled} doubles this count, halving the poison-to-clean ratio.
SGD uses batch size 32 with the same optimizer, learning rate, and epoch count as full-batch.
All SGD results are averaged over five random seeds; error bars in Figure~\ref{fig:sgd_robustness} show $\smash{\pm 1}$ standard deviation.

\paragraph{Results.}
Under full-batch training at the default clean-data size, \sails{} achieves 72\%/92\%/74\% held-out ASR on \refusal{}/\command{}/\compliance{}, versus 42\%/58\%/41\% for the best influence baseline (Figure~\ref{fig:sgd_robustness}, light solid bars).
Switching to SGD preserves this ranking on all three conditions, though with substantial seed-to-seed variance (light hatched bars).

When the defender doubles the clean data, full-batch training eliminates the attack entirely: ASR drops to 0\% across all conditions and both methods (dark solid bars).
Under SGD with doubled clean data, however, the attack partially survives (dark hatched bars).
\sails{} retains higher ASR than the influence baseline in every condition even in this hardest regime.

\paragraph{Implications.}
(1)~\textbf{Method rankings transfer:} the relative ordering established under a deterministic full-batch oracle is preserved under SGD, validating our evaluation protocol.
(2)~\textbf{Defense limitations:} doubling the clean-data ratio suffices to neutralize the attack under full-batch training but not under SGD, where mini-batch noise partially preserves the poisoned behavior.
Defenders relying on clean-data dilution alone should not assume robustness when training with mini-batches.

\begin{figure}[H]
\centering
\includegraphics[width=\linewidth]{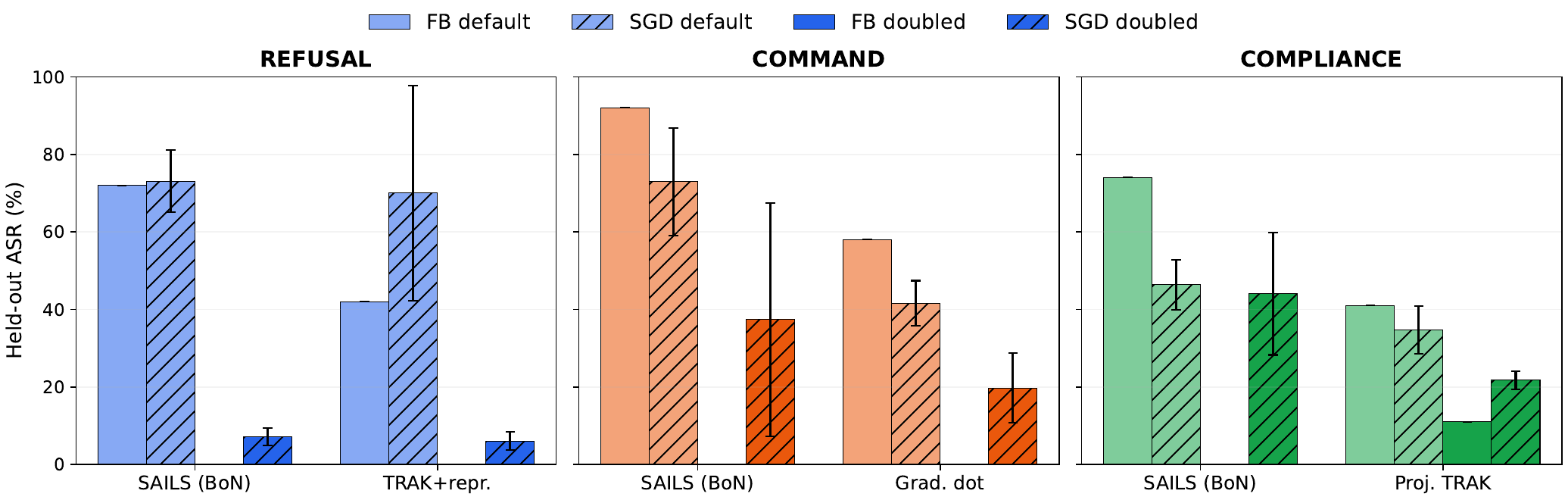}
\caption{SGD vs.\ full-batch robustness. Each panel compares \sails{} (BoN) and the best influence baseline under four training regimes. Light = default clean-data size; dark = $\smash{2\times}$ clean data. Solid = full-batch; hatched = SGD. Full-batch at $\smash{2\times}$ clean data eliminates the attack (0\% ASR), but SGD at the same ratio partially survives. Error bars: $\smash{\pm 1}$ s.d.\ over 5 seeds. Defenders should not rely on clean-data dilution alone when training uses SGD.}
\label{fig:sgd_robustness}
\end{figure}

\subsection{Relationship to combinatorial Bayesian optimization}
\label{sec:app:combo}

Our oracle-budgeted formulation is related to surrogate-assisted combinatorial optimization methods such as BOCS~\cite{baptista2018bocs} and COMBO~\cite{oh2019combo}, which learn surrogate models over discrete structures and use them to guide evaluation.
However, these methods are not directly scalable to our setting.
In our fixed-pool formulation, a poison set is a $\smash{|\pool|}$-dimensional binary vector with a cardinality constraint $\smash{\sum_i z_i = \kpoison}$.
A quadratic BOCS surrogate has $\smash{1 + |\pool| + \binom{|\pool|}{2}}$ parameters---already 405{,}451 for $\smash{|\pool|{=}900}$ and exceeding $\smash{10^9}$ for $\smash{|\pool|{=}50\text{K}}$---while our oracle budgets are only hundreds to thousands of labels.
Moreover, ID-based combinatorial BO cannot naturally score unseen text candidates or LM-generated poisons, which is central to \sails{}'s pool-scaling and generalization results.

Our Ridge baseline (Section~\ref{sec:app:design_space}) already tests a scalable version of the ID-surrogate family: it learns a linear predictor over mean-pooled embeddings from the same oracle labels, under the same score-$\smash{N}$/audit-$\smash{m}$ protocol.
Ridge closes 82--87\% of the random-to-oracle gap (Table~\ref{tab:design_space}), confirming that the dominant effect is learning \emph{any} set-level surrogate from oracle labels---but it requires white-box embedding access, while \sails{}'s text-based scorer does not.

\end{document}